\documentclass{article}

\usepackage{geometry}

\usepackage{graphicx}  
\usepackage{booktabs}
\usepackage{multirow}

\usepackage[accsupp]{axessibility}  
\usepackage[utf8]{inputenc} 
\usepackage[T1]{fontenc}    
\usepackage{hyperref}
\usepackage{url}
\usepackage{xstring}
\usepackage{amsfonts}
\usepackage{amsmath}
\usepackage{amssymb}
\usepackage{upgreek}
\usepackage{booktabs} 
\usepackage{amsthm}
\usepackage{mathtools}
\usepackage{adjustbox}

\usepackage{comment}

\mathtoolsset{showonlyrefs}
\usepackage{thmtools, thm-restate} 
\usepackage{bm}
\newcommand{\Prob}{\mathcal{P}}

\newcommand{\Id}{\operatorname{id}}
\newcommand{\ProbTwo}{\mathcal{P}_2}

\newcommand{\W}{\operatorname{W}}

\newcommand{\overboldarrow}[1]{\bm{\vbox{\offinterlineskip
  \ialign{##\cr
  \scalebox{1}[0.9]{$\overrightarrow{\hphantom{#1}}$}\cr
  $#1$\cr}}}}

\renewcommand{\d}{\,\mathrm{d}}

\newcommand{\pr}{\mathrm{proj}}
\newcommand{\bproj}{{\rm \bf{Proj}}}

\newcommand{\Perm}{{\operatorname{Perm}}}

\newcommand{\M}{{\operatorname{M}}}

\newcommand{\Rd}{\mathbb{R}^d}

\newcommand{\R}{\mathbb{R}}

\renewcommand{\d}{\, \mathrm{d}}

\newcommand{\bmu}{\boldsymbol{\mu}}

\newcommand{\bnu}{\boldsymbol{\nu}}

\newcommand{\bPi}{{\mathbf{\Pi}}}
\newcommand{\bSW}{{\operatorname{\mathbf{SW}}}}

\newcommand{\bW}{{\operatorname{\mathbf{W}}}}

\newcommand{\SW}{\operatorname{SW}}

\newcommand{\supp}{\operatorname{supp}}

\newcommand{\proj}{\pr}  

\newcommand{\mP}{\mathcal{P}}

\usepackage{amsthm}
\usepackage{graphicx}    
\usepackage{subcaption}  
\DeclareMathOperator*{\argmin}{arg\,min}

\newcounter{dummy} 
\numberwithin{dummy}{section}

\newtheorem{theorem}[dummy]{Theorem}

\newtheorem{definition}[dummy]{Definition}

\newtheorem{corollary}[dummy]{Corollary}

\newcounter{proproman}

\usepackage{hyperref}

\usepackage{orcidlink}

\title{Generalized Wasserstein Flow Matching: \\ Transport Plans, Everywhere, All at Once }

\author{
Moritz Piening \and Richard Duong \and Gabriele Steidl
}

\begin{document}

\maketitle
\begingroup
\renewcommand{\thefootnote}{}
\footnotetext{\hspace{-4mm}
Institut für Mathematik, Technische Universität Berlin, Germany,
\texttt{\{piening,duong,steidl\}@math.tu-berlin.de}
}
\endgroup

\begin{abstract}
\noindent Flow matching has recently emerged as a flexible and efficient framework for generative modelling by learning deterministic transport dynamics between probability measures. In this work, we extend flow matching to the space of probability measures over probability measures, introducing a {Wasserstein-on-Wasserstein} (WoW) formulation. Leveraging the nested Wasserstein geometry, we show that measures over transport plans naturally induce velocity fields that realize metameasure flows. This yields a principled generalization of Wasserstein flow matching via coupled {outer} and {inner} transport plans. To address the substantial computational cost of WoW transport, we propose scalable approximations based on sliced and linear Wasserstein distances, enabling efficient training while promoting numerically stable, near-straight trajectories. Our framework unifies and extends existing approaches to point cloud and set generation, providing a practical and theoretically grounded method for generative modelling in WoW spaces.
\end{abstract}

\section{Introduction}

Among the wide range of generative modelling approaches, {flow matching} (FM) has emerged as a particularly flexible and conceptually simple framework \cite{albergo2023building_fm,lipman2023flowmatching_fm,liuflow_fm_recti}. 
Given a fixed target probability measure, FM learns an implicitly defined time-dependent vector field that transports samples from a simple source distribution (e.g., a standard Gaussian) to the target. 
In contrast to stochastic generative models such as diffusion models \cite{ho2020denoising,yang2023diffusion_diffusion_review}, flow matching constructs a {deterministic} continuous normalizing flow. 
This is achieved by regressing the vector field onto a conditional probability flow induced by a prescribed coupling between source and target measures. 
Instead of costly stochastic simulations, this results in solving a deterministic ordinary differential equation at sampling time.

While originally introduced in Euclidean spaces, the flow matching framework extends naturally to more general geometric settings. 
Notably, {Riemannian flow matching} replaces linear interpolations with geodesic paths, enabling generative modelling on manifold-valued data \cite{chen2024flow_riemannian_flow_matching}. 
Beyond the Riemannian setting, recent developments have further generalized the framework to function spaces \cite{kerrigan2024functional_fm,holderriethgeneratormatching}, discrete spaces \cite{gat2024discrete,yimingdefog_discrete_fm} or quotient spaces \cite{klein2023equivariant_flow_matching,ruscelli2026flow}. In this work, we are particularly interested in the case of \emph{Wasserstein} flow matching on the space of probability measures \cite{atanackovic2025meta,haviv2025wasserstein}. Equipping this space with the Wasserstein metric results in the so-called Wasserstein manifold. 
Importantly, this terminology is somewhat misleading, as the space fails to satisfy the exact manifold definition due to the presence of singularities. Particularly, the subset of fixed-size empirical measures is rather a Euclidean quotient space 
containing
permutation-based equivalence classes \cite{boumal2023}.

From a theoretical perspective, flow matching can be interpreted as learning {curves of probability measures} connecting a tractable source measure to a target distribution given by samples \cite{lipman2023flowmatching_fm,wald2025flow}. 
This viewpoint naturally places flow matching within the framework of optimal transport (OT) and Wasserstein geometry, where probability measures are treated as points in the metric Wasserstein space. In this context, our learned transport field corresponds to an interpolation between measures in the Wasserstein space. 
Since the chosen coupling (or transport plan) between the source and target can be arbitrarily chosen, 
this theoretical background further provides a natural design choice. 
Since straighter velocity fields improve numerical sampling performance \cite{liuflow_fm_recti}, it is possible to leverage the dynamic OT formulation to learn straight OT velocity fields \cite{chemseddine2025conditional,mousavi2025flow_semidiscre,tong2024improving_ot_flowmatching}.

In this work, we build on this perspective and extend flow matching beyond classical probability measures to {metameasures}, i.e., probability measures over probability measures. 
Using the Wasserstein structure of the base space, we equip the space of metameasures with a second-order Wasserstein geometry and study flow-based interpolations in this {Wasserstein-on-Wasserstein} (WoW) space. 
This provides a natural bridge between recent work on the theory of the WoW space \cite{beiglbock2025brenier_wow_theory,bonet2025flowing,emami2025optimal_wow_theory,huesmann2025benamou_wow_theory,pinzi2025nested_wow},
computational methods for hierarchical OT \cite{bonet2025busemann,nguyen2025sotdd, piening2025slicing_gaussian,piening2025slicing_wow}
and
generative methods on the Wasserstein manifolds \cite{atanackovic2025meta,haviv2025wasserstein,jiang2025bureswasserstein_flow_matching,tang2026generative_wowtype_humans}. 
Notably, once we employ the natural discretization of the Wasserstein space via empirical measures, these methods are closely related to general point cloud generative models for unordered sets \cite{li2025generative_set_generation,ludke2025unlocking_set_generation} and point clouds \cite{luo2021diffusion,yang2019pointflow_ot_nn}. Similarly to the use of OT couplings in Euclidean flow matching, we propose to leverage the tools of computational OT to balance efficient training with numerically stable velocity fields.

\vspace{0.5mm} Summarizing our \textbf{contributions},
\begin{itemize}
    \item we derive a generalized Wasserstein flow matching model by coupling `outer' transport plans from the WoW space with `inner' plans from the Euclidean Wasserstein space. 
    \item we identify the non-local velocity fields driving the WoW curves as minimizers of a certain loss function, see Theorem \ref{prop:wow_flow_matching}.
    \item we design transport couplings that balance straightness and efficiency using sliced and linear Wasserstein approximations, beyond costly WoW couplings, and demonstrate their advantages in experiments. 
\end{itemize}
All novel proofs are given in Appendix \ref{sec:theory_appendix_supp}.

\section{Curves between Probability Measures}
In flow matching generative models, parametrized measure curves are constructed that connect an easy-to-sample source measure with a target measure which is available only through samples \cite{lipman2023flowmatching_fm,wald2025flow}. Notably, this description is not necessarily limited to Euclidean probability measures \cite{chen2024flow_riemannian_flow_matching}. After a short recap on the Wasserstein geometry \cite{AmbrosioGigliSavare2005,Santambrogio2015,wald2025flow}, we will indeed see how we can lift this construction to stochastic measures, i.e., metameasures.

\subsection{Wasserstein Geometry}
Let $(X,d)$ be a Polish metric space and $\mP_2(X)$ the set of Borel probability measures on $X$ with finite second moments, i.e.
$
\int_X d(x,x_0)^2 \,\mathrm d\mu(x) < \infty
$
for some $x_0 \in X$.
The set $\mP_2(X)$ becomes a complete metric space with the \emph{Wasserstein distance} $\W_2$ which is given for any $\mu,\nu \in \mP_2(X)$ by
\begin{equation}
\label{eq:wasserstein}
\W_2^2(\mu,\nu)
\coloneqq
\min_{\pi \in \text{c}(\mu,\nu)}
\int_{X \times X} d(x,x')^2 \, \mathrm d\pi(x,x').
\end{equation}
The \emph{couplings} or \emph{transport plans} are given by 
\begin{equation}
  \text{c}(\mu,\nu) = \{ \pi \in \mP_2(X \times X): \text{proj}^0_\sharp \pi = \mu, ~ \text{proj}^1_\sharp \pi = \nu\}  
\end{equation}
with $ \text{proj}^i: X \times X \to X$, $(x_0,x_1) \mapsto x_i$. 
By $\text{c}^{{\rm opt}}(\mu,\nu)$
we denote \emph{optimal} couplings, where the minimum in \eqref{eq:wasserstein} is attained.
For $I=[0,1]$, let $C_I(X)$ denote the set of continuous curves mapping from $I$ to
$X$, and $AC_I(X)$ the set of absolutely continuous ones. 

We are interested in two examples for $(X,d)$, namely the Euclidean setting
$(\R^d,\| \cdot \|)$,  and the so called
\emph{Wasserstein-on-Wasserstein} (WoW) case
$(\ProbTwo(\R^d), W_2)$. For visual distinction, we use capital/bold letters in the latter case $X = \ProbTwo(\R^d)$,
i.e. for any $\bmu,\bnu \in \mP_2(\mP_2(\R^d))$ we define
\begin{equation}     \label{eq:wow_what_is}
    \bW_2^2(\bmu, \bnu)
    \coloneqq 
    \min_{\Pi \in \text{C} (\bmu, \bnu)}\int_{\ProbTwo(\R^d) \times \ProbTwo(\R^d)} \W_2^2(\mu, \nu) \d \Pi(\mu, \nu),
\end{equation}
where the \emph{outer plans} are given by
\begin{equation}
\text{C}(\bmu,\bnu) \coloneqq \{ \Pi \in \mP_2(\mP_2(\R^d) \times \mP_2(\R^d)): \text{Proj}^0_\sharp \Pi = \bmu, \allowbreak ~ \text{Proj}^1_\sharp \Pi = \bnu\}.
\end{equation}
In contrast, we address elements $\pi \in \text{c}(\mu,\nu)$ in the case $X=\R^d$ as \emph{inner plans}. 
By $\text{C}^{{\rm opt}}(\bmu,\bnu)$
we again denote the couplings which minimize \eqref{eq:wow_what_is}.

\subsection{Wasserstein-on-Euclidean}
In the following, we briefly recall the basic idea of flow matching in the Euclidean case, see, e.g. \cite{lipman2023flow,wald2025flow} 
and  generalize it to the WoW setting in the next subsection.

\begin{theorem}
Let $(\mu_t)_{t \in I} \in C_I(\mP(\R^d))$ 
and $v: I \times \R^d \to \R^d$ be a 
sufficiently smooth\footnote{For concrete regularity conditions, see \cite[Proposition 8.1.8]{AmbrosioGigliSavare2005}}
Borel vector field 
satisfying $\int_I \int_{\R^d} \|v \| \d \mu_t(x) \d t < \infty$. 
Assume that
$(\mu_t,v_t)$ solves the continuity equation
\begin{equation}\label{ce1}
\partial_t \mu_t+ {\rm{div}} (v_t\mu_t)=0
\end{equation}
in the sense of distributions. 
Then, for $\mu_0$-a.e.\ $x \in \R^d$ the ODE
\begin{align} \label{flow-ode}
\dot \gamma(t,x) &= v(t, \gamma(t,x)), \quad t \in (0,1), \\ \gamma(0,x) &= x,
\end{align}
admits a unique solution $\gamma(\cdot, x)$ on $I$, and with $\gamma_t : \R^d \to \R^d, x \mapsto \gamma(t, x)$, it holds 
$\mu_t = \gamma_{t,\sharp}  \mu_0$.
\end{theorem}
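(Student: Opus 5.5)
The plan is to follow the classical argument of \cite[Proposition 8.1.8]{AmbrosioGigliSavare2005}. The footnoted regularity is used in the form that $v$ is locally Lipschitz in space, uniformly on compact sets, with
\begin{equation}
\int_I \Bigl(\sup_{B}\|v_t\| + \mathrm{Lip}(v_t,B)\Bigr)\d t<\infty
\end{equation}
for every compact $B\subset\R^d$. The argument has two parts: a deterministic ODE part giving a flow, and a uniqueness part for the continuity equation showing that the pushed-forward curve coincides with $(\mu_t)_t$.

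\textbf{Step 1 (local flow).} By Carathéodory/Cauchy--Lipschitz, for every $x$ there is a unique maximal solution $\gamma(\cdot,x)$ of \eqref{flow-ode} on some interval $[0,\tau(x))$. If $\tau(x)\le 1$, then $\|\gamma(t,x)\|\to\infty$ as $t\uparrow\tau(x)$. The map $(t,x)\mapsto\gamma(t,x)$ is locally Lipschitz on its open domain of definition, and uniqueness is inherited from the local Lipschitz bound.

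\textbf{Step 2 (no blow-up $\mu_0$-a.e.).} This is the main obstacle. For $R>0$, stop trajectories upon exiting the ball $B_R$. Using the integrability assumption $\int_I\int\|v_t\|\d\mu_t\d t<\infty$, I will show that the set of $x$ whose trajectory leaves $B_R$ before time $1$ has $\mu_0$-measure tending to $0$ as $R\to\infty$.

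Concretely, let $\tilde\mu_t$ be the push-forward of $\mu_0$ restricted to $\{\tau>t\}$ under $\gamma_t$. Test the continuity equation with cut-offs $\varphi$ supported in $B_R$, together with the continuity-equation uniqueness from Step 3 applied to the localized problem, to obtain $\tilde\mu_t\le\mu_t$. Then estimate the mass escaping through $\partial B_{R}$ by choosing $\varphi_R(x)=\varphi(x/R)$:
\begin{equation}
\Bigl|\frac{d}{dt}\int\varphi_R\d\mu_t\Bigr|\le \frac{C}{R}\int\|v_t\|\d\mu_t .
\end{equation}
Combined with $\tilde\mu_t\le\mu_t$, this shows that escaping mass vanishes as $R\to\infty$. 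Hence $\tau(x)>1$ for $\mu_0$-a.e.\ $x$.

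\textbf{Step 3 (uniqueness and identification).} Set $\nu_t:=\gamma_{t,\sharp}\mu_0$. Differentiating $\int\varphi(\gamma(t,x))\d\mu_0(x)$ via the chain rule and \eqref{flow-ode} shows that $(\nu_t,v_t)$ also solves \eqref{ce1}, with $\nu_0=\mu_0$. To conclude $\nu_t=\mu_t$, I use a duality argument. For a test function $\psi\in C_c^\infty$ and terminal time $s$, solve the backward transport equation $\partial_t\varphi+\langle v_t,\nabla\varphi\rangle=0$ with $\varphi_s=\psi$ by characteristics, $\varphi_t=\psi\circ\gamma_s\circ\gamma_t^{-1}$. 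This $\varphi$ is Lipschitz and compactly supported locally, using mollification of $v$ if necessary. Plugging it into the weak formulation for the difference $\mu_t-\nu_t$ gives $\int\psi\d(\mu_s-\nu_s)=0$. The regularity (local Lipschitz) is exactly what makes this test function admissible. Hence $\mu_t=\gamma_{t,\sharp}\mu_0$ for all $t\in I$, which completes the proof.
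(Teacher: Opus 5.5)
The paper gives no proof of this classical statement; it only cites \cite[Proposition 8.1.8]{AmbrosioGigliSavare2005}. Your outline has the same architecture as that proof: maximal characteristics, exclusion of blow-up via $\tilde\mu_t\le\mu_t$ plus integrability of $\|v\|$, and identification by duality. However, the two steps that carry the real content under merely local Lipschitz bounds do not work as written.

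In Step 2 the cut-off estimate you display is for $\mu_t$. That only gives uniform tightness of $(\mu_t)_t$, and tightness together with $\tilde\mu_t\le\mu_t$ does not control the lost mass, because an exploding trajectory may spend arbitrarily little time outside $B_R$. The estimate has to be applied to $\tilde\mu_t$ instead. A trajectory with $\tau(x)\le 1$ leaves every compact set before $\tau(x)$, so $t\mapsto \mathbf{1}_{\{\tau(x)>t\}}\varphi_R(\gamma(t,x))$ is absolutely continuous. Hence $\tilde\mu_t$ solves the continuity equation against $C^1_c$ test functions, and
\begin{align*}
\tilde\mu_t(\R^d)\ \ge\ \int\varphi_R\d\tilde\mu_t
&\ge \int\varphi_R\d\mu_0-\frac{C}{R}\int_0^t\!\!\int\|v_s\|\d\tilde\mu_s\d s\\
&\ge \int\varphi_R\d\mu_0-\frac{C}{R}\int_0^1\!\!\int\|v_s\|\d\mu_s\d s .
\end{align*}
The comparison is used exactly in the last inequality, and letting $R\to\infty$ gives $\mu_0(\{\tau>t\})=1$. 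Also, what produces $\tilde\mu_t\le\mu_t$ is a sign-preservation (comparison) principle for a subsolution, namely the measure killed at the exit from $B_{R'}$, not uniqueness. Duality does provide it, since $\psi\ge 0$ yields nonnegative test functions.

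In Step 3, $\varphi_t=\psi\circ\gamma_s\circ\gamma_t^{-1}$ built from the flow of $v$ need not be an admissible test function. Backward characteristics from $\supp\psi$ can enter from infinity in finite time: for $v(x)=x^2$ on $\R$ one has $\gamma_s(\{\tau>s\})=(-1/s,\infty)$. So $\gamma_t^{-1}$ is not globally defined and $\supp\varphi_t$ is not bounded uniformly in $t$; mollifying $v$ does not change this. The standard fix has three parts:
\begin{itemize}
\item Use the flow of a bounded, globally Lipschitz truncation $v^R$ with $v^R=v$ on $B_{2R}$.
\item Test with $\eta_R\varphi^R_t$, where $\eta_R$ is a cut-off supported in $B_{2R}$ with $\|\nabla\eta_R\|_\infty\le 2/R$.
\item Observe that the only surviving error term is bounded by $\tfrac{2\|\psi\|_\infty}{R}\int_0^1\!\int\|v_t\|\d|\mu_t-\nu_t|\d t$.
\end{itemize}
This is where the hypothesis $\int_I\int\|v\|\d\mu_t\d t<\infty$ enters uniqueness. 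You also need the same bound for $\nu_t$, which Step 2 supplies. The same construction is what justifies the comparison you invoke in Step 2.

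Finally, once Step 2 yields $\tau>1$ for $\mu_0$-a.e.\ $x$, you have $\nu_t=\tilde\mu_t\le\mu_t$ with both probability measures. They are therefore equal, so Step 3 is in fact redundant.
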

Once such a velocity field $v: I \times \R^d \to \R^d$ is known, we can use any ODE solver to transport samples from $\mu_0$ to $\mu_1$ via the relation $\mu_t = \gamma_{t,\sharp}  \mu_0$.
In flow matching, this velocity field is learned from samples of $\mu_1$ using a neural network.
A simple construction is based on the following observation.

\begin{theorem}\label{vr}
For any coupling $\pi \in \text{c}(\mu,\nu)$,  the curve $(\mu_t)_{t \in I}$ defined by
\begin{equation}
    \mu_t \coloneqq \pr^t_\sharp \pi, \allowbreak \quad \pr^t(x, x') \coloneqq (1-t) x + tx',
\end{equation}
fulfills the continuity equation \eqref{ce1} 
together with a vector field $v_t^\pi$ which minimizes 
\begin{equation} \label{minvr}
J(v) \coloneqq 
\int_I \int_{\R^d \times \R^d} \| v_t( \pr^t(x,x')) - (x'-x) \|^2 \d \pi(x,x') \d t.
\end{equation} 
If $\pi \in {\rm c}^{{\rm opt}}(\mu,\nu)$, then it holds 
\begin{equation}
    \int_I \int_{\R^d} \|v_t^\pi (x)\|^2 \d \mu_t(x) \d t = W_2^2(\mu,\nu).
\end{equation}
\end{theorem}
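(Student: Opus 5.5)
The plan is to take the minimizer of $J$ to be the conditional expectation (disintegration) of the displacement $x'-x$ given the interpolated point. Then I check the continuity equation weakly and compare energies via Jensen.

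\textbf{Step 1: the minimizer.} For each $t\in I$, the measure $\mu_t=\pr^t_\sharp\pi$ is the law of $\pr^t$ under $\pi$. I disintegrate $\pi$ with respect to $\pr^t$, writing $\pi=\int \pi_{t,z}\d\mu_t(z)$, where $\pi_{t,z}$ is concentrated on $\{(x,x'):\pr^t(x,x')=z\}$. I then define
\[
v_t^\pi(z)\coloneqq\int (x'-x)\d\pi_{t,z}(x,x').
\]
This is Borel measurable in $(t,z)$ once one uses a jointly measurable disintegration, e.g.\ disintegrating $\d t\otimes\pi$ along $(t,\pr^t)$. Since $\pi$ has finite second moments, $v^\pi_t\in L^2(\mu_t)$.

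For any admissible $v$, expanding the square gives the Pythagorean identity
\[
J(v)=J(v^\pi)+\int_I\int\|v_t-v^\pi_t\|^2\d\mu_t\d t .
\]
The cross term vanishes because $v^\pi_t$ is the conditional mean, i.e.\ the $L^2(\pi)$-projection onto functions of $\pr^t$. Hence $v^\pi$ minimizes $J$, uniquely $\d t\otimes\mu_t$-a.e.

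\textbf{Step 2: continuity equation.} Take $\varphi\in C_c^\infty((0,1)\times\R^d)$. Using $\frac{\d}{\d t}\pr^t(x,x')=x'-x$ and Fubini,
\[
\int_I\int\partial_t\varphi\d\mu_t\d t=\int\int_I \Big(\tfrac{\d}{\d t}\varphi(t,\pr^t)-\nabla\varphi(t,\pr^t)\cdot(x'-x)\Big)\d t\d\pi .
\]
The first term integrates to zero by compact support in $t$. In the second term I condition on $\pr^t$ via the disintegration to replace $x'-x$ by $v^\pi_t(\pr^t)$. This gives
\[
\int_I\int(\partial_t\varphi+\nabla\varphi\cdot v^\pi_t)\d\mu_t\d t=0,
\]
which is \eqref{ce1} in the distributional sense. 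The integrability $\int_I\int\|v^\pi_t\|\d\mu_t\d t\le\int\|x'-x\|\d\pi<\infty$ holds by Jensen. I expect this conditioning step, together with the measurability of the disintegration jointly in $t$, to be the main technical point, but it is standard.

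\textbf{Step 3: energy identity for optimal $\pi$.} Jensen applied to the conditional expectation gives
\[
\int_I\int\|v^\pi_t\|^2\d\mu_t\d t\le\int_I\int\|x'-x\|^2\d\pi\d t=W_2^2(\mu,\nu).
\]
For the reverse inequality, $(\mu_t,v^\pi_t)$ solves the continuity equation from $\mu$ to $\nu$. By the Benamou--Brenier formula, or equivalently the bound $W_2^2(\mu_0,\mu_1)\le\int_I\|v_t\|^2_{L^2(\mu_t)}\d t$ for any solution of the continuity equation \cite[Thm.~8.3.1]{AmbrosioGigliSavare2005}, we get $W_2^2(\mu,\nu)\le\int_I\int\|v^\pi_t\|^2\d\mu_t\d t$. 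Combining the two inequalities yields equality.

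An alternative route to the same equality: for optimal $\pi$, the map $\pr^t$ is injective on $\supp\pi$ for $t\in(0,1)$ by cyclical monotonicity. The disintegration is then a Dirac mass, so Jensen is an equality directly.
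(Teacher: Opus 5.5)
Your proposal is correct and follows essentially the paper's route. The paper quotes this Euclidean statement from the literature without proof, but its proof of the WoW analogue, Theorem \ref{prop:wow_flow_matching}, runs through your steps: the conditional mean of $x'-x$ via disintegration along $\pr^t$, the decomposition of $J$ into $\int\|\tilde v - v^\pi\|^2$ plus a constant, the weak continuity equation obtained by differentiating along $t \mapsto \pr^t(x,x')$ and conditioning, and Jensen plus the Benamou--Brenier lower bound for the energy identity. Your alternative route, injectivity of $\pr^t$ on $\supp \pi$ for optimal $\pi$ and $t\in(0,1)$, is also valid; it makes Jensen an equality directly and avoids the dynamic formulation.
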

Therefore, a neural network
$v^\theta$ is trained to minimize the empirical loss $\mathcal J(\theta) \coloneqq J(v^\theta)$.

\subsection{Wasserstein-on-Wasserstein}
Our goal is to extend the flow matching framework to the \emph{WoW space} $\ProbTwo(\ProbTwo(\R^d))$, the space of \emph{metameasures}.
This space can be equipped with a meaningful geometry via \eqref{eq:wow_what_is}.
The following nested superposition result was proved in \cite{pinzi2025nested_wow}.

\begin{theorem}\label{thm:wor}
Let $(M_t)_{t \in I} \in C_I(\mathcal P(\mathcal P(\R^d))$ and $V: I \times \R^d \times \mP(\R^d) \to \R^d$ be a sufficiently smooth\footnote{For details, see \cite[Theorem 6.2]{pinzi2025nested_wow}} Borel non-local vector field satisfying $\int_I \int_{\mP(\R^d)} \int_{\R^d} \|V(t,x,\mu)\|  \d \mu(x) \d M_t(\mu) \d t < \infty$. Assume that $(M_t, V_t)$ satisfies the continuity equation
\begin{equation}\label{eq:wow_continuity}
\partial_t \M_t + \operatorname{div}_{\Prob}(V_t M_t) = 0 
\end{equation}
in the sense of duality with 
cylindric functions, see Definition \ref{def1} in Appendix \ref{sec:theory_appendix_supp}.
Then, for $M_0$-a.e.\ $\mu_0 \in \mathcal P(\R^d)$ 
the collection of ODEs, indexed by $x \in {\rm supp} \, \mu_0$,
\begin{align}\label{nonlocal-flow-ode}
\dot \gamma(t,x,\mu_0) &= V(t, \gamma(t,x,\mu_0), \gamma_{t, \mu_0, \sharp}  \mu_0), \quad t \in (0,1), \\
\gamma(0,x,\mu_0) &= x,
\end{align}
admits a unique collection of solutions $\gamma(\cdot, x, \mu_0)$ on $I$, where we set $\gamma_{t, \mu_0} : {\rm supp} \, \mu_0 \to \R^d$, $x \mapsto \gamma(t, x, \mu_0)$. It holds that for $M_0$-a.e.\ $\mu_0$, the curve
$\mu_t \coloneqq \gamma_{t, \mu_0, \sharp}  \mu_0$
solves 
\begin{equation}
    \partial_t \mu_t+ {\rm{div}} (V_t(\cdot, \mu_t) \mu_t) = 0
\end{equation}
in the sense of distributions. 
Furthermore, with $\Gamma_t : \mathcal P(\R^d) \to \mathcal P(\R^d),\allowbreak ~ \mu \mapsto \gamma_{t, \mu, \sharp}  \mu,$ we have $M_t = \Gamma_{t,\sharp} M_0$.
\end{theorem}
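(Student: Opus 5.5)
The plan is to reduce the nested statement to two applications of the Euclidean superposition principle from the first theorem above: once inside each measure and once at the level of the lifted measure on trajectories. The argument has three steps.

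\textbf{Step 1: disintegrate.} Let $M_t$ be a solution of \eqref{eq:wow_continuity}. Testing against cylindric functions $F(\mu)=\psi(\int\phi_1\d\mu,\dots,\int\phi_k\d\mu)$ gives
\[
\frac{\mathrm d}{\mathrm dt}\int F\d M_t=\int\int \nabla_W F(\mu)(x)\cdot V(t,x,\mu)\d\mu(x)\d M_t(\mu).
\]
The first goal is to show that for $M_0$-a.e.\ $\mu_0$ there is a curve $t\mapsto\mu_t$ solving the nonlocal continuity equation, and that $M_t$ is the law of $\mu_t$. The tool is a superposition principle on the Polish space $\mP(\R^d)$: view $(M_t)$ as a curve in $\mP(\mP(\R^d))$ and embed $\mP(\R^d)$ into $\R^{\mathbb N}$ via $\mu\mapsto(\int\phi_n\d\mu)_n$, where $(\phi_n)$ is a countable dense family. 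On $\R^{\mathbb N}$ the curve solves a continuity equation driven by the drift $b_n(t,\mu)=\int\nabla\phi_n\cdot V(t,\cdot,\mu)\d\mu$. The Ambrosio--Trevisan infinite-dimensional superposition principle then gives a probability measure $\Lambda$ on $C_I(\mP(\R^d))$ with $(e_t)_\sharp\Lambda=M_t$, concentrated on curves solving $\dot\mu_t=b(t,\mu_t)$ weakly. The integrability assumption on $V$ is exactly what makes $\int_I\int|b|\d M_t\d t<\infty$ for Lipschitz $\phi_n$. Since $(\phi_n)$ is dense, each such curve solves $\partial_t\mu_t+\operatorname{div}(V_t(\cdot,\mu_t)\mu_t)=0$ distributionally.

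\textbf{Step 2: freeze the measure and get the inner flow.} Fix such a curve $(\mu_t)$ and set $v_t(x)\coloneqq V(t,x,\mu_t)$. This is now a local field, so the first theorem above applies: under the assumed regularity, $\gamma(\cdot,x,\mu_0)$ exists uniquely for $\mu_0$-a.e.\ $x$ and $\mu_t=\gamma_{t,\mu_0,\sharp}\mu_0$. Hence $\gamma$ solves \eqref{nonlocal-flow-ode}.

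\textbf{Step 3: uniqueness, hence the pushforward identity.} Uniqueness of the collection of ODEs needs its own argument, because the ODE is coupled through $\gamma_{t,\mu_0,\sharp}\mu_0$. Assume Lipschitz regularity of $V$ in $(x,\mu)$, with $W_1$ used for the measure argument. Then for two solutions a Gronwall estimate on $\int|\gamma^1-\gamma^2|\d\mu_0$ closes, since $W_1(\gamma^1_\sharp\mu_0,\gamma^2_\sharp\mu_0)\le\int|\gamma^1-\gamma^2|\d\mu_0$. This is the standard Dobrushin-type argument for the mean-field ODE. With uniqueness, $\Lambda$-a.e.\ curve is determined by its initial point, so $\Lambda=(\Gamma_\cdot)_\sharp M_0$ and therefore $M_t=(e_t)_\sharp\Lambda=\Gamma_{t,\sharp}M_0$. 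Measurability of $\mu\mapsto\Gamma_t(\mu)$ follows from continuous dependence, again by Gronwall.

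\textbf{Main obstacle.} The hardest part is Step 1: checking that the cylindric-function duality of Definition \ref{def1} transfers exactly to the $\R^{\mathbb N}$ continuity equation, and that the superposition measure charges only genuine solutions in $\mP(\R^d)$ rather than limit points in the embedding. I would handle this with tightness, using the moment bounds together with closedness of the image of the embedding.
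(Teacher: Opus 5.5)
The paper does not prove this statement: it quotes it from \cite{pinzi2025nested_wow} and defers the regularity hypothesis to their Theorem~6.2, so there is no in-paper argument to compare with. Judged on its own, your three-step plan is sound. It runs an outer Eulerian superposition for $(M_t)$ via the Ambrosio--Trevisan principle in $\R^{\mathbb N}$, applies the Euclidean representation to the frozen field $v_t=V(t,\cdot,\mu_t)$, and closes with Dobrushin-type uniqueness for the mean-field ODE. Uniqueness makes the lift deterministic, so you get $\Lambda=(\Gamma_\cdot)_\sharp M_0$ without any measurable selection of inner liftings. Half of what you call the main obstacle is in fact immediate. If you take the $\phi_i$ in Definition~\ref{def1} from your countable family, you get exactly the cylindrical continuity equation on $\R^{\mathbb N}$ with drift $b_n$. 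Also, under a genuine Lipschitz hypothesis the fixed-point argument already gives existence of $\gamma(\cdot,x,\mu_0)$ directly. The real content of Steps 1--2 is therefore the Eulerian identification $M_t=\Gamma_{t,\sharp}M_0$.

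Two points need correcting.

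\textbf{First, your repair for the other half of the obstacle fails as stated.} With $\phi_n\in C^1_c(\R^d)$, the image of $\iota\colon\mu\mapsto(\int\phi_n\d\mu)_n$ is not closed in $\R^{\mathbb N}$. For example, $\iota(\delta_{x_k})\to 0$ as $|x_k|\to\infty$, and the closure is the image of all sub-probability measures. Moreover, the statement assumes no moment bounds on $M_t$, only the $L^1$ bound on $V$, and that bound is what closes the step:
\begin{itemize}
\item By Fubini, $\Lambda$-a.e.\ curve $y$ has $y(t)\in\iota(\mathcal P(\R^d))$ for a.e.\ $t$ and $\int_I\int\|V(r,x,\mu_r)\|\d\mu_r(x)\d r<\infty$.
\item At the remaining times, $y(t)$ represents a vague limit $\mu_t$ of total mass at most one.
\item The identity $\int\phi\d\mu_t-\int\phi\d\mu_s=\int_s^t\int\nabla\phi\cdot V(r,\cdot,\mu_r)\d\mu_r\d r$ extends from the $\phi_n$ to all $\phi\in C^1_c$ by density and dominated convergence.
\item Testing with cut-offs $\phi_R=\chi(\cdot/R)$ gives $|\int\phi_R\d\mu_t-\int\phi_R\d\mu_s|\le R^{-1}\|\nabla\chi\|_\infty\int_I\int\|V\|\d\mu_r\d r\to 0$.
\end{itemize}
Hence no mass escapes, and the curve is narrowly continuous in $\mathcal P(\R^d)$, as the Euclidean theorem in Step~2 requires.

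\textbf{Second, the regularity class must be fixed consistently.} Your Gronwall step needs $\int\|\gamma^1_t-\gamma^2_t\|\d\mu_0<\infty$. This is not automatic for $M_0$-a.e.\ $\mu_0$ when $M_0\in\mathcal P(\mathcal P(\R^d))$ carries no moments and $V$ only has linear growth. Either take $V$ bounded, in which case the integral is at most $2t\|V\|_\infty$, or assume $M_0(\mathcal P_1(\R^d))=1$ and include the integrability of $V$ along trajectories in the uniqueness class. The same hypothesis must also give the frozen field in Step~2 the local boundedness and local Lipschitz bounds in $x$, integrable in $t$, that the Euclidean theorem needs. With a global Lipschitz bound in $x$ you also get the ODE for every $x\in\operatorname{supp}\mu_0$, as the statement asserts, not merely for $\mu_0$-a.e.\ $x$.
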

Once such non-local vector field $V: I \times \R^d \times \mP(\R^d) \to \R^d$ is known, we can  transfer a sample $\mu_0$ from $\bmu = M_0$ to one of $\bnu = M_1$ by solving the above system of ODEs \eqref{nonlocal-flow-ode} starting in a collection of samples $x \sim \mu_0$. Importantly, during integration, each single curve $\gamma(t_n, x, \mu_0)$ at time step $t_n$ depends (non-locally) on the collective information of all curves $\gamma_{t_{n-1}, \mu_0, \sharp}  \mu_0$ at the prior time step. 

In order to formulate a result on how to approximate the non-local vector field similar to Theorem \ref{vr}, we fix $\bmu, \bnu \in \ProbTwo(\ProbTwo(\R^d))$ and introduce the set of \emph{random couplings} by 
\begin{equation}
{\rm RC}(\bmu, \bnu) \coloneqq 
\{
\bPi \in \mP(\mP(\R^d \times \R^d)):
{\rm \bf{Proj}}^0_\sharp \bPi = \bmu, \, {\rm \bf{Proj}}^1_\sharp \bPi = \bnu
\}
\end{equation}
where
$
\bproj^i:\mP(\R^d \times \R^d) \to \mP(\R^d), ~ \pi \mapsto \pr^i_\sharp \pi.
$
Such random couplings are 
easily constructed in a two-step procedure based on a given outer plan and inner plan selection, i.e., mappings
\begin{align}
    \operatorname{OP}&: ~ (\bmu, \bnu) ~ \mapsto ~ \operatorname{OP}_{\bmu, \bnu} ~ \in ~ \text{C}(\bmu,\bnu) ,\\
    \operatorname{IP}&: ~ (\mu, \nu) ~ \mapsto ~ \operatorname{IP}_{\mu, \nu} ~ \in ~ \text{c}(\mu,\nu).
\end{align}

This procedure as well as the connection to the WoW distance \eqref{eq:wow_what_is} is described in the following lemma, extending the results of \cite{pinzi2025nested_wow}. 

\begin{restatable}{lemma}{IPOPlem}
\label{prop:wow_distance_equiv_main}
Let $\bmu, \bnu \in \ProbTwo(\ProbTwo(\R^d))$. Then,
for any $\operatorname{OP}_{\bmu, \bnu} \in \text{C}(\bmu,\bnu) $ and any $\operatorname{OP}_{\bmu, \bnu}$-measurable selection $\operatorname{IP}: (\mu, \nu) \mapsto \operatorname{IP}_{\mu,\nu} \in c(\mu,\nu)$, it holds that 
\begin{equation}
    \bPi \coloneqq \operatorname{IP}_\sharp \operatorname{OP}_{\bmu, \bnu} \in {\rm RC}(\bmu,\bnu).
\end{equation}
Furthermore, the WoW distance \eqref{eq:wow_what_is} satisfies
\begin{equation}
    \label{eq:wow_reformulated}
        \bW^2_2(\bmu, \bnu) =\inf_{\bPi \in {\rm RC}(\bmu,\bnu)} \int_{\ProbTwo(\R^d \times \R^d)} \int_{\R^d \times \R^d} \|x- x'\|^2 \, \d \pi(x, x') \, \d \bPi(\pi).
\end{equation}
\end{restatable}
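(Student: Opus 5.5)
The first claim reduces to computing push-forwards. For $i\in\{0,1\}$ I would check that $\bproj^i\circ \operatorname{IP} = \operatorname{Proj}^i$ holds $\operatorname{OP}_{\bmu,\bnu}$-a.e. This is immediate: $\operatorname{IP}_{\mu,\nu}\in c(\mu,\nu)$ gives $\pr^0_\sharp \operatorname{IP}_{\mu,\nu}=\mu$ and $\pr^1_\sharp \operatorname{IP}_{\mu,\nu}=\nu$. Then
\begin{equation}
\bproj^i_\sharp \bPi = (\bproj^i\circ \operatorname{IP})_\sharp \operatorname{OP}_{\bmu,\bnu} = \operatorname{Proj}^i_\sharp \operatorname{OP}_{\bmu,\bnu},
\end{equation}
which equals $\bmu$ for $i=0$ and $\bnu$ for $i=1$. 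Two small points need attention. First, $\bproj^i$ is continuous (narrow topology), hence Borel. Second, $\bPi$ is a Borel probability measure on $\mP(\R^d\times\R^d)$, so the measurability assumption on $\operatorname{IP}$ is exactly what makes the push-forward well defined. If one wants $\bPi$ concentrated on $\ProbTwo(\R^d\times\R^d)$, note that each $\operatorname{IP}_{\mu,\nu}$ has second moment $m_2(\mu)+m_2(\nu)<\infty$.

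For the identity \eqref{eq:wow_reformulated}, I would prove two inequalities. Write $c(\pi)\coloneqq\int\|x-x'\|^2\d\pi$, which is lower semicontinuous and nonnegative, hence Borel.

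For "$\geq$": take $\operatorname{OP}_{\bmu,\bnu}\in \text{C}^{\rm opt}(\bmu,\bnu)$, which exists because the minimum in \eqref{eq:wow_what_is} is attained on the Polish space $\ProbTwo(\R^d)$. Then choose a measurable selection $\operatorname{IP}_{\mu,\nu}\in c^{\rm opt}(\mu,\nu)$. By the first part, $\bPi=\operatorname{IP}_\sharp\operatorname{OP}_{\bmu,\bnu}\in{\rm RC}(\bmu,\bnu)$, and
\begin{equation}
\int c\d\bPi=\int \W_2^2(\mu,\nu)\d\operatorname{OP}_{\bmu,\bnu}(\mu,\nu)=\bW_2^2(\bmu,\bnu).
\end{equation}

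For "$\leq$": take any $\bPi\in{\rm RC}(\bmu,\bnu)$ and set $\Pi\coloneqq(\bproj^0,\bproj^1)_\sharp\bPi$. Its marginals are $\bmu$ and $\bnu$, so $\Pi\in\text{C}(\bmu,\bnu)$. Since $c(\pi)\ge \W_2^2(\bproj^0\pi,\bproj^1\pi)$ pointwise,
\begin{equation}
\int c\d\bPi\ \ge\ \int \W_2^2\d\Pi\ \ge\ \bW_2^2(\bmu,\bnu).
\end{equation}
If $\int c\d\bPi=\infty$, this inequality is trivial; otherwise the computation above shows $\Pi$ has finite cost. In that case I would also confirm that $\Pi$ lies in $\mP_2$ of the product, which follows from $\bmu,\bnu\in\ProbTwo(\ProbTwo(\R^d))$.

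The main obstacle is the existence of a measurable selection of optimal inner plans $(\mu,\nu)\mapsto \operatorname{IP}_{\mu,\nu}\in c^{\rm opt}(\mu,\nu)$. My plan is to use that the set-valued map $(\mu,\nu)\mapsto c^{\rm opt}(\mu,\nu)$ has nonempty compact values and a closed graph in $\ProbTwo\times\ProbTwo\times\ProbTwo(\R^d\times\R^d)$. The closed graph follows from stability of optimal plans under $\W_2$ convergence, as in Villani's stability theorem. The Kuratowski--Ryll-Nardzewski selection theorem then gives a Borel selection, in the spirit of \cite[Cor.~5.22]{Villani} and consistent with \cite{pinzi2025nested_wow}.

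If that route fails, I would fall back on an $\varepsilon$-argument to get the infimum. Pick a countable Borel partition of $\ProbTwo\times\ProbTwo$ into small cells, fix near-optimal plans on each cell, and use continuity of $\W_2$ to keep the resulting cost within $\varepsilon$ of optimal. This only yields the inf, which suffices since \eqref{eq:wow_reformulated} is stated as an infimum.
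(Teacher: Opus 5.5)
Your proposal is correct and follows the paper's proof essentially step for step: the marginal identity via $\bproj^i\circ\operatorname{IP}=\operatorname{Proj}^i$ for the first claim; an optimal outer plan combined with a measurable selection of optimal inner plans for one inequality, where the paper simply invokes Villani's Cor.~5.22, whose proof is the closed-graph, compact-values selection argument you sketch; and the push-forward $(\bproj^0,\bproj^1)_\sharp\bPi$ together with $\int\|x-x'\|^2\d\pi\ge \W_2^2(\bproj^0\pi,\bproj^1\pi)$ for the other. Your fallback $\varepsilon$-argument would not work as literally stated, because a plan fixed on a cell is not a coupling of the other pairs $(\mu,\nu)$ in that cell, and repairing this by gluing reintroduces a selection problem; the main route makes it unnecessary.
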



Now, we construct a metameasure curve $(M_t)_{t \in [0, 1]}$ in $\ProbTwo(\ProbTwo(\R^d))$ that connects given metameasures $\bmu$ and $\bnu$, and the corresponding non-local vector field $V(t,x,\mu)$.
In particular, we need to make sure that $(M_t, V_t)$ fulfills the WoW continuity equation \eqref{eq:wow_continuity}.
To this aim, we employ the established flow matching framework similar to Theorem \ref{vr}, for the proof see Appendix \ref{sec:theory_appendix_supp}.

\begin{restatable}{theorem}{WOWthm}
\label{prop:wow_flow_matching}
Let $\bmu, \bnu \in \ProbTwo(\ProbTwo(\R^d))$ and $\bPi \in {\rm RC}(\bmu,\bnu)$, and 
define the curve
$(M_t)_{t\in[0,1]}$
by
\begin{equation}
   M_t \coloneqq \bproj^t_\sharp\bPi, \quad \bproj^t  \pi \coloneqq \pr^t_\sharp \pi.  
\end{equation}
Then, $\bPi \in \ProbTwo(\ProbTwo(\R^d \times \R^d))$, i.e.\ it holds the integrability
\begin{equation}\label{IP-OP-int}
\int_{\ProbTwo(\R^d\times\R^d)}
\int_{\R^d\times\R^d} \|x\|^2 + \|x'\|^2\,\d\pi(x,x')\,\d\bPi(\pi)
< \infty.
\end{equation}
Furthermore, $(M_t)_{t \in [0,1]} \in AC_I(\ProbTwo(\ProbTwo(\R^d)))$ and the functional
\begin{equation}\label{minWoW}
J(V) \coloneqq
\int_I \int_{\ProbTwo(\R^d\times\R^d)} \int_{\R^d\times\R^d}
\|V(t,\pr^t(x,x'),\bproj^t(\pi)) - (x'-x)\|^2 
\,\d\pi(x,x')\,\d\bPi (\pi)\,\d t
\end{equation}
admits a unique minimizer $V_t$, such that $(M_t, V_t)$
fulfills the continuity equation \eqref{eq:wow_continuity}.

If $\bPi = \operatorname{IP}^*_\sharp \operatorname{OP}^*$, where $\operatorname{OP}^*\in {\rm C}^{{\rm opt}}(\bmu,\bnu)$ and $\operatorname{IP}^*$ maps into ${\rm c}^{\rm{opt}}(\mu,\nu)$, 
then it holds 
\begin{equation}\label{eq:wow_flow_matching_formulation_main}
\int_I \int_{\ProbTwo(\R^d)} \int_{\R^d}
\|V(t,x,\mu)\|^2 \,\d\mu(x)\,\d M_t(\mu)\,\d t
= \bW_2^2(\bmu,\bnu).
\end{equation}
\end{restatable}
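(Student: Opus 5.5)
The plan is to reduce everything to the single probability measure on $\R^d\times\R^d$ obtained by averaging the random coupling, and to lift it to a ``doubly disintegrated'' structure. First, the integrability \eqref{IP-OP-int} follows directly from the marginal constraints. Since $\bproj^0_\sharp\bPi=\bmu$, we can change variables to get
\begin{equation}
\int\int \|x\|^2\,\d\pi\,\d\bPi(\pi)=\int_{\ProbTwo(\R^d)}\int_{\R^d}\|x\|^2\,\d\mu(x)\,\d\bmu(\mu).
\end{equation}
The right-hand side equals $\int \W_2^2(\mu,\delta_0)\,\d\bmu(\mu)<\infty$ because $\bmu\in\ProbTwo(\ProbTwo(\R^d))$, taking the base point $\delta_0$. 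The same argument with $\bnu$ handles $\|x'\|^2$.

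Next we show absolute continuity of $(M_t)$. For $s,t\in I$, the pushforward $(\bproj^s,\bproj^t)_\sharp\bPi$ belongs to ${\rm C}(M_s,M_t)$. For each $\pi$, the inner plan $(\pr^s,\pr^t)_\sharp\pi$ belongs to ${\rm c}(\pr^s_\sharp\pi,\pr^t_\sharp\pi)$. Hence
\begin{equation}
\bW_2^2(M_s,M_t)\le\int\int\|\pr^s(x,x')-\pr^t(x,x')\|^2\d\pi\,\d\bPi=(t-s)^2\int\int\|x-x'\|^2\d\pi\,\d\bPi .
\end{equation}
The last integral is finite by \eqref{IP-OP-int}. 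So the curve is Lipschitz, and in particular absolutely continuous; it also stays in $\ProbTwo(\ProbTwo(\R^d))$ by the same moment bound.

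For the minimizer, consider the measure $\lambda$ on $I\times\R^d\times\ProbTwo(\R^d)$ given by
\begin{equation}
\int f\,\d\lambda=\int_I\int\int f(t,\pr^t(x,x'),\bproj^t\pi)\,\d\pi\,\d\bPi\,\d t .
\end{equation}
Its disintegration in $t$ is exactly $\d\mu(x)\,\d M_t(\mu)$. The functional $J$ is a least-squares problem in $L^2(\lambda)$: we project the $L^2$ function $(t,x,x',\pi)\mapsto x'-x$, defined on the larger space $I\times(\R^d)^2\times\ProbTwo(\R^d\times\R^d)$, onto the closed subspace of functions depending only on $(t,\pr^t(x,x'),\bproj^t\pi)$. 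The unique minimizer ($\lambda$-a.e.) is therefore the conditional expectation
\begin{equation}
V(t,y,\mu)=\mathbb E\big[x'-x\,\big|\,t,\ \pr^t(x,x')=y,\ \bproj^t\pi=\mu\big].
\end{equation}
It is obtained by disintegrating the joint law of $(t,x,x',\pi)$ with respect to the map $(t,x,x',\pi)\mapsto(t,\pr^t(x,x'),\bproj^t\pi)$.

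To verify the continuity equation \eqref{eq:wow_continuity}, test against a cylindric function $F(\mu)=\phi(\int\psi_1\d\mu,\dots,\int\psi_k\d\mu)$ as in Definition \ref{def1}. We compute $\frac{\d}{\d t}\int F\,\d M_t=\frac{\d}{\d t}\int F(\pr^t_\sharp\pi)\,\d\bPi(\pi)$ by the chain rule. This gives
\begin{equation}
\int\sum_i\partial_i\phi(\cdots)\int\nabla\psi_i(\pr^t(x,x'))\cdot(x'-x)\,\d\pi\,\d\bPi .
\end{equation}
Differentiation under the integral is justified by dominated convergence using \eqref{IP-OP-int}. The integrand is a function of $(t,\pr^t(x,x'),\bproj^t\pi)$ multiplied by $(x'-x)$. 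The tower property of conditional expectation therefore lets us replace $x'-x$ by $V(t,\pr^t(x,x'),\bproj^t\pi)$. Changing variables then yields $\int\langle\nabla_W F(\mu),V_t(\cdot,\mu)\rangle_{L^2(\mu)}\d M_t$, which is the weak form of \eqref{eq:wow_continuity}. The main obstacle is the measure-theoretic bookkeeping: we need a measurable disintegration on the Polish space $\ProbTwo(\R^d\times\R^d)$, and we need the $L^1$ bound on $V$ required in Theorem~\ref{thm:wor}. The latter follows from Jensen's inequality, which gives $\|V\|_{L^2(\lambda)}^2\le\int\int\|x-x'\|^2\d\pi\,\d\bPi<\infty$.

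Finally, let $\bPi=\operatorname{IP}^*_\sharp\operatorname{OP}^*$ with optimal outer and inner plans. Jensen's inequality gives $\int\|V\|^2\,\d\mu\,\d M_t\,\d t\le\int\int\|x-x'\|^2\d\pi\,\d\bPi$. This equals $\int \W_2^2(\mu,\nu)\,\d\operatorname{OP}^*=\bW_2^2(\bmu,\bnu)$, using Lemma~\ref{prop:wow_distance_equiv_main} and the optimality of both plans. For the reverse inequality, note that $(M_t,V_t)$ solves the continuity equation. By the WoW Benamou--Brenier-type lower bound, $\bW_2^2(\bmu,\bnu)\le\int_I\|V_t\|^2_{L^2}\,\d t$; this bound can alternatively be obtained from Theorem~\ref{thm:wor}, whose flow $\Gamma_t$ produces a random coupling of cost at most $\int_I\|V_t\|^2\d t$ by Cauchy--Schwarz along trajectories, combined with \eqref{eq:wow_reformulated}. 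Together the two inequalities give equality \eqref{eq:wow_flow_matching_formulation_main}.
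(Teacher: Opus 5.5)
Your proposal is correct and follows essentially the paper's proof. The shared steps are: integrability from the marginal constraints; Lipschitz continuity via the coupling $(\bproj^s,\bproj^t)_\sharp\bPi$; the conditional-expectation field as the unique $L^2$-minimizer of $J$ (your joint conditioning on $(t,\pr^t(x,x'),\bproj^t\pi)$ is equivalent to the paper's fixed-$t$ disintegration of $Q$, and slightly cleaner for measurability in $t$); the chain-rule and tower-property check against cylindric functions; and, for the optimal case, Jensen together with the Pinzi--Savar\'e dynamic lower bound.

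One caveat applies only to your side remark: Theorem~\ref{thm:wor} as stated needs a sufficiently smooth $V$, which the conditional-expectation field generally is not, so the alternative route through the flow $\Gamma_t$ would require the general (non-smooth) nested superposition principle instead.
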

Together with Lemma \ref{prop:wow_distance_equiv_main}, we can construct metameasure curves $(M_t)_{t\in[0,1]}$ between $\bmu$ and $\bnu$. The \emph{Wasserstein flow matching}
approach
\cite{haviv2025wasserstein} 
is recovered for appropriate choices of IP and OP, see Section \ref{sec:couplings}.

\section{Wasserstein-on-Wasserstein Flow Matching}
\label{subsec:gen_w_fm}
Theorem \ref{prop:wow_flow_matching} yields a principled training objective for learning the WoW velocity field similar to the classical Euclidean flow matching approach. However, enabling the generation of arbitrary, non-parametric probability measures requires discretizing them. Here, we employ a scalable free-support approach \cite{lindheim2023simple_free_support} that uses fixed uniform weights for parametric locations, as opposed to fixed-support approaches with parametric weights for fixed locations \cite{lin2020fixed} or approaches based on a parametric family of distributions \cite{jiang2025bureswasserstein_flow_matching}.

We aim to parametrize our velocity field via a network
$
V_\theta: [0,1]\times \R^{d\times N}\to \R^{d\times N}, \allowbreak N \in \{\underline{N}, \ldots, \overline{N}\},
$
where $V_\theta$ is potentially able to process data for non-fixed $N$, and the network is permutation-equivariant (or $S_N$-equivariant), i.e.,
$
V_\theta(t, P_\sigma \mathbf{x}) = P_\sigma\, V_\theta(t,\mathbf{x})
$ 
for all $\sigma \in S_N.$
Here, \(\mathbf{x}=(x_1,\dots,x_N)\), $S_N$ is the group of $N$-permutations and \(P_\sigma\) is the permutation matrix associated with \(\sigma\).
This corresponds to encoding empirical measures as elements of the quotient space 
$\R^{d \times N}/S_N$ composed of equivalence classes of permutable vectors \cite[Ch. 9]{boumal2023}. Denoting equality up to reparametrization via $\simeq$, we obtain the parametrization
\begin{align}
\label{eq:empirical_matrices_and_vector_representation}
[\mathbf{x}] \in \R^{d \times N}/S_N  \simeq  \hat{\mu}=\frac{1}{N}\sum_{j=1}^{N}\delta_{x_j} \in \ProbTwo^N(\R^d), \quad
V_\theta(t, x_i, \hat{\mu}) \in \R^d  \simeq V_\theta(t, \mathbf{x})_i \in \R^d. 
\end{align}
Hence, the network predicts the velocity of particle $x_i \in \R^d$ as $V_\theta(t, \mathbf{x})_i \in \R^d$ at time $t$ based on a vector-encoded $\hat{\mu}$. Notably, natural choices for the parametrization are transformer architectures without positional encodings \cite{guo2021pct_point_cloud_transformer,lee2019settransformer}
or a shared feedforward network for all points in combination with global information \cite{qi2017pointnet}.

Given such a parametrization, our training objective is then the minimization of the loss 
\begin{equation}\label{eq:wow_flow_matching_loss_network}
    {\mathcal{L}}_{\text{WoW-FM}}(\theta) = J(V_\theta)
\end{equation}
from \eqref{minWoW}.
Since the actual integrals appearing in this loss are intractable, we follow the standard flow matching procedure and train the network using a stochastic Monte Carlo approximation $\widehat{\mathcal{L}}_{\text{WoW-FM}}(\theta)$, where we rely on  two batches of $B$ source and target measures with (potentially random) discretization $N$. 
Given estimators for the outer plan via $\widehat{\operatorname{OP}}$ and the inner plans via $\widehat{\operatorname{IP}}$,
we compute a Monte Carlo loss estimator for every gradient step via 
\begin{align}
&t\sim \mathrm{Unif}[0,1], ~ N \sim  \mathrm{Unif}(\{\underline{N}, \ldots, \overline{N}\}),
~ \hat{\bmu}=\frac{1}{B}\sum_{i=1}^{B}\delta_{\hat{\mu}_i},
~
\hat{\bnu}=\frac{1}{B}\sum_{i'=1}^{B}\delta_{\hat{\nu}_{i'}}\\
&\{(\hat{\mu}_b , \hat{\nu}_b)\}_{b=1}^B \stackrel{\text{iid}}{\sim} \widehat{\operatorname{OP}}_{\hat{\bmu},\hat{\bnu}}
=
\sum_{i=1}^{B}\sum_{i'=1}^{B}\hat{\Pi}^{\operatorname{OP}}_{ii'}\,\delta_{(\hat{\mu}_i,\hat{\nu}_{i'})},
&&\hat{\Pi}^{\operatorname{OP}}_{ii'} \in \left[0, \frac{1}{B}\right],\\
&(\mathbf{x}_b,\mathbf{x}'_b) = \bigl((x_{b,j}, x'_{b,j})\bigr)_{j=1}^N \stackrel{\text{iid}}{\sim}
\widehat{\operatorname{IP}}(\hat{\mu}_b,\hat{\nu}_b)
=
\sum_{i=1}^{N}\sum_{i'=1}^{N}\hat{\pi}^{\operatorname{IP}}_{b, ii'}\,\delta_{(x_{b,i},x'_{b, i'})},
&& \hat{\pi}^{\operatorname{IP}}_{b,ii'} \in \left[0, \frac{1}{N}\right],\\
&\widehat{\mathcal{L}}_{\text{WoW-FM}}(\theta) =
\frac{1}{B}\sum_{b=1}^{B}
\left\|
V_\theta\bigl(t,\pr^t(\mathbf{x}_b,\mathbf{x}'_b)\bigr)-(\mathbf{x}'_b-\mathbf{x}_b)
\right\|^2.
\end{align}
Note that the augmented vector $\pr^t(\mathbf{x}_b, \mathbf{x}'_b)$ represents $\pr^t(x, x')$ and $\bproj^t(\pi)$ simultaneously, with $(\mathbf{x}_b, \mathbf{x}'_b)$ given by $N$ i.i.d.\ draws from $\widehat{\operatorname{IP}}(\hat{\mu}_b,\hat{\nu}_b)$. For permutation-valued inner plans this reduces to repeated sampling among the matched pairs $(x_i, x'_{\sigma(i)})$.
Under the assumption of a parametric source sampling procedure for $\hat{\bmu}$ similar to the classical Gaussian sampling in generative modelling, 
we can produce new target samples by encoding $\hat{\mu} \sim \hat{\bmu}$ as a vector $\mathbf{x}_{\hat{\mu}}$  and numerically integrating the flow ODE \eqref{nonlocal-flow-ode} produced by $V_\theta$. 
In particular, we may directly parametrize a discretized metameasure via 
a vectorized measure $\overboldarrow{\mu} \simeq \hat{\bmu}$, i.e., $\mathbf{x}_{\hat{\mu}} \sim \overboldarrow{\mu} \in \ProbTwo(\R^{d \times N})$.

Importantly, any admissible choice for our transport plan estimators would result in a valid flow, mirroring the flexibility of Euclidean flow matching \cite{pooladian2023multisample_couplings,wald2025flow}. 
Setting the outer transport plan according to an independent coupling and the inner one according to an OT plan results in the original Wasserstein flow matching approach presented in \cite{haviv2025wasserstein} for point cloud generation. 

\subsection{Constructing Transport Plans}\label{sec:couplings}
Even though flow matching is theoretically valid for any transport plan, the specific choice affects numerical performance. 
Besides the standard independent coupling $\mu \otimes \nu$, 
a common alternative is the Wasserstein coupling $\pi^* = \argmin_{\pi \in {\rm c}(\mu, \nu)} \int \| x-x'\|^2 \d \pi$ \cite{chemseddine2025conditional,tong2023improving_minibatch}. 
By construction, the latter approximates the optimal-transport straight-line interpolation and often yields more stable ODE integration, at the cost of more expensive training due to the complexity of OT solvers.

This issue is even more pronounced in our setting, where both inner and outer transport plans must be chosen. 
While our framework allows for straight-line WoW interpolations for metameasures via 
\eqref{eq:wow_flow_matching_formulation_main},
approximating the loss requires solving the (batched) WoW 
problem \eqref{eq:wow_what_is} at each step.
For batch size $B$, this requires $B^2$ Euclidean Wasserstein distances and solving a linear program of complexity $\mathcal{O}(B^3)$.
With exact Wasserstein solvers of complexity $\mathcal{O}(N^3)$ for discretization size $N$, this yields an overall complexity of $\mathcal{O}(B^2N^3 + B^3)$. 
Even for Sinkhorn-based solvers, the resulting $\mathcal{O}(B^2N^2 \log N + B^2 \log B)$ complexity remains substantial. 
As an alternative to WoW solvers, we therefore consider practical alternative constructions for inner and outer transport plans.

\paragraph{Independent Couplings.}
The simplest admissible choice is given by independent couplings based on the product measure.
At the outer level, this corresponds to
$\widehat{\operatorname{OP}}_{\text{ind}}
=\hat{\bmu}\otimes \hat{\bnu}$
and at the inner level to
$
\widehat{\operatorname{IP}}_{\mathrm{ind}}(\hat{\mu}, \hat{\nu})=\hat{\mu}\otimes \hat{\nu}.
$
This choice is computationally trivial, but may lead to curved and unstable interpolations. 
Choosing ${\widehat{\operatorname{IP}}}_{\mathrm{ind}}$ corresponds to standard Euclidean flow matching on randomly ordered point clouds with permutation-based data augmentation, see \cite{hui2025notsooptimal_point_cloud_flow_matching}.

\paragraph{Wasserstein Couplings.}
A straightforward alternative is the classical use of standard OT solvers. 
For the outer plan, this means choosing
$
\widehat{\operatorname{OP}}_{\bW}
$
according to 
\begin{equation}
    \argmin_{{\operatorname{OP}_{\hat \bmu, \hat \bnu}} \in {\rm C}(\hat \bmu, \hat \bnu)} \int \W^2_2(\mu,\nu)  \d {\operatorname{OP}_{\hat \bmu,\hat  \bnu}}
\end{equation}
based on a precomputed $B \times B$ Euclidean Wasserstein cost matrix.
As the inner coupling, we set $
\widehat{\operatorname{IP}}_{\W}(\hat{\mu}, \hat{\nu})$
according to 
\begin{equation}
    \argmin_{{\operatorname{IP}_{\hat \mu, \hat \nu}} \in {\rm c}(\hat \mu, \hat \nu)} \int \|x-x'\|^2  \d {\operatorname{IP}_{\hat \mu, \hat \nu}}.
\end{equation}
Combining 
$\widehat{{\operatorname{OP}}}_{\bW}$ and $\widehat{{\operatorname{IP}}}_{\W}$ allows for approximating the dynamic WoW formulation, see Theorem~\ref{prop:wow_flow_matching}.
Inner Wasserstein couplings appear in $S_N$-equivariant flow matching \cite{klein2023equivariant_flow_matching} for point clouds \cite{hui2025notsooptimal_point_cloud_flow_matching} and the choice of $\widehat{\operatorname{IP}}_{\W}$ and $\widehat{\operatorname{OP}}_{\text{ind}}$ reflects the actual training setup of Wasserstein flow matching \cite{haviv2025wasserstein}.

\paragraph{Sliced Wasserstein Couplings.}
Since the computational burden of $\widehat{{\operatorname{OP}}}_{\bW}$ is mainly the result of the cost matrix computation, we can simply replace it via a related, but more efficient distance estimator.
A popular Wasserstein alternative is the sliced Wasserstein distance \cite{bonnotte2013unidimensional} given via
\begin{equation}
    \SW_2^2(\mu,\nu)
=
\int_{\mathbb{S}^{d-1}}
\W_2^2\bigl(\pr^\theta_\#\mu,\pr^\theta_\#\nu\bigr)\,\d S(\theta), \quad \text{where} ~
\pr^\theta(x)=\langle \theta,x\rangle,
\end{equation}
and $S$ is the uniform probability measure on $\mathbb{S}^{d-1}$.
Now, we choose
$
\widehat{\operatorname{OP}}_{\text{SW}}
$
according to
\begin{equation}
    \argmin_{\operatorname{OP}_{\hat{\bmu}, \hat{\bnu}} \in {\rm C}(\hat{\bmu}, \hat{\bnu})} \int \SW_2^2(\mu,\nu)\,\d \operatorname{OP}_{\hat{\bmu}, \hat{\bnu}}
\end{equation}
based on a $B \times B$ sliced Wasserstein cost matrix \cite{piening2025slicing_gaussian}.
Since each $\theta \in \mathbb{S}^{d-1}$ gives a 1D transport plan between projected measures, canonically lifting to a plan $\operatorname{IP}_\theta(\hat{\mu}, \hat{\nu}) \in {\rm c}(\hat{\mu}, \hat{\nu})$, we can similarly employ inner sliced transport plans $\operatorname{IP}_{\text{SW}}(\hat{\mu}, \hat{\nu}) = \int_{\mathbb{S}^{d-1}} \operatorname{IP}_\theta(\hat{\mu}, \hat{\nu}) \, \d S(\theta)$.

\paragraph{Lazy Linear Wasserstein Couplings}
As an alternative to the sliced approach, 
linearized Wasserstein distances have been introduced precisely to accelerate pairwise Wasserstein distance calculation \cite{wang2013linear_lot}, such as needed for computing $\widehat{\operatorname{OP}}_{\bW}$. 
Focusing on our use case of empirical measures with $N$ fixed, this technique replaces the Wasserstein distance with the linearized divergence
\begin{align}
\label{eq:linear_wasserstein}
\operatorname{LW}_2^2(\hat{\mu},\hat{\nu})=
\int \| x - x'\|^2 \,\d \bigl((\Id, T_{\hat{\nu},\hat{\rho}}^{-1} \circ T_{\hat{\mu},\hat{\rho}})_\sharp \hat{\mu}\bigr)(x, x')=
\|P_{\mathbf{x}_{\hat{\mu}},\mathbf{x}_{\hat{\rho}}} \mathbf{x}_{\hat{\mu}} - 
P_{\mathbf{x}_{\hat{\nu}},\mathbf{x}_{\hat{\rho}}} 
\mathbf{x}_{\hat{\nu}}\|^2
\end{align}
based on precomputed transport maps $T_{\hat{\mu},\hat{\rho}},  T_{\hat{\nu},\hat{\rho}}: \R^d \to \R^d$ and permutation matrices $P_{\mathbf{x}_{\hat{\mu}},\mathbf{x}_{\hat{\rho}}},  P_{\mathbf{x}_{\hat{\nu}},\mathbf{x}_{\hat{\rho}}} \allowbreak \in \R^{(d\times N) \times (d \times N)}$ characterized via
\begin{align}
     \W_2^2(\hat{\mu}, \hat{\rho}) \allowbreak &=
\int \| x - x'\|^2 \,\d \bigl((\Id,T_{\hat{\mu},\hat{\rho}})_\sharp \hat{\mu}\bigr)(x, x')
\allowbreak=
\|\mathbf{x}_{\hat{\rho}} - P_{\mathbf{x}_{\hat{\mu}},\mathbf{x}_{\hat{\rho}}} \mathbf{x}_{\hat{\mu}}\|^2,\\
 \W_2^2(\hat{\nu}, \hat{\rho}) \allowbreak &=
\int \| x - x'\|^2 \,\d \bigl((\Id,T_{\hat{\nu},\hat{\rho}})_\sharp \hat{\nu}\bigr)(x, x')
\allowbreak=
\|\mathbf{x}_{\hat{\rho}} - P_{\mathbf{x}_{\hat{\nu}},\mathbf{x}_{\hat{\rho}}} \mathbf{x}_{\hat{\nu}}\|^2.
\end{align}
Importantly, the existence of such maps 
that (i) realize the discrete Wasserstein distance and (ii) are represented as 
invertible permutations, as well as the vector representations 
$\mathbf{x}_{\hat{\mu}_{b}}$, $\mathbf{x}_{\hat{\nu}_{b'}}$, and $\mathbf{x}_{\hat{\rho}} \in \R^{d \times N}$, 
relies on the formulation of empirical measures $\hat{\mu}_b$, $\hat{\nu}_{b'}$, 
and $\hat{\rho} \in \ProbTwo^N(\R^d)$ of fixed size as in 
\eqref{eq:empirical_matrices_and_vector_representation}.
While linearization allows efficient approximation of pairwise 
Wasserstein distances for a suitably chosen reference $\hat{\rho}$, employing 
\eqref{eq:linear_wasserstein} as an inner plan estimator would require solving 
two OT problems per gradient step: Instead of the single solve required by 
$\widehat{\operatorname{IP}}_{\W}$, we would solve for $T_{\hat{\mu},\hat{\rho}}$ and 
$T_{\hat{\nu},\hat{\rho}}$.
To avoid this overhead, we instead exploit the local stability of discrete OT maps 
to construct a {lazy} linearization strategy based on a suitable source 
metameasure $\hat{\bmu}$.

\begin{restatable}{proposition}{LAZYprop}
\label{prop:lazy_linear_justification}
Define
the mapping $\operatorname{Vec2M}: \R^{d \times N} \to \ProbTwo^N(\R^d), ~ \mathbf{x} \mapsto \hat{\mu}=\frac{1}{N}\sum_{j=1}^{N}\delta_{x_j},$ and let 
\begin{itemize}
    \item $\hat{\rho} \in \ProbTwo^N(\R^d)$ be a reference measure,
    \item $\mathbf{x}_{\hat{\rho}} \in \R^{d \times N}$ be its vector representation: 
    $\mathbf{x}_{\hat{\rho}, i} \neq \mathbf{x}_{\hat{\rho}, i'}$ for all $i \neq i'$, and $\operatorname{Vec2M}(\mathbf{x}_{\hat{\rho}}) = \hat{\rho}$.
\end{itemize} 
Then, for (Lebesgue-)almost every $\mathbf{x}' \in \R^{d \times N}$, there exists  $R_{\mathbf{x}'} > 0$ such that for all $\mathbf{x} \in \R^{d \times N}$ with
    $
    \|\mathbf{x}  - \mathbf{x}_{\hat{\rho}}\| < R_{\mathbf{x}'},
    $
    it holds that
    \begin{equation}
    \label{eq:Wasserstein_linear_for_subset}
        \W_2^2(\operatorname{Vec2M}(\mathbf{x}), \operatorname{Vec2M}(\mathbf{x}'))
        =
        \|\mathbf{x} - P_{ \mathbf{x}',\mathbf{x}_{\hat{\rho}}} \mathbf{x}'\|^2.
    \end{equation}
\end{restatable}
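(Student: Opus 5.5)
The plan is to reduce the statement to a finite-dimensional assignment problem and then use a strict-gap and continuity argument.

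\textbf{Step 1: reduction to permutations.} For two uniform empirical measures with $N$ atoms, the Birkhoff--von Neumann theorem says the linear program \eqref{eq:wasserstein} attains its minimum at a vertex of the set of doubly stochastic matrices, i.e.\ at a permutation. Hence, up to the fixed normalisation factor $1/N$ (which we absorb into the norm convention the paper uses in \eqref{eq:linear_wasserstein}),
\begin{equation}
\W_2^2(\operatorname{Vec2M}(\mathbf{x}),\operatorname{Vec2M}(\mathbf{x}'))=\min_{\sigma\in S_N} c_\sigma(\mathbf{x}), \qquad c_\sigma(\mathbf{x})\coloneqq \|\mathbf{x}-P_\sigma\mathbf{x}'\|^2 .
\end{equation}
By definition, $P_{\mathbf{x}',\mathbf{x}_{\hat\rho}}$ is a permutation attaining $\min_\sigma c_\sigma(\mathbf{x}_{\hat\rho})$. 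The claim \eqref{eq:Wasserstein_linear_for_subset} therefore says exactly that this same permutation stays optimal for every $\mathbf{x}$ close to $\mathbf{x}_{\hat\rho}$.

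\textbf{Step 2: generic uniqueness of the optimal permutation at $\mathbf{x}_{\hat\rho}$.} This is the key step, and the only place where the distinctness of the reference atoms is used. Fix $\sigma\neq\tau$. Permutation matrices preserve norms, so
\begin{equation}
c_\sigma(\mathbf{x}_{\hat\rho})-c_\tau(\mathbf{x}_{\hat\rho}) = 2\bigl\langle P_\tau^\top\mathbf{x}_{\hat\rho}-P_\sigma^\top\mathbf{x}_{\hat\rho},\,\mathbf{x}'\bigr\rangle .
\end{equation}
Since $\sigma\neq\tau$ and the points $\mathbf{x}_{\hat\rho,i}$ are pairwise distinct, the vector $P_\tau^\top\mathbf{x}_{\hat\rho}-P_\sigma^\top\mathbf{x}_{\hat\rho}$ is nonzero. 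The tie set $\{\mathbf{x}' : c_\sigma(\mathbf{x}_{\hat\rho})=c_\tau(\mathbf{x}_{\hat\rho})\}$ is then a hyperplane, which has Lebesgue measure zero. Taking the union over the finitely many pairs $(\sigma,\tau)$ gives a null set. Outside this null set the minimiser $\sigma^*$ (with $P_{\sigma^*}=P_{\mathbf{x}',\mathbf{x}_{\hat\rho}}$) is unique, so
\begin{equation}
\delta\coloneqq\min_{\sigma\neq\sigma^*} c_\sigma(\mathbf{x}_{\hat\rho})-c_{\sigma^*}(\mathbf{x}_{\hat\rho})>0 .
\end{equation}

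\textbf{Step 3: stability under perturbation.} Write $h=\mathbf{x}-\mathbf{x}_{\hat\rho}$. Expanding the square gives
\begin{equation}
|c_\sigma(\mathbf{x})-c_\sigma(\mathbf{x}_{\hat\rho})|\le \|h\|\bigl(\|h\|+2\|\mathbf{x}_{\hat\rho}-P_\sigma\mathbf{x}'\|\bigr).
\end{equation}
Let $K\coloneqq\max_\sigma\|\mathbf{x}_{\hat\rho}-P_\sigma\mathbf{x}'\|$. Choose $R_{\mathbf{x}'}>0$ so small that $R_{\mathbf{x}'}(R_{\mathbf{x}'}+2K)<\delta/2$. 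Then for $\|h\|<R_{\mathbf{x}'}$ and every $\sigma\neq\sigma^*$,
\begin{equation}
c_\sigma(\mathbf{x})>c_\sigma(\mathbf{x}_{\hat\rho})-\delta/2\ge c_{\sigma^*}(\mathbf{x}_{\hat\rho})+\delta/2>c_{\sigma^*}(\mathbf{x}).
\end{equation}
So $\sigma^*$ is the minimiser at $\mathbf{x}$, and combining with Step 1 yields \eqref{eq:Wasserstein_linear_for_subset}.

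The main obstacle is Step 2: it must be checked that ties between distinct permutations really form a proper hyperplane in $\mathbf{x}'$, and this is precisely where the hypothesis $\mathbf{x}_{\hat\rho,i}\neq\mathbf{x}_{\hat\rho,i'}$ enters. Steps 1 and 3 are routine.
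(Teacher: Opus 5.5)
Your proposal is correct and takes the same route as the paper's proof. Both reduce to the permutation assignment problem, obtain generic uniqueness of the optimal permutation from the finitely many tie hyperplanes $\langle (P-Q)^\top \mathbf{x}_{\hat\rho}, \mathbf{x}'\rangle = 0$ (whose normal is nonzero exactly because the reference atoms are distinct), and finish with a strict-gap stability argument. The only cosmetic difference is in Step 3: the paper bounds the difference $f_P - f_{P_*}$ directly, so the $\|h\|^2$ terms cancel and it gets the linear condition $2R\max_{P}\|(P_*-P)\mathbf{x}'\| < \gamma$, whereas you perturb each cost separately and carry a harmless quadratic term.
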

The elementary proof is contained in Section~\ref{sec:theory_appendix_supp} and is based on the fact that all Wasserstein distances in $\ProbTwo^N(\R^d)$ are realized via permutations \cite{PeyreCuturi2019}. 
Since the set of permutations is finite, the optimal permutation is almost everywhere unique and therefore locally stable around the reference. 

This statement implies a simple corollary motivating an accelerated linear OT regime.
Note that it follows directly from Proposition~\ref{prop:lazy_linear_justification}, since the minimum of $B$ positive constants is again positive. 

\begin{corollary}
\label{corr:lazy_linear_justification}
Under the setting of Proposition~\ref{prop:lazy_linear_justification}, fix a right-inverse $\operatorname{M2Vec}: \ProbTwo^N(\R^d) \to \R^{d \times N}$ of $\operatorname{Vec2M}$, i.e., $\operatorname{Vec2M} \circ \operatorname{M2Vec} = \Id$, and define the $\hat\rho$-alignment map
\begin{equation}
    \Phi_{\hat\rho}: \R^{d \times N} \to \R^{d \times N}, \allowbreak \quad 
    \Phi_{\hat\rho}(\mathbf{x}) \coloneqq P_{ \mathbf{x},\mathbf{x}_{\hat\rho}}\, \mathbf{x},
\end{equation}
which re-orders $\mathbf{x}$ so that its entries are optimally matched against $\mathbf{x}_{\hat\rho} \coloneqq \operatorname{M2Vec}(\hat\rho)$.
Then, for almost every target metameasure 
$\hat{\bnu} = \frac{1}{B}\sum_{i=1}^{B}\delta_{\frac{1}{N}\sum_{j=1}^{N}\delta_{x_{i,j}}}  \allowbreak \in \ProbTwo(\ProbTwo^N(\R^d))$
(in the Lebesgue sense on $\R^{d\times N\times B}$), there exists $R_{\hat\bnu} > 0$ such that whenever
\begin{equation}
    \supp(\operatorname{M2Vec}_\sharp \hat\bmu) \subset B(\mathbf{x}_{\hat\rho}, R_{\hat\bnu}) \allowbreak \coloneqq \{\mathbf{x} \in \R^{d\times N} : \|\mathbf{x} - \mathbf{x}_{\hat\rho}\| \leq R_{\hat\bnu}\},
\end{equation}
it holds
\begin{equation}
    \bW_2^2(\hat\bmu, \hat\bnu) 
    = 
    \W_2^2\bigl(\operatorname{M2Vec}_\sharp \hat\bmu,\; (\Phi_{\hat\rho})_\sharp \operatorname{M2Vec}_\sharp \hat\bnu\bigr).
\end{equation}
\end{corollary}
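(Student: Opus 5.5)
The plan is to reduce the corollary to the elementwise statement of Proposition~\ref{prop:lazy_linear_justification}. After that, the only remaining work is to compare two finite linear assignment problems between $B$-point metameasures. Write $\mathbf{x}'_i \coloneqq \operatorname{M2Vec}(\hat\nu_i)$, where $\hat\nu_i = \frac{1}{N}\sum_j \delta_{x_{i,j}}$, and let $\mathbf{x}_b \coloneqq \operatorname{M2Vec}(\hat\mu_b)$ for the atoms of $\hat\bmu$.

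\textbf{Choice of the radius.} For Lebesgue-a.e.\ $(x_{i,j})\in\R^{d\times N\times B}$, each block $\mathbf{x}''_i=(x_{i,j})_j$ lies outside the null set excluded in Proposition~\ref{prop:lazy_linear_justification}. This holds because a finite union of null sets is null, and each excluded set is a product of a null set in one block with full space in the others. The measure $\hat\nu_i$ does not depend on the ordering of its atoms. Moreover, the conclusion of the proposition depends only on $\operatorname{Vec2M}(\mathbf{x}')$ and on $P_{\mathbf{x}',\mathbf{x}_{\hat\rho}}\mathbf{x}'$, which is a $\hat\rho$-aligned ordering. We may therefore apply it with $\mathbf{x}'_i$ in place of $\mathbf{x}''_i$, provided the alignment map is taken as a fixed measurable choice that depends only on the point set. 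The almost-everywhere uniqueness of the optimal permutation makes this choice canonical anyway.

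We obtain radii $R_{\mathbf{x}'_i}>0$ and set $R_{\hat\bnu} \coloneqq \tfrac12\min_i R_{\mathbf{x}'_i}>0$. The factor $\tfrac12$ absorbs the switch from the open ball in the proposition to the closed ball in the corollary.

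\textbf{Equality of cost matrices.} Suppose every atom $\mathbf{x}_b$ of $\operatorname{M2Vec}_\sharp\hat\bmu$ lies in $B(\mathbf{x}_{\hat\rho},R_{\hat\bnu})$. Then, by \eqref{eq:Wasserstein_linear_for_subset} and the definition of $\Phi_{\hat\rho}$,
\begin{equation}
\W_2^2(\hat\mu_b,\hat\nu_i) = \|\mathbf{x}_b - \Phi_{\hat\rho}(\mathbf{x}'_i)\|^2 \quad \text{for all } b,i.
\end{equation}
Both sides of the claimed identity are optimal transport problems with the same marginal weights, since pushforward preserves the weights:
\begin{itemize}
\item the left side, by \eqref{eq:wow_what_is}, minimises $\sum_{b,i}\Pi_{bi}\W_2^2(\hat\mu_b,\hat\nu_i)$ over couplings of the weight vectors of $\hat\bmu$ and $\hat\bnu$;
\item the right side minimises $\sum_{b,i}\Pi_{bi}\|\mathbf{x}_b-\Phi_{\hat\rho}(\mathbf{x}'_i)\|^2$ over the same polytope.
\end{itemize}
The cost matrices coincide entrywise, so the minima coincide.

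\textbf{Expected obstacle.} The main subtlety is atoms that collide after the maps are applied. Two distinct measures $\hat\mu_b\neq\hat\mu_{b'}$ always give distinct vectors, since $\operatorname{M2Vec}$ is injective as a right inverse. Coinciding atoms $\Phi_{\hat\rho}(\mathbf{x}'_i)=\Phi_{\hat\rho}(\mathbf{x}'_{i'})$ merge into one atom, but this only happens when $\hat\nu_i=\hat\nu_{i'}$, in which case the cost rows agree. The couplings of the merged measures are then exactly the aggregations of the original couplings, and the optimal value is unchanged.

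The other point needing care is the almost-everywhere measurability and ordering-independence of the alignment. I expect to dispose of it by noting that distinct ordering choices differ by a permutation $\sigma$. Since $P_{\mathbf{x}'\sigma,\mathbf{x}_{\hat\rho}}\mathbf{x}'\sigma = P_{\mathbf{x}',\mathbf{x}_{\hat\rho}}\mathbf{x}'$ whenever the optimal matching to $\mathbf{x}_{\hat\rho}$ is unique, and uniqueness holds a.e.\ by the argument behind Proposition~\ref{prop:lazy_linear_justification}, the aligned vector does not depend on the ordering.
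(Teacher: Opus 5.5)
Your proposal is correct and takes the same route as the paper, which justifies the corollary in one line: apply Proposition~\ref{prop:lazy_linear_justification} to each of the $B$ target atoms and take the minimum of the finitely many positive radii, so that the two finite transport problems have identical cost matrices. Your additional bookkeeping correctly fills in details the paper leaves implicit: halving the radius to pass from the open to the closed ball, showing that the exceptional null set and the aligned vector $\Phi_{\hat\rho}(\mathbf{x}'_i)$ do not depend on the chosen ordering of the atoms, and checking that merged atoms leave the optimal value unchanged.
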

This statement applies directly to the practical setting, where 
$\hat{\bnu}$ is our full training dataset. It says that for a source metameasure obtained by sampling pre-ordered vectors from $\operatorname{M2Vec}_{\sharp}\hat{\bmu}$ centred around $\operatorname{M2Vec}(\hat{\rho})$ (in the Euclidean sense), we \emph{only} need to permute our empirical measure training samples with respect to $\operatorname{M2Vec}(\hat{\rho})$, once before training. Notably, $\operatorname{M2Vec}_{\sharp}\hat{\bmu}$ simply describes sampling via a vectorized $\overboldarrow{\mu} \in \ProbTwo(\R^{d \times N})$ and 
$\mathbf{x}_{\hat{\rho}} = \operatorname{M2Vec}(\hat{\rho}) \simeq \hat{\rho}$
as described in Section \ref{subsec:gen_w_fm}.

Once we employ this discretization for $\overboldarrow{\mu}$ with 
$\supp (\overboldarrow{\mu}) \subset B(\mathbf{x}_{\hat{\rho}}, C_{\hat{\bnu}})  $, 
we are in a classical Euclidean flow matching regime on $\R^{d \times N}$. If we choose the standard independent coupling here, this corresponds to the Wasserstein flow matching regime of employing $\widehat{\operatorname{OP}}_{\text{\textbf{ind}}}$
and $\widehat{\operatorname{IP}}_{\W}$. Opting for classical OT flow matching here, results in approximating the WoW flow via $\widehat{\operatorname{OP}}_{\bW}$ and $\widehat{\operatorname{IP}}_{\W}$.

Inspired by this observation, we design our lazy linear plans.
Based on a reference measure $\hat{\rho} \in \ProbTwo^N(\R^d)$ with vectorization $\mathbf{x}_{\hat{\rho}} \simeq \hat{\rho}$ satisfying $\mathbf{x}_{\hat{\rho}, i} \neq \mathbf{x}_{\hat{\rho}, i'}$ for $i \neq i'$, and 
a reference metameasure $\hat{\bmu}_{\hat{\rho}} \coloneqq \operatorname{Vec2M}_{\sharp} \overboldarrow{\mu}$ with $\overboldarrow{\mu} \in \ProbTwo(\R^{d \times N})$ heavily centred around $\mathbf{x}_{\hat{\rho}}$, we introduce our lazy linear divergence as
\begin{align}
\operatorname{LLW}_2^2(\hat{\mu},\hat{\nu})
\coloneqq
\int \| x - x'\|^2 \,\d \bigl((\Id, T_{\hat{\nu},\hat{\rho}})_\sharp \hat{\mu}\bigr)(x, x')
=
\|\mathbf{x}_{\hat{\mu}} - P_{\mathbf{x}_{\hat{\nu}},\mathbf{x}_{\hat{\rho}}}\, \mathbf{x}_{\hat{\nu}}\|^2,
\end{align}
where again $P_{\mathbf{x}_{\hat{\nu}},\mathbf{x}_{\hat{\rho}}}$ is the permutation aligning $\mathbf{x}_{\hat{\nu}}$ to $\mathbf{x}_{\hat{\rho}}$. 
Crucially, this permutation depends only on $\hat{\rho}$ and $\hat{\nu}$ and not on $\hat{\mu}$. Thus, it can be precomputed once per sample $\hat{\nu}$ before training.

Due to Corollary~\ref{corr:lazy_linear_justification}, 
we know that $\operatorname{LLW}^2_2(\hat{\mu},\hat{\nu}) = \W^2_2(\hat{\mu},\hat{\nu})$ for $\hat{\bmu}_{\hat{\rho}} \coloneqq\operatorname{Vec2M}_{\sharp} \overboldarrow{\mu}$ with `high' probability if $\overboldarrow{\mu}$ is sufficiently centred around $\mathbf{x}_{\hat{\rho}}$.
Now, we may use this to approximate WoW transport plans via optimal outer and inner plans.
Consequently for suitably concentrated $\hat{\mu}_i \sim \hat{\bmu}_{\hat{\rho}}$
with randomly batched $\hat{\bmu} = 1/B \sum_{i=1}^B \delta_{ \hat \mu_i}$
, we set our transport plan estimator $
\widehat{{\operatorname{OP}}}_{\text{\textbf{LLW}}}
$
according to 
$
\argmin_{{\operatorname{OP}_{\hat \bmu, \hat \bnu}} \in {\rm C}(\hat \bmu, \hat \bnu)}\int \operatorname{LLW}^2_2(\mu,\nu)  \d {\operatorname{OP}_{\hat{\bmu}, \hat \bnu}}$
and
$\widehat{\operatorname{IP}}_{\text{LLW}}(\hat{\mu}, \hat{\nu}) = (\Id, T_{\hat{\nu},\hat{\rho}})_\sharp \hat{\mu}$.

\section{Experiments}
Note that our general approach already covers the Wasserstein flow matching model \cite{haviv2025wasserstein}, which has recently demonstrated SOTA performance in point cloud generation.
Throughout our experiments, we therefore focus on explicit `apples-to-apples' comparison: we only aim to compare our couplings based on a fixed architecture and training setup. 
Ablation studies are deferred to Appendix~\ref{sec:ablation_exp}.
\subsection{Implementation and Evaluation}
\label{subsec:main_text_implementation}
\paragraph{Networks.} 
To reduce any bias induced by a specific architecture, we employ two distinct architectural choices. 
As a simple \textbf{baseline model}, we employ a shared $\operatorname{MLP}_\theta$
conditioned on per-point local features $F_\ell(\mathbf{x})_i$ \cite{piening2025novel,sato2020fast}, i.e.,
sorted pairwise distances and neighbour positions,
and global features $F_g(\mathbf{x})$, i.e., mean and covariance of $(\mathbf{x}_1, \ldots, \mathbf{x}_N)$,
together with a sinusoidal time embedding $\gamma(t)$, followed by
a global self-attention layer $\operatorname{A}_\theta$. This results in
$
V_\theta(t, \mathbf{x}_t)
= \operatorname{A}_\theta(M_1, \ldots, M_N)$
with $M_i = \operatorname{MLP}_\theta((\mathbf{x}_t)_i,\,F_\ell(\mathbf{x}_t)_i,\,F_g(\mathbf{x}_t),\,\gamma(t))$.
As a more standard architecture, we additionally employ a \textbf{transformer} \cite{vaswani2017attention} without positional encodings.
As highlighted in Section~\ref{subsec:gen_w_fm}, both architectures are $S_N$-equivariant. More details can be found in Appendix~\ref{sec:supp_implementation_details}.

\paragraph{Reference Measure.}
Similarly to most existing linearized OT methods \cite{beier2021linear,nguyen2023linearfused,sarrazin2024linearized_lot_schmitzer,wang2013linear_lot}, we rely on
established solvers \cite{flamary2021pot} to approximate the
(finite) barycenter
$
    \hat{\rho} \in \argmin_{\rho \in \ProbTwo^N(\R^d)} \sum_{b=1}^{B_{\text{ref}}} \W^2_2(\rho, \hat{\nu}_b),$ where $\hat{\nu}_b \sim \hat{\bnu},
$
and we generally employ $B_{\text{ref}}=8$.

\paragraph{Source Metameasure.}
Due to the importance of the latent source for flow-based models \cite{chemseddine2026adapting_noise_data_quantile,hagemann2021stabilizing,pandey2025heavytailed_t_distribution_prior}, we investigate three potential choices for our source metameasure
$\hat{\bmu}$ parametrized via $\overboldarrow{\mu}$.
As a i) baseline \textbf{pure noise metameasure}, we employ independent and identically distributed (i.i.d.) entries $\mathbf{x}_{i,j} \sim \mathcal{N}(0, \sigma^2)$, $i=1, \ldots, N$ and $j=1, \ldots, d$, for $\mathbf{x} \sim \overboldarrow{\mu}$.
To avoid the collapse to some $\hat{\bmu} \to \delta_{\mathcal{N}(0, \sigma^2 \operatorname{I}_d)}$ in $\ProbTwo(\ProbTwo(\R^d))$ for $N \to \infty$ \cite{haviv2025wasserstein}, we sample $\sigma \sim \mathrm{Unif}[\underline{\sigma}, \overline{\sigma}]$ independently per sample, see supplementary Experiment~\ref{subsec:latent_pure_noise_ablation}.
However, to enable our lazy linear approach, we theoretically require 
concentration around $\mathbf{x}_{\hat{\rho}}$, see Corollary~\ref{corr:lazy_linear_justification}. 
Therefore, we alternatively use ii) our \textbf{barycentric noise metameasure} $\overboldarrow{\mu}$, defined by sampling $\mathbf{x} \sim \overboldarrow{\mu}$ entry-wise via $\mathbf{x}_{i,j} \sim \mathcal{N}\bigl((\mathbf{x}_{\hat{\rho}})_{i,j},\, \sigma^2\bigr)$ independently for $i = 1, \ldots, N$, $j = 1, \ldots, d$, see Experiments~\ref{subsec:noise2mnist_64_experiment} and  \ref{subsec:shapenet_experiment}.
In line with neural OT \cite{de2024schrodinger_neuralot_i2i,kim2024unpaired_neuralot_i2i, korotin2023neural_neuralot_i2i}, we further employ iii) data-based \textbf{empirical metameasures} to investigate unsupervised couplings between data distributions, see Experiments \ref{subsec:circular_toy_experiment} and \ref{subsec:mnist2usps}.

\paragraph{Evaluation.}
Since we primarily focus on the case $d=2$ or $d=3$, we follow the evaluation protocol in point cloud generation \cite{haviv2025wasserstein,hui2025notsooptimal_point_cloud_flow_matching,yang2019pointflow_ot_nn}.
Thus, we are mainly interested in the accuracy of a nearest neighbour classifier based on the Chamfer (`Chamfer-NNA') or the Wasserstein distance (`OT-NNA').
Here, we generate $\{\mathbf{x}^{b}\}_{b=1}^B$ point clouds and compare them to $\{\mathbf{x}^{b}\}_{b=B+1}^{2B}$ real test set point clouds.
Based on the pairwise distance matrix $(d(\mathbf{x}^b, \mathbf{x}^{b'}))_{b, b'=1}^{2B}$, we then predict whether $\mathbf{x}^b$ is generated or real based on the labels of all other point clouds. Averaging results for all $b=1, \ldots, 2B$ gives the nearest neighbour accuracy, where Chamfer-NNA is obtained via the Chamfer distance $d(\mathbf{x}, \mathbf{x}') = \sum_{i=1}^N \min_{j} \|\mathbf{x}_i - \mathbf{x}'_j\|^2 + \sum_{i=1}^N \min_{j} \|\mathbf{x}'_i - \mathbf{x}_j\|^2$
and OT-NNA via $d(\mathbf{x}, \mathbf{x}')= \W^2_2(\operatorname{Vec2M}(\mathbf{x}), \operatorname{Vec2M}(\mathbf{x}'))$. The best NNA is $0.5$, and the worst one is $1.0$.

We evaluate each NNA with $512$ generated and $512$ real inputs and report the average NNA and its standard deviation over $5$ repetitions. Moreover, we perform this evaluation with $5$, $25$ and $125$ Euler steps for the generation to investigate the stability of numerical ODE integration.

 \subsection{Circular Point Clouds (2D, Baseline Model, Resolution: 30)}
\label{subsec:circular_toy_experiment}
We experiment on a synthetic 2D metameasure transport task to
illustrate the importance of the outer coupling.
Circular source clouds $\hat{\mu}\sim\hat{\bmu}$ are drawn as
$
\hat{\mu}
=
\frac{1}{30}\sum_{i=1}^{30}\delta_{(h,\,0)\,+\,r_s\,\varepsilon_i/\|\varepsilon_i\|}$,
for 
i.i.d.\ 
$
\varepsilon_i{\sim}\mathcal{N}(0,I_2)$
and $h\sim\mathrm{Unif}[-20, 20]$,
i.e., $N=30$ points placed on a ring of radius $r_s=0.5$ centred at the point $(h,0)$.
Target clouds $\hat{\nu}\sim\hat{\bnu}$ follow the same construction but with
radius $r_t=2.0$ and a vertical offset of $10$, i.e., vertically translated circles with bigger radius.

We train our baseline MLP model for $7500$ gradient steps using Adam (step size $5\cdot10^{-4}$,
batch size $B=8$) and compare three coupling strategies:
(i) $(\widehat{\operatorname{OP}}_{\text{\textbf{ind}}},\widehat{\operatorname{IP}}_{\mathrm{ind}})$,
(ii) $(\widehat{\operatorname{OP}}_{\text{\textbf{ind}}},\widehat{\operatorname{IP}}_{\mathrm{W}})$,
and
(iii) $(\widehat{\operatorname{OP}}_{\bW},\widehat{\operatorname{IP}}_{\mathrm{W}})$.
Looking at 
Figure~\ref{fig:gaussian_mixture_trajs},
we see similar curved trajectories for the first and second coupling strategies. 
In contrast, our approximated WoW field gives rather straight paths.

\begin{figure}[t]
    \centering
    \begin{subfigure}[t]{0.315\textwidth}
        \centering
        \fbox{\includegraphics[width=0.92\textwidth, trim=8mm 45mm 8mm 45mm, clip]{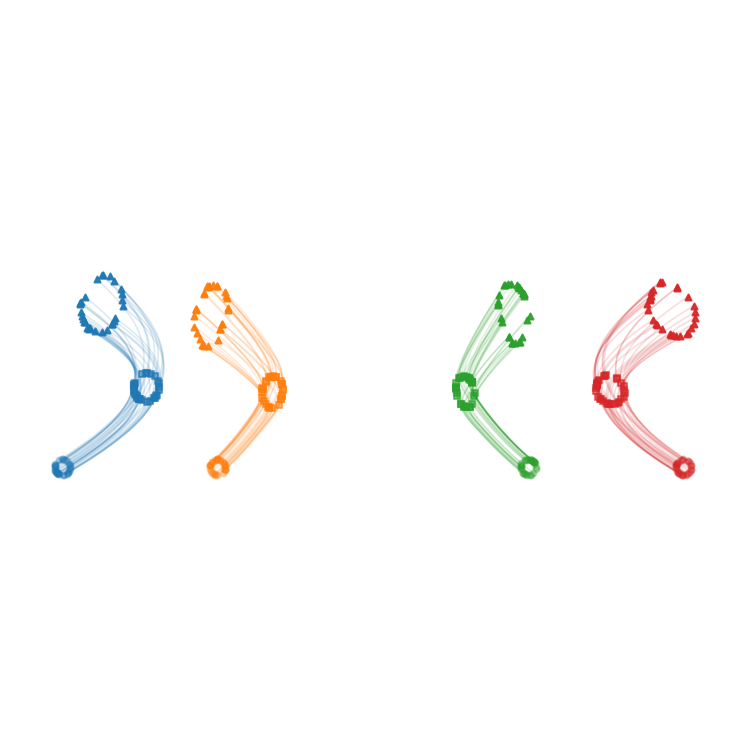} }
        \caption{$(\widehat{\operatorname{OP}}_{\text{\textbf{ind}}},\,\widehat{\operatorname{IP}}_{\mathrm{ind}})$}
        \label{subfig:gmm_ind_ind}
    \end{subfigure}
    \hfill
    \begin{subfigure}[t]{0.315\textwidth}
        \centering
        \fbox{\includegraphics[width=0.92\textwidth, trim=8mm 45mm 8mm 45mm, clip]{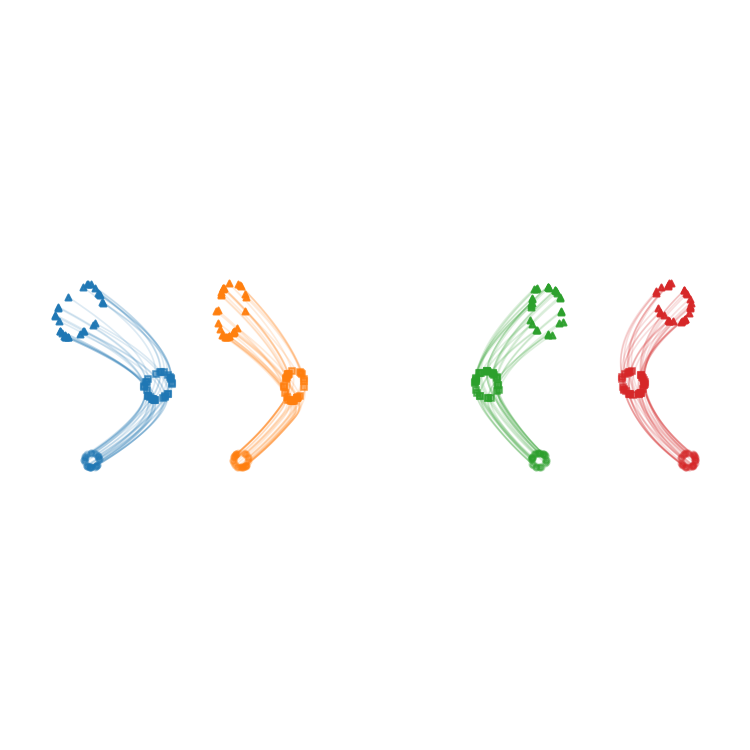}}
        \caption{$(\widehat{\operatorname{OP}}_{\text{\textbf{ind}}},\,\widehat{\operatorname{IP}}_{\W})$}
            \label{subfig:gmm_ind_ot}
    \end{subfigure}
    \hfill
    \begin{subfigure}[t]{0.315\textwidth}
        \centering
        \fbox{\includegraphics[width=0.92\textwidth, trim=8mm 45mm 8mm 45mm, clip]{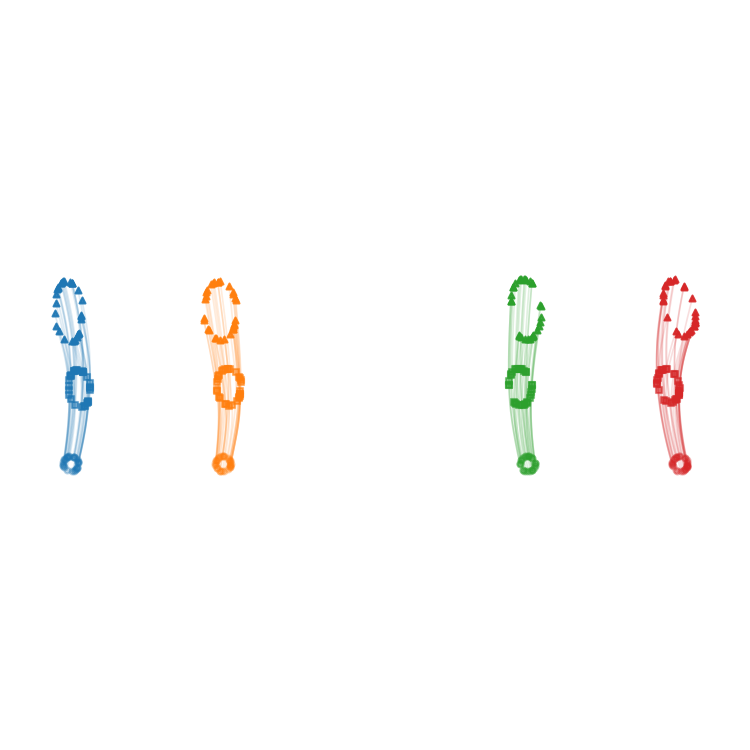}}
        \caption{$(\widehat{\operatorname{OP}}_{\bW},\,\widehat{\operatorname{IP}}_{\W})$}
            \label{subfig:gmm_ot_ot}
    \end{subfigure}
    \caption{Learned trajectories for moving horizontally aligned source circles (bottom) to larger circles with vertical offset (top) under three coupling strategies. Each colour corresponds to a source cloud at a fixed horizontal alignment $h\in\{-20,-10,10,20\}$, and lines show the particle paths. 
    The impact of the inner coupling seems limited, with curved and hard-to-learn trajectories emerging for \ref{subfig:gmm_ind_ind} and \ref{subfig:gmm_ind_ot}, whereas  \ref{subfig:gmm_ot_ot} exhibits straightened paths.
    }
    \label{fig:gaussian_mixture_trajs}
\end{figure}

\subsection{MNIST Point Clouds (2D, Baseline Model, Resolution: 64)}
\label{subsec:noise2mnist_64_experiment}
Similar to \cite{haviv2025wasserstein}, we experiment on point clouds obtained from MNIST images \cite{lecun1998gradient}.
Therefore, we view the normalized pixel values of each MNIST image as the intensities of a 2D histogram on $[0, 1]^2$, allowing us to sample $N=64$ points from it.
This results in fixed-size point clouds encoded as $\mathbf{x} \in \R^{2 \times 64}$. 
We train our baseline model for $50$ epochs using Adam (step size: $5 \cdot 10^{-4}$).
Evaluating our barycentric noise source ($\underline{\sigma}=0.05, \overline{\sigma}=0.15$), the results displayed in Table~\ref{tab:mnist_barycenter_noise_results} clearly show performance gains from OT-based plans on both levels. Even for this small dataset, we see a computational explosion of the train time from $(\widehat{\operatorname{OP}}_{\bW}, \widehat{\operatorname{IP}}_{\mathrm{W}})$. 
In contrast, our linearized 
$(\widehat{\operatorname{OP}}_{\text{\textbf{LLW}}}, \widehat{\operatorname{IP}}_{\text{LLW}})$
gives comparable or better results, while being significantly faster to train. 
While falling behind the other two OT solvers, $(\widehat{\operatorname{OP}}_{\text{\textbf{SW}}}, \widehat{\operatorname{IP}}_{\text{SW}})$ balances speed and performance, while not being tied to a specific source metameasure.
Notably, the independent coupling has the worst performance, but reduces the performance gap for enough Euler steps.

\begin{table}[t]
\centering
\setlength{\tabcolsep}{3.2pt}
\renewcommand{\arraystretch}{0.9}
\begin{tabular}{llccc|ccc|c}
\toprule
 &  & \multicolumn{3}{c}{Chamfer-NNA $\downarrow$} & \multicolumn{3}{c}{OT-NNA $\downarrow$} & \textbf{Train} \\
\textbf{OP} & \textbf{IP} & Euler-5 & Euler-25 & Euler-125 & Euler-5 & Euler-25 & Euler-125 &  \\
\midrule
$\widehat{\operatorname{OP}}_{\mathrm{ind}}$ 
& $\widehat{\operatorname{IP}}_{\mathrm{ind}}$ 
& $0.94{\scriptstyle\pm0.01}$ & $0.70{\scriptstyle\pm0.01}$ & $0.67{\scriptstyle\pm0.01}$
& $0.95{\scriptstyle\pm0.01}$ & $0.68{\scriptstyle\pm0.02}$ & $0.62{\scriptstyle\pm0.03}$ 
& 3--4h \\

$\widehat{\operatorname{OP}}_{\mathrm{ind}}$ 
& $\widehat{\operatorname{IP}}_{\mathrm{W}}$ 
& $0.77{\scriptstyle\pm0.01}$ & $0.69{\scriptstyle\pm0.03}$ & $0.66{\scriptstyle\pm0.02}$
& $0.71{\scriptstyle\pm0.01}$ & $0.64{\scriptstyle\pm0.01}$ & $0.61{\scriptstyle\pm0.02}$ 
& 4--5h \\

$\widehat{\operatorname{OP}}_{\mathrm{ind}}$ 
& $\widehat{\operatorname{IP}}_{\mathrm{SW}}$ 
& $0.79{\scriptstyle\pm0.01}$ & $\underline{0.63{\scriptstyle\pm0.01}}$ & $0.62{\scriptstyle\pm0.01}$
& $0.75{\scriptstyle\pm0.01}$ & $0.61{\scriptstyle\pm0.01}$ & $0.59{\scriptstyle\pm0.02}$ 
& 3--4h \\

$\widehat{\operatorname{OP}}_{\mathrm{ind}}$ 
& $\widehat{\operatorname{IP}}_{\mathrm{LLW}}$ 
& $0.75{\scriptstyle\pm0.01}$ & $0.64{\scriptstyle\pm0.01}$ & $0.62{\scriptstyle\pm0.01}$
& $0.69{\scriptstyle\pm0.01}$ & $\underline{0.59{\scriptstyle\pm0.01}}$ & $0.58{\scriptstyle\pm0.02}$ 
& 3--4h \\

$\widehat{\operatorname{OP}}_{\text{\textbf{W}}}$ 
& $\widehat{\operatorname{IP}}_{\mathrm{W}}$ 
& $\underline{0.72{\scriptstyle\pm0.01}}$ & $0.65{\scriptstyle\pm0.02}$ & $\mathbf{0.62{\scriptstyle\pm0.02}}$
& $\underline{0.66{\scriptstyle\pm0.01}}$ & $\mathbf{0.59{\scriptstyle\pm0.03}}$ & $\underline{0.58{\scriptstyle\pm0.01}}$ 
& 9--10h \\

$\widehat{\operatorname{OP}}_{\text{\textbf{SW}}}$ 
& $\widehat{\operatorname{IP}}_{\mathrm{SW}}$ 
& $0.79{\scriptstyle\pm0.02}$ & $0.65{\scriptstyle\pm0.02}$ & $0.64{\scriptstyle\pm0.01}$
& $0.76{\scriptstyle\pm0.01}$ & $0.63{\scriptstyle\pm0.02}$ & $0.62{\scriptstyle\pm0.01}$ 
& 4--5h \\

$\widehat{\operatorname{OP}}_{\text{\textbf{LLW}}}$ 
& $\widehat{\operatorname{IP}}_{\mathrm{LLW}}$ 
& $\mathbf{0.72{\scriptstyle\pm0.01}}$ & $\mathbf{0.63{\scriptstyle\pm0.02}}$ & $\underline{0.62{\scriptstyle\pm0.02}}$
& $\mathbf{0.65{\scriptstyle\pm0.00}}$ & $0.59{\scriptstyle\pm0.02}$ & $\mathbf{0.57{\scriptstyle\pm0.02}}$ 
& 3--4h \\

\bottomrule
\end{tabular}
\caption{MNIST generation quality and training time for barycentric noise. Best score is bold and second-best is underlined (computed before rounding).}
\label{tab:mnist_barycenter_noise_results}
\end{table}

\begin{figure}[t]
    \centering
    \begin{subfigure}[b]{0.32\textwidth}
        \centering
\includegraphics[width=\textwidth]{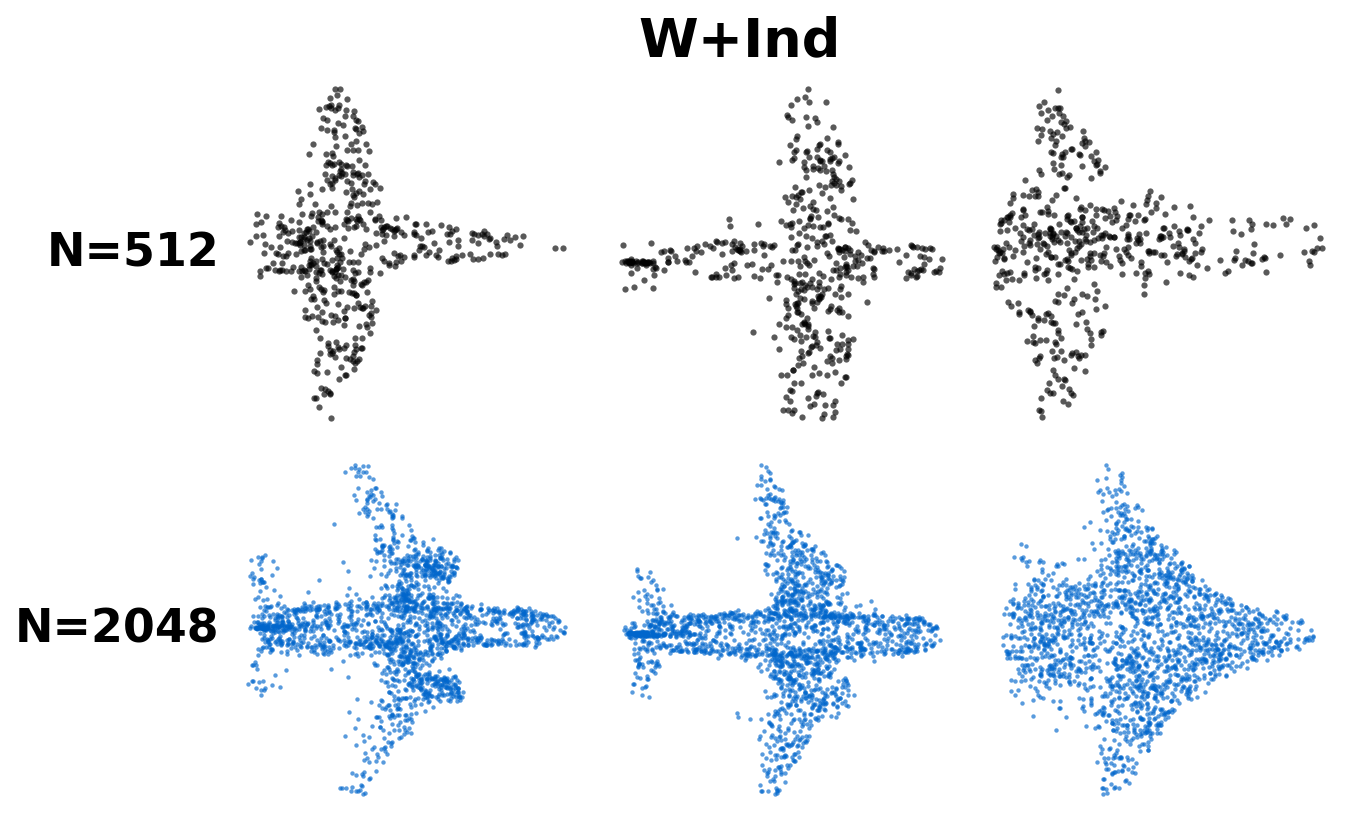}
        \caption{$(\widehat{\operatorname{OP}}_{\text{\textbf{ind}}},\widehat{\operatorname{IP}}_{\mathrm{W}})$}
        \label{subfig:wfm_planes}
    \end{subfigure}
    \hfill
    \begin{subfigure}[b]{0.32\textwidth}
        \centering
\includegraphics[width=\textwidth]{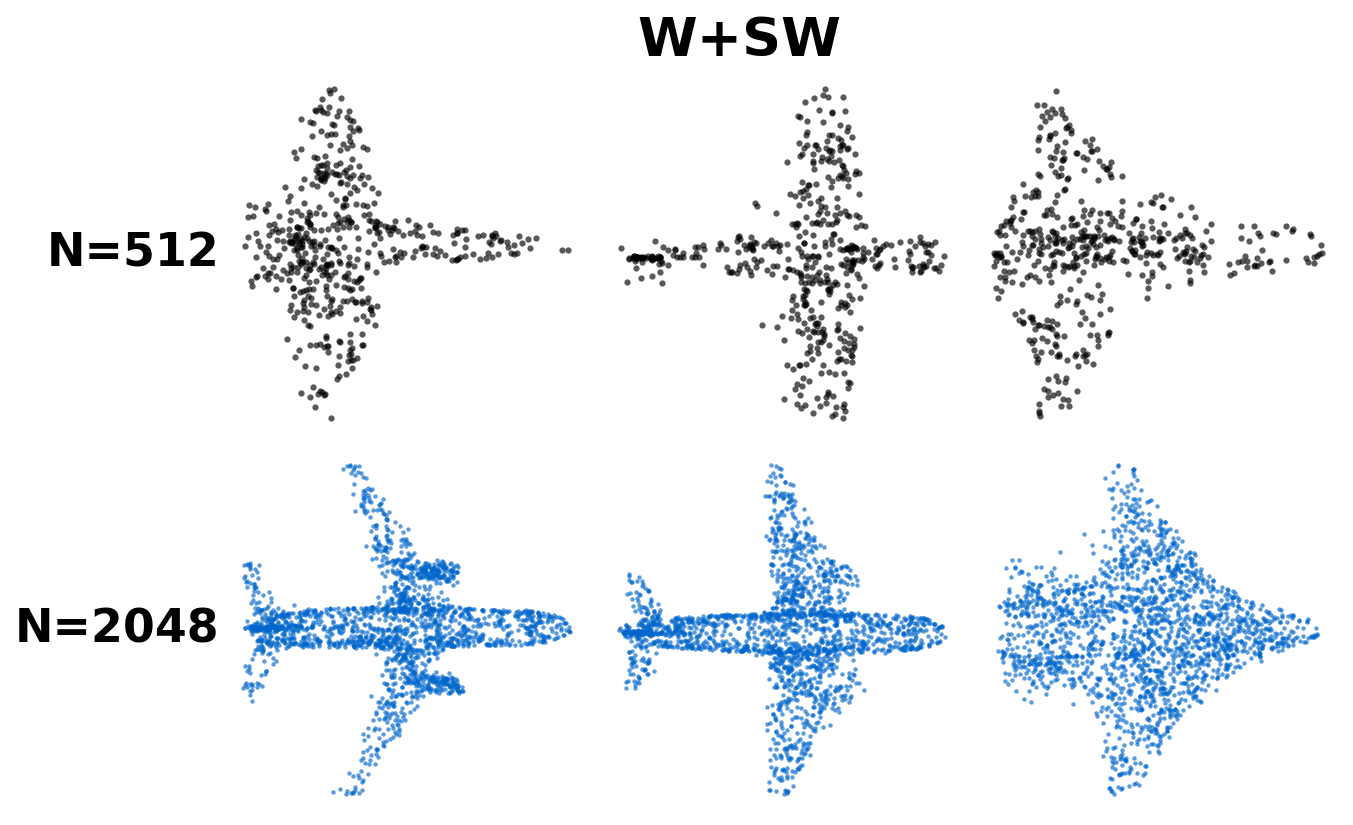}
        \caption{$(\widehat{\operatorname{OP}}_{\text{\textbf{SW}}},\widehat{\operatorname{IP}}_{\mathrm{W}})$}
                \label{subfig:sw_planes}
    \end{subfigure}
    \hfill
    \begin{subfigure}[b]{0.32\textwidth}
        \centering
\includegraphics[width=\textwidth]{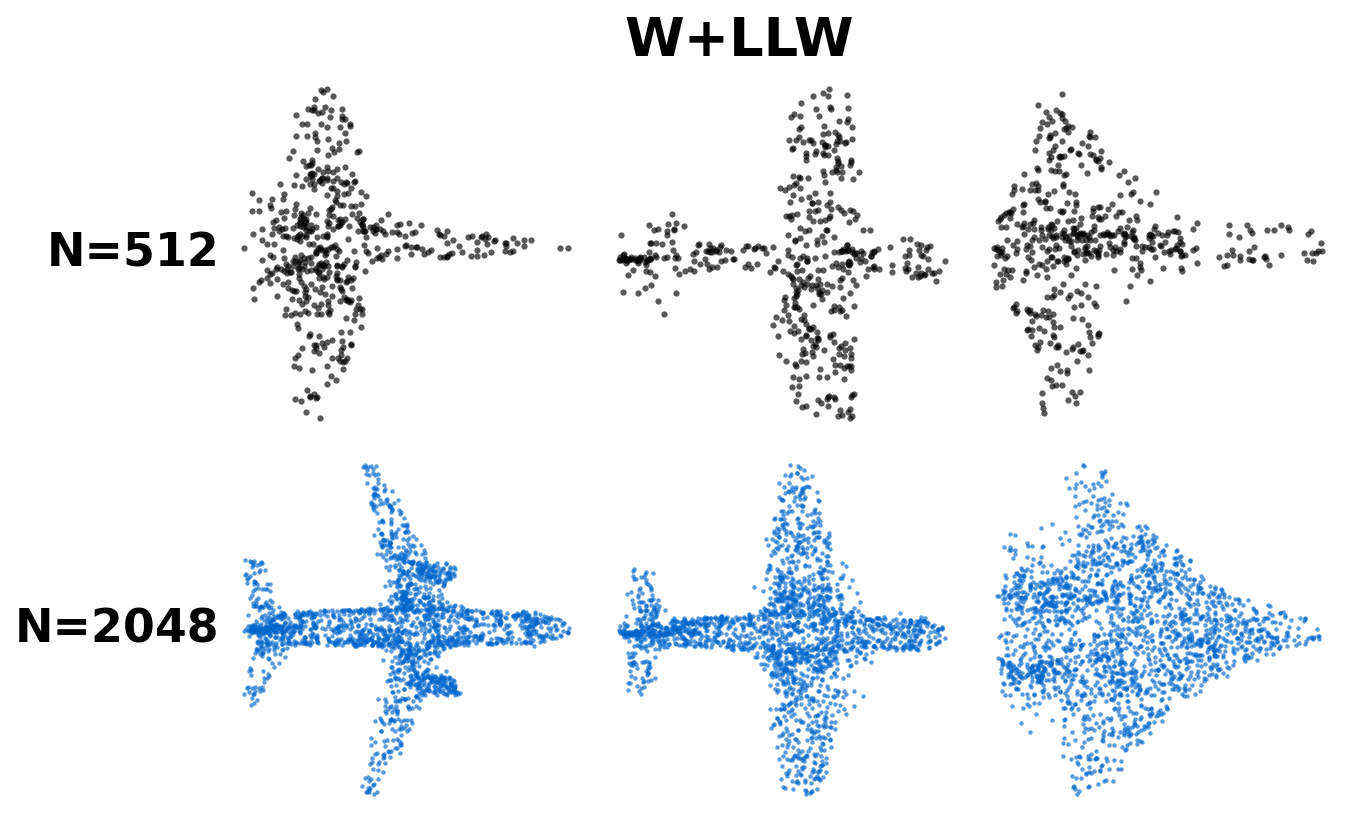}
        \caption{$(\widehat{\operatorname{OP}}_{\text{\textbf{LLW}}},\widehat{\operatorname{IP}}_{\mathrm{W}})$}
            \label{subfig:llw_planes}
    \end{subfigure}
    \caption{Generated ShapeNet planes based on $\mathbf{ind}$, $\bSW$ and $\mathbf{LLW}$ outer couplings.
    The first setting (\ref{subfig:wfm_planes}) corresponds to the setting of Wasserstein (or $S_N$-equivariant) flow matching \cite{haviv2025wasserstein,klein2023equivariant_flow_matching}. All predictions are based on the same fixed samples from our barycentric source. While all models produce similar outputs, zooming in on Subfigure \ref{subfig:wfm_planes} shows more noise artifacts at $N=2048$.}
    \label{fig:generated_planes_comparison}
\end{figure}

\subsection{ShapeNet Airplanes (3D, Transformer, Resolution: 512)}
\label{subsec:shapenet_experiment}
Next, we investigate 3D plane-shaped point clouds in the form of planes obtained from the ShapeNet dataset \cite{yi2016scalable_shapenet}. 
We use a standard transformer and resolution $N=512$ 
resulting in min-max normalized point clouds encoded as $\mathbf{x} \in [0, 1]^{3 \times 512} \subset \R^{3 \times 512}$. 
Again, we employ barycentric noise ($\underline{\sigma}=0.15, \overline{\sigma}=0.45$) and train each model for 500 epochs (Adam, step size: $5 \cdot 10^{-4}$). We evaluate different outer transport plans while fixing the inner transport plan as $\operatorname{IP}_{\W}$. Due to the impact of the high-resolution setup on computation, we refrain from evaluating the actual WoW setup.

Although trained at $N=512$, our discretization-invariant transformer generalizes to high-resolution settings, see Figure~\ref{fig:generated_planes_comparison} for $N=2048$.
Closely looking at the generated planes,
we see that the Wasserstein flow matching setup in Subfigure~\ref{subfig:wfm_planes} produces more artifacts and blurred boundaries, while OT-based outer couplings lead to more stable generation, see 
Subfigure~\ref{subfig:sw_planes} and \ref{subfig:llw_planes}.

Quantitative results in Table~\ref{tab:shapenetpart_Airplane_pt_barycenter_noise_std0.3_N512} highlight 
the importance of the outer coupling. Both the sliced and linearized outer coupling boost performance.
Due to the complexity of NNA at $N=512$ \cite{piening2025slicing_wow}, we evaluate randomly downsampled generated $512$-point planes to $N=256$.

\begin{table}[ht]
\centering
    \setlength{\tabcolsep}{3.5pt}
    \renewcommand{\arraystretch}{0.9}
    \begin{tabular}{llccc|ccc|c}
    \toprule
     &  & \multicolumn{3}{c}{Chamfer-NNA $\downarrow$} & \multicolumn{3}{c}{OT-NNA $\downarrow$} & \textbf{Train} 
     \\
    \textbf{OP} & \textbf{IP} & Euler-5 & Euler-25 & Euler-125 & Euler-5 & Euler-25 & Euler-125 &  
    \\
    \midrule
    $\widehat{\operatorname{OP}}_{\text{\textbf{ind}}}$ & $\widehat{\operatorname{IP}}_{\mathrm{W}}$ 
    & $0.79{\scriptstyle\pm0.01}$ & $0.68{\scriptstyle\pm0.01}$ & ${0.66{\scriptstyle\pm0.01}}$ 
    & $0.77{\scriptstyle\pm0.01}$ & $0.65{\scriptstyle\pm0.01}$ & $\underline{0.64{\scriptstyle\pm0.01}}$ &  
    30--35h
    \\
    $\widehat{\operatorname{OP}}_{\text{\textbf{SW}}}$ & $\widehat{\operatorname{IP}}_{\mathrm{W}}$ 
    & $\mathbf{0.74{\scriptstyle\pm0.02}}$ & $\mathbf{0.60{\scriptstyle\pm0.01}}$ & $\mathbf{0.59{\scriptstyle\pm0.01}}$ 
    & $\mathbf{0.72{\scriptstyle\pm0.01}}$ & $\underline{0.58{\scriptstyle\pm0.01}}$ & $\mathbf{0.55{\scriptstyle\pm0.01}}$ & 
    30--35h
    \\
    $\widehat{\operatorname{OP}}_{\text{\textbf{LLW}}}$ & $\widehat{\operatorname{IP}}_{\mathrm{W}}$ 
    & $\underline{0.74{\scriptstyle\pm0.01}}$ & $\underline{0.67{\scriptstyle\pm0.01}}$ & $\underline{0.66{\scriptstyle\pm0.01}}$ 
    & $\underline{0.72{\scriptstyle\pm0.01}}$ & $\underline{0.65{\scriptstyle\pm0.02}}$ & ${0.64{\scriptstyle\pm0.01}}$ &  
    30--35h
    \\
    \bottomrule
    \end{tabular}
\caption{ShapeNet airplane generation quality for barycentric noise.}
\label{tab:shapenetpart_Airplane_pt_barycenter_noise_std0.3_N512}
\end{table}

\subsection{MNIST\texorpdfstring{$\to$}{to}UPSP Transport (2D, Transformer, Multi-Resolution: [64, 128, 256])}
\label{subsec:mnist2usps}

\begin{table}[h]
\centering
    \setlength{\tabcolsep}{2.6pt}
    \renewcommand{\arraystretch}{0.9}
    \begin{tabular}{llccc|ccc|c}
    \toprule
     &  & \multicolumn{3}{c}{Chamfer-NNA $\downarrow$} & \multicolumn{3}{c}{OT-NNA $\downarrow$} & \textbf{Train} \\
    \textbf{OP} & \textbf{IP} & Euler-5 & Euler-25 & Euler-125 & Euler-5 & Euler-25 & Euler-125 &  \\
    \midrule
    $\widehat{\operatorname{OP}}_{\text{\textbf{ind}}}$ & $\widehat{\operatorname{IP}}_{\mathrm{ind}}$ & $1.00{\scriptstyle\pm0.00}$ & $0.79{\scriptstyle\pm0.01}$ & $0.70{\scriptstyle\pm0.00}$ & $0.99{\scriptstyle\pm0.00}$ & $0.67{\scriptstyle\pm0.01}$ & $\underline{0.61{\scriptstyle\pm0.01}}$ & 50--60h\\
    $\widehat{\operatorname{OP}}_{\bW}$ & $\widehat{\operatorname{IP}}_{\mathrm{W}}$ & $\mathbf{0.64{\scriptstyle\pm0.01}}$ & $\mathbf{0.60{\scriptstyle\pm0.01}}$ & $\mathbf{0.60{\scriptstyle\pm0.01}}$ & $\mathbf{0.62{\scriptstyle\pm0.02}}$ & $\mathbf{0.60{\scriptstyle\pm0.02}}$ & $\mathbf{0.61{\scriptstyle\pm0.02}}$ & 120--130h \\
    $\widehat{\operatorname{OP}}_{\text{\textbf{SW}}}$ & $\widehat{\operatorname{IP}}_{\mathrm{SW}}$ & $\underline{0.94{\scriptstyle\pm0.01}}$ & $\underline{0.65{\scriptstyle\pm0.01}}$ & $\underline{0.62{\scriptstyle\pm0.01}}$ & $\underline{0.84{\scriptstyle\pm0.01}}$ & $\underline{0.64{\scriptstyle\pm0.01}}$ & $0.63{\scriptstyle\pm0.01}$ & 50--60h \\
    \bottomrule
    \end{tabular}
\caption{Generation quality for MNIST$\to$USPS point clouds.}
\label{tab:usps_pt_mnist_paired_stdnone_N128}
\end{table}

Beyond generation from noise, we evaluate our framework on a cross-domain transport task.
We use point clouds sampled from MNIST as the 
empirical source metameasure, and point clouds sampled from USPS \cite{uspsdataset_hull1994} as the
target metameasure, treating both datasets analogously to the MNIST setting above. In particular, normalized pixel intensities define a 2D histogram on $[0,1]^2$
from which $N=64$,$128$, or $256$ points are sampled. Thus, we randomly obtain $\mathbf{x}\in\R^{2\times 64}$, $\mathbf{x}\in\R^{2\times 128}$
or  $\mathbf{x}\in\R^{2\times 256}$.
We train a standard transformer for $350$ epochs using Adam (step size: $5\cdot 10^{-4}$) using different coupling strategies.
Notably, a domain-adaptive model should learn that the USPS digits are bolder than the MNIST digits.

Evaluating the quality of translated samples at $N=256$, Table~\ref{tab:usps_pt_mnist_paired_stdnone_N128}
 shows the clear advantage of OT-based couplings.
Especially for few-step generation (Euler-5),
the approximation of the WoW plan clearly gives the best results. 
In this few-step case, the sliced coupling drastically improves upon the independent coupling as well, while adding almost no computational training overhead in contrast to the WoW approximation.
For sufficiently discretized flow ODEs,
the NNA metrics stabilize, see the comparison between Euler-25 and Euler-125, and the independent coupling becomes competitive. 

Figure~\ref{fig:mnist_usps_interp} visualizes learned interpolations via kernel density estimation \cite{parzen1962estimation_kde}, see
Appendix~\ref{subsec:kde_viz_supp}. While $(\widehat{\operatorname{OP}}_{\bW}, \widehat{\operatorname{IP}}_{\W})$ and $(\widehat{\operatorname{OP}}_{\text{SW}}, \widehat{\operatorname{IP}}_{\text{SW}})$ preserve digit structure at the midpoint ($t=0.5$), the independent coupling $(\widehat{\operatorname{OP}}_{\text{\textbf{ind}}}, \widehat{\operatorname{IP}}_{\mathrm{ind}})$ yields amorphous intermediate densities.

\begin{figure}[h]
    \centering
    \begin{subfigure}[b]{0.495\textwidth}
        \centering
\includegraphics[width=\textwidth, trim=0 0 77.5cm 0, clip]{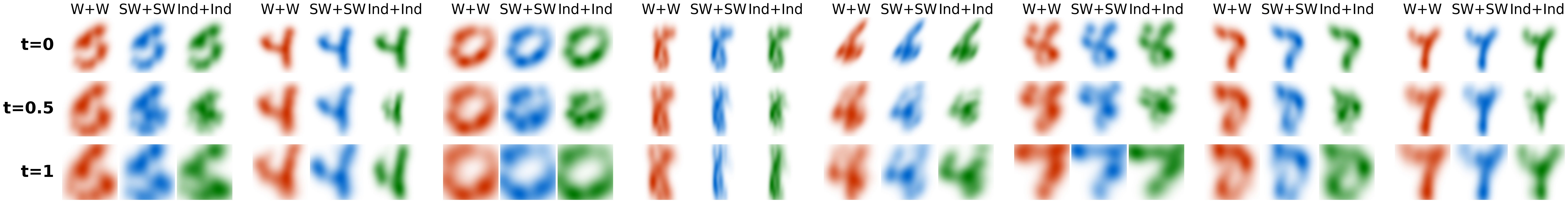}
        \caption{$N = 128$}
        \label{subfig:mnist_usps_lr}
    \end{subfigure}
    \hfill
    \begin{subfigure}[b]{0.495\textwidth}
        \centering
\includegraphics[width=\textwidth, trim=0 0 77.5cm 0, clip]{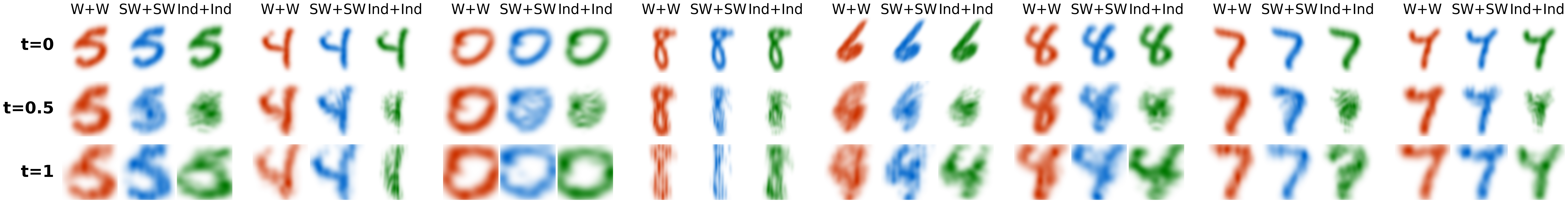}
        \caption{$N=4096$}
        \label{subfig:mnist_usps_hr}
    \end{subfigure}
    \caption{Learned interpolation for MNIST$\to$USPS with $N=128$ and out-of-training $N=4096$ for (i) $(\widehat{\operatorname{OP}}_{\bW}, \widehat{\operatorname{IP}}_{\W})$, (ii) $(\widehat{\operatorname{OP}}_{\text{SW}}, \widehat{\operatorname{IP}}_{\text{SW}})$ and (iii) $(\widehat{\operatorname{OP}}_{\text{\textbf{ind}}}, \widehat{\operatorname{IP}}_{\mathrm{ind}})$ visualized via estimated densities. MNIST digits ($t=0$) are mapped to emboldened USPS-like digits ($t=1$) for all couplings.,
    However,  
    $(\widehat{\operatorname{OP}}_{\bW}, \widehat{\operatorname{IP}}_{\W})$ and $(\widehat{\operatorname{OP}}_{\text{SW}}, \widehat{\operatorname{IP}}_{\text{SW}})$ midpoints ($t=0.5$) maintain digit structure for $N=4096$, whereas $(\widehat{\operatorname{OP}}_{\text{\textbf{ind}}}, \widehat{\operatorname{IP}}_{\mathrm{ind}})$ midpoints appear amorphous. Similar trajectories emerge for low-resolution (\ref{subfig:mnist_usps_lr}) and high-resolution (\ref{subfig:mnist_usps_hr}).
    }
    \label{fig:mnist_usps_interp}
\end{figure}

\section{Conclusion}

We introduced a flow matching framework on probability measures over probability measures, leveraging nested optimal transport to construct velocity fields that realize metameasure flows. 
These are induced by measures over transport plans, which we construct via coupled inner and outer couplings.
Scalable approximations via sliced and linear Wasserstein distances enable efficient training while preserving geometric structure, yielding a unified and tractable approach to flow-based generative modelling on sets and point clouds in Wasserstein spaces.
Since Brenier-type formulations arise in more general nested Wasserstein  \cite{beiglbock2025brenier_wow_theory} and conditional Wasserstein geometries \cite{chemseddine2025conditional}, extending our approach to generation of conditional measures and nested measures is a natural direction for future work. Moreover, exploring more general parameterizations of measures beyond empirical distributions or Gaussians \cite{haviv2025wasserstein,jiang2025bureswasserstein_flow_matching} remains an important avenue for further research.

\bibliographystyle{splncs04}
\bibliography{main}

@book{PeyreCuturi2019,
  author    = {Gabriel Peyr{\'{e}} and Marco Cuturi},
  title     = {Computational Optimal Transport: With Applications to Data Science},
  publisher = {Now Publishers},
  series    = {Foundations and Trends® in Machine Learning},
  volume    = {11},
  pages     = {355--607},
  year      = {2019},
  doi       = {10.1561/2200000073},
  isbn      = {978-1-68083-550-2}
}

@article{beier2021linear,
  title={On a linear {G}romov--{W}asserstein distance},
  author={Beier, Florian and Beinert, Robert and Steidl, Gabriele},
  journal={IEEE Transactions on Image Processing},
  volume={31},
  pages={7292--7305},
  year={2022},
  publisher={IEEE}
}

@book{boumal2023,
  title     = {An Introduction to Optimization on Smooth Manifolds},
  author    = {Boumal, Nicolas},
  year      = {2023},
  publisher = {Cambridge University Press}
}

@article{bonet2025busemann,
  title={Busemann functions in the {Wasserstein} space: existence, closed-forms, and applications to slicing}, 
  author={Clément Bonet and Elsa Cazelles and Lucas Drumetz and Nicolas Courty},
  year={2025},
  journal={arXiv preprint arXiv:2510.04579},
}

@book{Santambrogio2015,
  author       = {Santambrogio, Filippo},
  title        = {Optimal Transport for Applied Mathematicians: Calculus of Variations, PDEs and Modeling},
  hideseries       = {Progress in Nonlinear Differential Equations and Their Applications},
  hidevolume       = {87},
  publisher    = {Birkh{\"a}user},
  address = {Cham},
  year         = {2015},
  hideisbn         = {978-3-319-20828-2},
}

@phdthesis{bonnotte2013unidimensional,
  author       = {Nicolas Bonnotte},
  title        = {Unidimensional and Evolution Methods for Optimal Transportation},
  school       = {Université Paris Sud–Paris XI},
  year         = {2013},
  address      = {Orsay, France},
  note         = {PhD Thesis}
}

@misc{sato2020fast,
  title        = {Fast and robust comparison of probability measures in heterogeneous spaces},
  author       = {Ryoma Sato and Marco Cuturi and Makoto Yamada and Hisashi Kashima},
  note      = {{arXiv:2002.01615}},
  year         = {2020},
}

@article{flamary2021pot,
  author       = {Flamary, R{\'e}mi and Courty, Nicolas and Gramfort, Alexandre and Alaya, Mokhtar Z. and Boisbunon, Aur{\'e}lie and Chambon, Stanislas and Chapel, Laetitia and Corenflos, Adrien and Fatras, Kilian and Fournier, Nemo and Gautheron, L{\'e}o and Gayraud, Nathalie T. H. and Janati, Hicham and Rakotomamonjy, Alain and Redko, Ievgen and Rolet, Antoine and Schutz, Antony and Seguy, Vivien and Sutherland, Danica J. and Tavenard, Romain and Tong, Alexander and Vayer, Titouan},
  title        = {{POT}: {P}ython optimal transport},
  journal      = {Journal of Machine Learning Research},
  volume       = {22},
  number       = {78},
  pages        = {1--8},
  year         = {2021},
  note         = {Software available at \% url{https://pythonot.github.io/}},
}

@inproceedings{nguyen2025sotdd,
  author    = {Nguyen, Khai and Nguyen, Hai and Pham, Tuan and Ho, Nhat},
  title     = {Lightspeed geometric dataset distance via sliced optimal transport},
  booktitle = {Proceedings of ICML'25},
  year      = {2025},
    publisher={OpenReview.net}
}

@article{
piening2025slicing_gaussian,
title={Slicing the {G}aussian Mixture {W}asserstein Distance},
author={Piening, Moritz and Beinert, Robert},
journal={Transactions on Machine Learning Research},
year={2025},
publisher={OpenReview.net}
}

@article{nguyen2023linearfused,
  title={On a linear fused {Gromov--W}asserstein distance for graph structured data},
  author={Nguyen, Dai Hai and Tsuda, Koji},
  journal={Pattern Recognition},
  volume={138},
  pages={109351},
  year={2023},
  publisher={Elsevier}
}

@inproceedings{
bonet2025flowing,
title={Flowing datasets with {W}asserstein over {W}asserstein gradient flows},
author={Cl{\'e}ment Bonet and Christophe Vauthier and Anna Korba},
booktitle={Proceedings of ICML'25},
year={2025},
publisher={OpenReview.net}
}

@inproceedings{lipman2023flow,
title={Flow Matching for Generative Modeling},
author={Yaron Lipman and Ricky T. Q. Chen and Heli Ben-Hamu and Maximilian Nickel and Matthew Le},
booktitle={The Eleventh International Conference on Learning Representations },
year={2023},
url={https://openreview.net/forum?id=PqvMRDCJT9t}
}

@article{wald2025flow,
  title={Flow Matching: {M}arkov kernels, stochastic processes and transport plans},
  author={Wald, Christian and Steidl, Gabriele},
  journal={Variational and Information Flows in Machine Learning and Optimal Transport},
  pages={185--254},
  year={2025},
  publisher={Springer}
}

@inproceedings{
haviv2025wasserstein,
title={Wasserstein Flow Matching: {G}enerative Modeling Over Families of Distributions},
author={Doron Haviv and Aram-Alexandre Pooladian and Dana Pe'er and Brandon Amos},
booktitle={Proceedings of the ICML'25},
year={2025},
publisher={OpenReview.net}
}

@inproceedings{
atanackovic2025meta,
title={Meta Flow Matching: {I}ntegrating Vector Fields on the {W}asserstein Manifold},
author={Lazar Atanackovic and Xi Zhang and Brandon Amos and Mathieu Blanchette and Leo J Lee and Yoshua Bengio and Alexander Tong and Kirill Neklyudov},
booktitle={Proceedings of the ICLR'25},
year={2025},
publisher={OpenReview.net}
}

@article{piening2025novel,
  title={A Novel Sliced Fused {G}romov-{W}asserstein Distance},
  author={Piening, Moritz and Beinert, Robert},
  journal={arXiv preprint arXiv:2508.02364},
  year={2025}
}

@inproceedings{yang2019pointflow_ot_nn,
  title={Pointflow: 3d point cloud generation with continuous normalizing flows},
  author={Yang, Guandao and Huang, Xun and Hao, Zekun and Liu, Ming-Yu and Belongie, Serge and Hariharan, Bharath},
  booktitle={Proceedings of the ICCV'19},
  pages={4541--4550},
    publisher={IEEE},  
  year={2019}
}

@inproceedings{lecun1998gradient,
  author       = {LeCun, Yann and Bottou, L\'eon and Bengio, Yoshua and Haffner, Patrick},
  title        = {Gradient-Based Learning Applied to Document Recognition},
  booktitle    = {Proceedings of the IEEE},
  year         = {1998},
  volume       = {86},
  pages        = {2278--2324},
  publisher={IEEE}
}

@article{
tong2024improving_ot_flowmatching,
title={Improving and generalizing flow-based generative models with minibatch optimal transport},
author={Alexander Tong and Kilian Fatras and Nikolay Malkin and Guillaume Huguet and Yanlei Zhang and Jarrid Rector-Brooks and Guy Wolf and Yoshua Bengio},
journal={Transactions on Machine Learning Research},
issn={2835-8856},
year={2024},
publisher={OpenReview.net}
}

@inproceedings{lin2020fixed,
  title={Fixed-support {W}asserstein barycenters: Computational hardness and fast algorithm},
  author={Lin, Tianyi and Ho, Nhat and Chen, Xi and Cuturi, Marco and Jordan, Michael},
  booktitle={Advances in Neural Information Processing Systems},
  volume={33},
  pages={5368--5380},
  year={2020},
  publisher={Curran Associates}
}

@article{piening2025slicing_wow,
  title={Slicing {Wasserstein Over W}asserstein Via Functional Optimal Transport},
  author={Piening, Moritz and Beinert, Robert},
  journal={arXiv preprint arXiv:2509.22138},
  year={2025}
}

@inproceedings{lipman2023flowmatching_fm,
  title     = {Flow Matching for Generative Modeling},
  author={Lipman, Yaron and Chen, Ricky TQ and Ben-Hamu, Heli and Nickel, Maximilian and Le, Matt},
  booktitle = {Proceeedings of the ICLR'23},
  year      = {2023},
  publisher={OpenReview.net}
}

@inproceedings{albergo2023building_fm,
  title={Building Normalizing Flows with Stochastic Interpolants},
  author={Albergo, Michael and Vanden-Eijnden, Eric},
  booktitle={Proceedings of the ICLR'23},
  year={2023},
  publisher={OpenReview.net}
}

@inproceedings{liuflow_fm_recti,
  title={Flow Straight and Fast: Learning to Generate and Transfer Data with Rectified Flow},
  author={Liu, Xingchao and Gong, Chengyue and others},
  booktitle={Proceedings of the ICLR'23},
  year={2023},
  publisher={OpenReview.net}
}

@inproceedings{pooladian2023multisample_couplings,
  title={Multisample Flow Matching: Straightening Flows with Minibatch Couplings},
  author={Pooladian, Aram-Alexandre and Ben-Hamu, Heli and Domingo-Enrich, Carles and Amos, Brandon and Lipman, Yaron and Chen, Ricky TQ},
  booktitle = {Proceedings of the ICML'23},
  year      = {2023},
  publisher = {OpenReview.net}
}

@article{tong2023improving_minibatch,
  title={Improving and generalizing flow-based generative models with minibatch optimal transport},
  author={Tong, Alexander and Fatras, Kilian and Malkin, Nikolay and Huguet, Guillaume and Zhang, Yanlei and Rector-Brooks, Jarrid and Wolf, Guy and Bengio, Yoshua},
  journal   = {Transactions on Machine Learning Research},
  year      = {2023},
  publisher = {OpenReview.net}
}

@article{mousavi2025flow_semidiscre,
  title={Flow matching with semidiscrete couplings},
  author={Mousavi-Hosseini, Alireza and Zhang, Stephen Y and Klein, Michal and Cuturi, Marco},
  journal={arXiv preprint arXiv:2509.25519},
  year={2025}
}

@article{ho2020denoising,
  title={Denoising diffusion probabilistic models},
  author={Ho, Jonathan and Jain, Ajay and Abbeel, Pieter},
  journal={Advances in Neural Information Processing Systems},
  volume={33},
  pages={6840--6851},
  year={2020},
  publisher={Curran Associates}
}

@article{chemseddine2025conditional,
  title={Conditional {W}asserstein distances with applications in {B}ayesian {OT} flow matching},
  author={Chemseddine, Jannis and Hagemann, Paul and Steidl, Gabriele and Wald, Christian},
  journal={Journal of Machine Learning Research},
  volume={26},
  number={141},
  pages={1--47},
  year={2025},
  publisher = {Microtome Publishing}
}

@inproceedings{
liu2025expected,
title={Expected Sliced Transport Plans},
author={Xinran Liu and Rocio Diaz Martin and Yikun Bai and Ashkan Shahbazi and Matthew Thorpe and Akram Aldroubi and Soheil Kolouri},
booktitle={Proceedings of the ICLR'25},
year={2025},
pulisher={OpenReview-net}
}

@inproceedings{kim2024unpaired_neuralot_i2i,
  title     = {Unpaired Image-to-Image Translation via Neural Schr{\"o}dinger Bridge},
  author    = {Beomsu Kim and Gihyun Kwon and Kwanyoung Kim and Jong Chul Ye},
  booktitle = {Proceedings of the ICLR'24},
  year      = {2024},
  publisher = {OpenReview.net}
}

@inproceedings{korotin2023neural_neuralot_i2i,
  title     = {Neural Optimal Transport},
  author    = {Alexander Korotin and Daniil Selikhanovych and Evgeny Burnaev},
  booktitle = {Proceedings of the ICLR'23},
  year      = {2023},
  publisher = {OpenReview.net}
}

@article{de2024schrodinger_neuralot_i2i,
  title={Schrodinger bridge flow for unpaired data translation},
  author={De Bortoli, Valentin and Korshunova, Iryna and Mnih, Andriy and Doucet, Arnaud},
  journal={Advances in Neural Information Processing Systems},
  volume={37},
  pages={103384--103441},
  year={2024},
  publisher={Curran Associates}
}

@article{pinzi2025nested_wow,
  title={Nested superposition principle for random measures and the geometry of the {W}asserstein on {W}asserstein space},
  author={Pinzi, Alessandro and Savar{\'e}, Giuseppe},
  journal={arXiv preprint arXiv:2510.07523},
  year={2025}
}

@book{AmbrosioGigliSavare2005,
  author    = {Ambrosio, Luigi and Gigli, Nicola and Savar{\'e}, Giuseppe},
  title     = {Gradient Flows: In Metric Spaces and in the Space of Probability Measures},
  year      = {2005},
  publisher = {Springer Science \& Business Media}
}

@inproceedings{lee2019settransformer,
  title={Set transformer: {A} framework for attention-based permutation-invariant neural networks},
  author={Lee, Juho and Lee, Yoonho and Kim, Jungtaek and Kosiorek, Adam and Choi, Seungjin and Teh, Yee Whye},
  booktitle={Proceedings of the ICML'19},
  pages={3744--3753},
  year={2019},
  organization={PMLR}
}

@article{guo2021pct_point_cloud_transformer,
  title={Pct: {P}oint cloud transformer},
  author={Guo, Meng-Hao and Cai, Jun-Xiong and Liu, Zheng-Ning and Mu, Tai-Jiang and Martin, Ralph R and Hu, Shi-Min},
  journal={Computational Visual Media},
  volume={7},
  number={2},
  pages={187--199},
  year={2021},
  publisher={Springer}
}

@inproceedings{qi2017pointnet,
  title={{PointN}et: {D}eep learning on point sets for 3d classification and segmentation},
  author={Qi, Charles R and Su, Hao and Mo, Kaichun and Guibas, Leonidas J},
  booktitle={Proceedings of the CVPR'17},
  pages={652--660},
  publisher={IEEE},
  year={2017}
}

@inproceedings{
hui2025notsooptimal_point_cloud_flow_matching,
title={Not-So-Optimal Transport Flows for 3D Point Cloud Generation},
author={Ka-Hei Hui and Chao Liu and Xiaohui Zeng and Chi-Wing Fu and Arash Vahdat},
booktitle={Proceedings of the ICLR'25},
publisher={OpenReview.net},
year={2025}
}

@article{klein2023equivariant_flow_matching,
  title={Equivariant flow matching},
  author={Klein, Leon and Kr{\"a}mer, Andreas and No{\'e}, Frank},
  journal={Advances in Neural Information Processing Systems},
  publisher={Curran Associates},
  volume={36},
  pages={59886--59910},
  year={2023}
}

@article{wang2013linear_lot,
  title={A linear optimal transportation framework for quantifying and visualizing variations in sets of images},
  author={Wang, Wei and Slep{\v{c}}ev, Dejan and Basu, Saurav and Ozolek, John A and Rohde, Gustavo K},
  journal={International Journal of Computer Vision},
  volume={101},
  number={2},
  pages={254--269},
  year={2013},
  publisher={Springer}
}

@article{sarrazin2024linearized_lot_schmitzer,
  title={Linearized optimal transport on manifolds},
  author={Sarrazin, Cl{\'e}ment and Schmitzer, Bernhard},
  journal={SIAM Journal on Mathematical Analysis},
  volume={56},
  number={4},
  pages={4970--5016},
  year={2024},
  publisher={SIAM}
}

@article{yi2016scalable_shapenet,
  title={A scalable active framework for region annotation in 3d shape collections},
  author={Yi, Li and Kim, Vladimir G and Ceylan, Duygu and Shen, I-Chao and Yan, Mengyan and Su, Hao and Lu, Cewu and Huang, Qixing and Sheffer, Alla and Guibas, Leonidas},
  journal={ACM Transactions on Graphics (ToG)},
  volume={35},
  number={6},
  pages={1--12},
  year={2016},
  publisher={ACM New York, NY, USA}
}

@inproceedings{chen2024flow_riemannian_flow_matching,
  title={Flow Matching on General Geometries},
  author={Chen, Ricky TQ and Lipman, Yaron},
  booktitle={Proceedings of the ICLR'24},
  year={2024},
  publisher={OpenReview.net}
}

@article{yang2023diffusion_diffusion_review,
  title={Diffusion models: {A} comprehensive survey of methods and applications},
  author={Yang, Ling and Zhang, Zhilong and Song, Yang and Hong, Shenda and Xu, Runsheng and Zhao, Yue and Zhang, Wentao and Cui, Bin and Yang, Ming-Hsuan},
  journal={ACM computing surveys},
  volume={56},
  number={4},
  pages={1--39},
  year={2023},
  publisher={ACM New York, NY, USA}
}

@inproceedings{kerrigan2024functional_fm,
  title={Functional Flow Matching},
  author={Kerrigan, Gavin and Migliorini, Giosue and Smyth, Padhraic},
  booktitle={Proceedings of the AISTATS'24},
  pages={3934--3942},
  year={2024},
  organization={PMLR}
}

@inproceedings{yimingdefog_discrete_fm,
  title={{DeFoG}: Discrete Flow Matching for Graph Generation},
  author={Yiming, QIN and Madeira, Manuel and Thanou, Dorina and Frossard, Pascal},
  booktitle={Proceedings of the ICML'25},
  year={2025},
  publisher={OpenReview.net}
}

@inproceedings{
gat2024discrete,
title={Discrete Flow Matching},
author={Itai Gat and Tal Remez and Neta Shaul and Felix Kreuk and Ricky T. Q. Chen and Gabriel Synnaeve and Yossi Adi and Yaron Lipman},
booktitle={The Thirty-eighth Annual Conference on Neural Information Processing Systems},
year={2024},
url={https://openreview.net/forum?id=GTDKo3Sv9p}
}

@article{ruscelli2026flow,
  title={Flow matching on homogeneous spaces},
  author={Ruscelli, Francesco},
  journal={arXiv preprint arXiv:2603.24829},
  year={2026}
}

@article{beiglbock2025brenier_wow_theory,
  title={A {B}renier Theorem on $(P\_2 (... P\_2 (H)...), W\_2) $ and Applications to Adapted Transport},
  author={Beiglb{\"o}ck, Mathias and Pammer, Gudmund and Schrott, Stefan},
  journal={arXiv preprint arXiv:2509.03506},
  year={2025}
}

@article{emami2025optimal_wow_theory,
  title={Optimal transport with optimal transport cost: the {Monge--Kantorovich problem on W}asserstein spaces},
  author={Emami, Pedram and Pass, Brendan},
  journal={Calculus of Variations and Partial Differential Equations},
  volume={64},
  number={2},
  pages={43},
  year={2025},
  publisher={Springer}
}

@article{huesmann2025benamou_wow_theory,
  title={A {Benamou--B}renier formula for transport distances between stationary random measures},
  author={Huesmann, Martin and M{\"u}ller, Bastian},
  journal={Stochastic Processes and their Applications},
  volume={185},
  pages={104633},
  year={2025},
  publisher={Elsevier}
}

@article{li2025generative_set_generation,
  title={Generative Unordered Flow for Set-Structured Data Generation},
  author={Li, Yangming and Liu, Chaoyu and Sch{\"o}nlieb, Carola-Bibiane},
  journal={arXiv preprint arXiv:2501.17770},
  year={2025}
}

@inproceedings{
ludke2025unlocking_set_generation,
title={Unlocking Point Processes through Point Set Diffusion},
author={David L{\"u}dke and Enric Rabasseda Ravent{\'o}s and Marcel Kollovieh and Stephan G{\"u}nnemann},
booktitle={The Thirteenth International Conference on Learning Representations},
year={2025},
url={https://openreview.net/forum?id=4anfpHj0wf}
}

@inproceedings{luo2021diffusion,
  title={Diffusion probabilistic models for 3d point cloud generation},
  author={Luo, Shitong and Hu, Wei},
  booktitle={Proceedings of the CVPR'21},
  pages={2837--2845},
  year={2021},
  publisher={IEEE}
}

@article{vaswani2017attention,
  title={Attention is all you need},
  author={Vaswani, Ashish and Shazeer, Noam and Parmar, Niki and Uszkoreit, Jakob and Jones, Llion and Gomez, Aidan N and Kaiser, {\L}ukasz and Polosukhin, Illia},
  journal={Advances in Neural Information Processing Systems},
  volume={30},
  year={2017},
  publisher={Curran Associates}
}

@article{uspsdataset_hull1994,
  author={J. J. {Hull}},
  journal={IEEE Transactions on Pattern Analysis and Machine Intelligence}, 
  title={A database for handwritten text recognition research}, 
  year={1994},
  volume={16},
  number={5},
  pages={550-554},
  doi={10.1109/34.291440},
  publisher={IEEE}
}

@inproceedings{holderriethgeneratormatching,
  title={Generator Matching: Generative modeling with arbitrary {M}arkov processes},
  author={Holderrieth, Peter and Havasi, Marton and Yim, Jason and Shaul, Neta and Gat, Itai and Jaakkola, Tommi and Karrer, Brian and Chen, Ricky TQ and Lipman, Yaron},
  booktitle={Proceedings of the ICLR'25},
  year={2025},
  publisher={OpenReview.net}
}

@inproceedings{
jiang2025bureswasserstein_flow_matching,
title={Bures-{W}asserstein Flow Matching for Graph Generation},
author={Keyue Jiang and Jiahao Cui and Xiaowen Dong and Laura Toni},
booktitle={Proceedings of the Generative AI and Biology ICML Workshop},
year={2025},
publisher={OpenReview.net}
}

@article{lindheim2023simple_free_support,
  title={Simple approximative algorithms for free-support {W}asserstein barycenters},
  author={Lindheim, Johannes von},
  journal={Computational Optimization and Applications},
  volume={85},
  number={1},
  pages={213--246},
  year={2023},
  publisher={Springer}
}

@article{hagemann2021stabilizing,
  title={Stabilizing invertible neural networks using mixture models},
  author={Hagemann, Paul and Neumayer, Sebastian},
  journal={Inverse Problems},
  volume={37},
  number={8},
  pages={085002},
  year={2021},
  publisher={IOP Publishing}
}

@inproceedings{chemseddine2026adapting_noise_data_quantile,
  title     = {Adapting Noise to Data by Quantile Learning},
  author    = {Chemseddine, Jannis and Kornhardt, Gregor and Duong, Richard and Steidl, Gabriele},
  booktitle = {Proceedings of the Deep Generative Model in Machine Learning: Theory, Principle and Efficacy ICLR Workshop},
  year      = {2026},
  publisher={OpenReview.net}
}

@book{Villani2009oldandnew,
  author    = {Villani, C{\'e}dric},
  title     = {Optimal Transport: Old and New},
  series    = {Grundlehren der mathematischen Wissenschaften},
  volume    = {338},
  publisher = {Springer-Verlag},
  address   = {Berlin},
  year      = {2009}
}

@article{parzen1962estimation_kde,
  title={On estimation of a probability density function and mode},
  author={Parzen, Emanuel},
  journal={The Annals of Mathematical Statistics},
  volume={33},
  number={3},
  pages={1065--1076},
  year={1962},
  publisher={JSTOR}
}

@inproceedings{
pandey2025heavytailed_t_distribution_prior,
title={Heavy-Tailed Diffusion Models},
author={Kushagra Pandey and Jaideep Pathak and Yilun Xu and Stephan Mandt and Michael Pritchard and Arash Vahdat and Morteza Mardani},
booktitle={Proceedings of the ICLR'25},
year={2025},
publisher={OpenReview.net}
}

@inproceedings{cuturi2013sinkhorn,
  title={Sinkhorn distances: {L}ightspeed computation of optimal transport},
  author={Cuturi, Marco},
  booktitle={Advances in Neural Information Processing Systems},
  volume={26},
  year={2013},
  publisher={Curran Associcates}
}

@book{Scott2015,
  author    = {Scott, David W.},
  title     = {Multivariate Density Estimation: Theory, Practice, and Visualization},
  edition   = {2},
  publisher = {John Wiley \& Sons},
  year      = {2015},
  isbn      = {978-0-471-69755-8}
}

@inproceedings{
tang2026generative_wowtype_humans,
title={Generative Human Geometry Distribution},
author={Xiangjun Tang and Biao Zhang and Peter Wonka},
booktitle={Proceedings of the ICLR'26},
year={2026},
publisher={OpenReview.net}
}
\appendix
\newpage
\section{Background and Proofs}\label{sec:theory_appendix_supp}

In order to describe the continuity equation \eqref{eq:wow_continuity} in the WoW setting, we need to define test functions suitable for testing on the infinite dimensional space $\mathcal P(\R^d)$.
\begin{definition}\label{def1}
Consider the cylindric functions $F: \mathcal P(\R^d) \to \R$ defined by
$$F(\mu) \coloneqq \Psi\Big(\int_{\R^d} \phi_1 \d \mu, \ldots, \int_{\R^d} \phi_k \d \mu \Big), \quad \Psi \in C_b^1(\R^k), \; 
\phi_i \in C_c^1(\R^d), \; i=1,\ldots,k.
$$ 
For $I=[0,1],$ let $(M_t)_{t \in I} \in C_I(\mathcal P(\mathcal P(\R^d))$ and $V: I \times \R^d \times \mP(\R^d) \to \R^d$ be a Borel non-local vector field satisfying the integrability condition 
\begin{equation}\label{velo-int}
    \int_I \int_{\mP(\R^d)} \int_{\R^d} \|V(t,x,\mu)\|  \d \mu(x) \d M_t(\mu) \d t < \infty.
\end{equation}
Then, it is said that $(M_t, V_t)$ satisfies 
the continuity equation \eqref{eq:wow_continuity} if for all cylindric functions $F$ it holds
\begin{equation}
    \frac{d}{dt} \int_{\mathcal P} F(\mu) \d M_t(\mu) = \int_{\mathcal P} \int_{\R^d} (\nabla_W F)(x,\mu) \cdot V_t(x,\mu) \d \mu(x) \d M_t(\mu) 
\end{equation}
in the sense of distributions on $I$, where
$(\nabla_W F) (x,\mu) 
\coloneqq
\sum_{j=1}^k \partial_i \Psi (\int \phi_1 \d \mu, \ldots, \int \phi_k \d \mu) \nabla \phi_i (x).
$
\end{definition}


Here, we prove Lemma~\ref{prop:wow_distance_equiv_main} on how to construct random couplings from inner and outer plans.
\IPOPlem*

\begin{proof}
 1.   Concerning the first claim, we get for any bounded measurable $f: \ProbTwo(\R^d) \to \R$ that
    \begin{align}
        \int f(\mu) \, \d(\bproj^0_{\sharp}\bPi)(\mu)
        &= \int f(\bproj^0(\pi)) \, \d\bPi(\pi) \\
        &= \int f(\bproj^0(\operatorname{IP}_{\mu,\nu})) \, \d\operatorname{OP}_{\bmu,\bnu}(\mu,\nu) \\
        &= \int f(\pr^0_{\sharp}\operatorname{IP}_{\mu,\nu}) \, \d\operatorname{OP}_{\bmu,\bnu}(\mu,\nu) \\
        &= \int f(\mu) \, \d\operatorname{OP}_{\bmu,\bnu}(\mu,\nu)
        = \int f(\mu) \, \d \bmu(\mu),
    \end{align}
    where the second equality uses $\bPi = \operatorname{IP}_\sharp \operatorname{OP}_{\bmu,\bnu}$,
    the third one relies on $\bproj^0(\pi) = \pr^0_{\sharp}\pi$,
    and the fourth one on $\pr^0_{\sharp}\operatorname{IP}_{\mu,\nu} = \mu$.
    The last equality holds since $\operatorname{OP}_{\bmu,\bnu}$ has first marginal $\bmu$.
    An analogous computation shows $\bproj^1_{\sharp}\bPi = \bnu$, hence $\bPi \in {\rm RC}(\bmu,\bnu)$.
\\[1ex]
2. We prove an upper and lower bound estimate, resulting in the claimed identity. Note that the upper bound estimate is due to \cite[Proposition 3.1]{pinzi2025nested_wow}. For completeness, we include it here.

\textbf{Upper bound `$\leq$' .}
Let $\operatorname{OP}^*\in \text{C}(\bmu,\bnu)$ realize the WoW distance via
\begin{equation}
\bW_2^2(\bmu,\bnu) = \int_{\ProbTwo(\R^d) \times \ProbTwo(\R^d)} \W_2^2(\mu,\nu) \, \d \operatorname{OP}^*(\mu,\nu).
\end{equation}
By a measurable selection theorem \cite[Cor.\ 5.22]{Villani2009oldandnew}, there exists a $\operatorname{OP}^*$-a.e.\ measurable map
\begin{equation}
    \operatorname{IP}^* \colon (\mu,\nu) \mapsto \operatorname{IP}^*_{\mu,\nu} \in c(\mu,\nu)
\end{equation} 
such that
\[
\int_{\R^d \times \R^d} \|x-x'\|^2 \, \d\operatorname{IP}^*_{\mu,\nu}(x,x')
= \W_2^2(\mu,\nu).
\]
Define
\[
\bPi^* \coloneqq \operatorname{IP}^*_\sharp \operatorname{OP}^*.
\]
Then it holds $\bPi^* \in {\rm RC}(\bmu,\bnu)$ by the first part.
Taking the infimum over all admissible $\bPi$ gives the bound
\begin{align}
\inf_{\bPi \in {\rm RC}(\bmu,\bnu)} \int \int \|x-x'\|^2 \, \d\pi \, \d\bPi
\leq
 \int \int \|x-x'\|^2 \, \d\pi \, \d\bPi^*
=
\bW_2^2(\bmu,\bnu).
\end{align}

\textbf{Lower bound `$\geq$'.}
Let $\bPi \in {\rm RC}(\bmu,\bnu)$ and define
\begin{equation}
\label{eq:op_construction_supp_proof}
\widetilde{\operatorname{OP}} \coloneqq (\bproj^0,\bproj^1)_\sharp \bPi \in {\rm C}(\bmu,\bnu).
\end{equation}
Then for $\bPi$-a.e. $\pi \in \ProbTwo(\R^d \times \R^d)$ we obtain by definition of $\W_2$ that
\[
\int_{\R^d \times \R^d} \|x-x'\|^2 \, \d\pi(x,x')
\;\geq\;
\W_2^2(\pr^0_{\sharp}\pi,\pr^1_{\sharp}\pi).
\]
By definition of the push-forward and $\bW_2$, we get 
\begin{align}
\int\limits_{\ProbTwo(\R^d \times \R^d)} \int\limits_{\R^d \times \R^d} \|x-x'\|^2 \, \d\pi \, \d\bPi (\pi)
&\geq 
\int\limits_{\ProbTwo(\R^d \times \R^d)}  \W_2^2(\pr^0_{\sharp}\pi,\pr^1_{\sharp}\pi) \, \d\bPi (\pi)
\\
&=
\int\limits_{\ProbTwo(\R^d) \times \ProbTwo(\R^d)}  \W_2^2(\mu,\nu) \, \d((\bproj^0,\bproj^1)_\sharp \bPi)(\mu, \nu)
\\
&=
\int\limits_{\ProbTwo(\R^d) \times \ProbTwo(\R^d)}  \W_2^2(\mu,\nu) \, \d \widetilde{\operatorname{OP}}(\mu, \nu)
\\
&\geq
\inf_{\operatorname{OP} \in {\rm C}(\bmu, \bnu)}\int\limits_{\ProbTwo(\R^d) \times \ProbTwo(\R^d)}  \W_2^2(\mu,\nu) \, \d \operatorname{OP}(\mu, \nu)\\
& = 
\bW_2^2(\bmu,\bnu).
\end{align}
Taking the infimum over $\bPi$ gives the second inequality, and the proof is finished.
\end{proof}

We now come to the proof of Theorem \ref{prop:wow_flow_matching} which characterizes the non-local velocity fields of certain metameasure curves as minimizers of the loss function \eqref{minWoW}.
\WOWthm*

\begin{proof}
First, we prove essential properties of the curve $(M_t)_{t\in[0,1]}$.

\textbf{Correct endpoints of the measure curve:}
By the assumption $\bPi \in {\rm RC}(\bmu,\bnu)$, the curve $(M_t)_{t\in[0,1]}$ indeed has the given endpoints $M_0 = \bmu$ and $M_1 = \bnu$.

\textbf{Integrability property \eqref{IP-OP-int}:}
First, by the assumptions $\bPi \in {\rm RC}(\bmu,\bnu)$ and $\bmu, \bnu \allowbreak \in \allowbreak \ProbTwo(\ProbTwo(\R^d))$, it holds for $\bPi$-a.e. $\pi \in \Prob(\R^d \times \R^d)$ that 
$\pr^0_\sharp \pi, \pr^1_\sharp \pi \in \ProbTwo(\R^d)$. Thus, 
\begin{equation}
    \int_{\R^d \times \R^d} \|x\|^2 + \|x'\|^2 \d \pi(x,x')
    = \int_{\R^d} \|z\|^2 \d \pr^0_\sharp\pi(z)
    + \int_{\R^d} \|z\|^2 \d \pr^1_\sharp\pi(z) < \infty,
\end{equation}
which means that $\bPi$ is supported on $\ProbTwo(\R^d \times \R^d)$.

Second, note that for all $\mu, \nu \in \ProbTwo(\R^d)$ it holds the estimates
\begin{equation}\label{wasserstein-estimate1}
    W_2^2\bigl(\mu,\nu \bigr) \le 2 \int_{\R^d} \|x\|^2 \d \mu(x) + 2 \int_{\R^d} \|y\|^2 \d \nu(y)
\end{equation}
and
\begin{equation}\label{wasserstein-estimate2}
    \int_{\R^d} \|x\|^2 \d \mu(x) \le 2 W_2^2\bigl(\mu,\nu \bigr) + 2 \int_{\R^d} \|y\|^2 \d \nu(y)
\end{equation}
by the triangle and Young's inequality, and using any coupling $\pi \in \text{c}(\mu, \nu)$ in \eqref{wasserstein-estimate1}, and an optimal coupling $\pi^* \in \text{c}^{\rm opt}(\mu, \nu)$ in \eqref{wasserstein-estimate2}. In particular, it holds for all $M \in \Prob(\ProbTwo(\R^d))$ the equivalence 
\begin{equation}\label{PropTwo-equiv}
   M \in \ProbTwo(\ProbTwo(\R^d)) \iff \int_{\ProbTwo(\R^d)} \int_{\R^d} \|x\|^2 \d \mu(x) \d M (\mu) < \infty.
\end{equation}
Therefore, by the assumption $\bmu, \bnu \in \ProbTwo(\ProbTwo(\R^d))$, we have
\begin{equation}
    \int_{\ProbTwo(\R^d)} \int_{\R^d} \|x\|^2 \d \mu(x) \d \bmu (\mu) +
    \int_{\ProbTwo(\R^d)} \int_{\R^d} \|x\|^2 \d \mu(x) \d \bnu (\mu) < \infty.
\end{equation}
Now, $\bPi \in {\rm RC}(\bmu,\bnu)$ immediately implies that the above sum is equal to
\begin{align}
\int_{\ProbTwo(\R^d\times\R^d)}
\int_{\R^d\times\R^d} \|x\|^2 + \|x'\|^2\,\d\pi(x,x')\,\d\bPi(\pi), 
\end{align}
which proves the integrability bound \eqref{IP-OP-int}, i.e. $\bPi \in \ProbTwo(\ProbTwo(\R^d \times \R^d))$. 

\textbf{Absolute continuity of the curve:}
We first show that $M_t \in \ProbTwo(\ProbTwo(\R^d))$ for all $t \in [0,1]$. 
Indeed, since $\bPi$-a.e.\ $\pi$ is in $\ProbTwo(\R^d \times \R^d)$, we have 
$\pr^t_\sharp \pi \in \ProbTwo(\R^d)$, meaning that $M_t$ is supported on $\ProbTwo(\R^d)$.
Then, it holds
\begin{align}
&\int_{\ProbTwo(\R^d)} \int_{\R^d} \|x\|^2 \d \mu(x) \d M_t (\mu)
= \int_{\ProbTwo(\R^d \times \R^d)} \int_{\R^d} \|x\|^2 \d (\pr^t_\sharp \pi) (x) \d\bPi(\pi)\\
= &\int_{\ProbTwo(\R^d \times \R^d)} \int_{\R^d \times \R^d} \|(1-t)x + tx'\|^2 \d \pi(x,x') \d\bPi(\pi)\\
\le &\int_{\ProbTwo(\R^d \times \R^d)} \int_{\R^d \times \R^d} 2 (1-t)^2 \|x\|^2 + 2t^2 \|x'\|^2 \d \pi(x,x') \d\bPi(\pi),
\end{align}
which is finite by the already shown estimate \eqref{IP-OP-int} and $t \in [0,1]$. Hence, $M_t \in \ProbTwo(\ProbTwo(\R^d))$ by \eqref{PropTwo-equiv}.

Next, for all $s,t\in[0,1]$ and every $\pi\in\ProbTwo(\R^d\times\R^d)$, the coupling induced by the map 
\begin{equation}
    (x,x')\mapsto (\pr^s(x,x'),\pr^t(x,x')) 
\end{equation}
gives the bound
\begin{equation}\label{helper1}
W_2^2\bigl(\pr^s_\#\pi,\pr^t_\#\pi\bigr)
\le
|t-s|^2\int_{\R^d\times\R^d}|x'-x|^2\,\d\pi(x,x').
\end{equation}
With the mapping $T: \ProbTwo(\R^d \times \R^d) \to \ProbTwo(\R^d) \times \ProbTwo(\R^d), \pi \mapsto (\proj^s_{\sharp} \pi, \proj^t_{\sharp} \pi),$ we obtain a coupling $T_\sharp \bPi \in \text{C}(M_s, M_t)$: the endpoint relations are clear by definition, and we have 
\begin{align}
&\int_{\ProbTwo(\R^d) \times \ProbTwo(\R^d)}  W_2^2(\mu, \tilde \mu) + W_2^2(\nu, \tilde \nu)\d T_\sharp \bPi (\mu, \nu)\\
=&\int_{\ProbTwo(\R^d \times \R^d)}  W_2^2(\proj^s_{\sharp} \pi, \tilde \mu) + W_2^2(\proj^t_{\sharp} \pi, \tilde \nu)\d \bPi (\pi)\\
=&\int_{\ProbTwo(\R^d)}  W_2^2(\mu, \tilde \mu) \d M_s (\mu)
+ \int_{\ProbTwo(\R^d)}  W_2^2(\mu, \tilde \nu)\d M_t (\mu) < \infty 
\end{align}
for any $\tilde \mu, \tilde \nu \in \ProbTwo(\R^d)$, since $M_s, M_t \in \ProbTwo(\ProbTwo(\R^d))$. Hence, $T_\sharp \bPi \in \ProbTwo(\ProbTwo(\R^d) \times \ProbTwo(\R^d))$.

Therefore we verified $T_\sharp \bPi \in \text{C}(M_s, M_t)$, and the above estimate \eqref{helper1} yields
\begin{equation}
\bW_2^2(M_s,M_t)
\le
|t-s|^2
\int_{\ProbTwo(\R^d\times\R^d)}
\int_{\R^d\times\R^d} \|x'-x\|^2\,\d\pi(x,x')\,\d\bPi(\pi).
\end{equation}
By the proven estimate \eqref{IP-OP-int}, the last double integral is finite.
Hence $t\mapsto M_t$ is Lipschitz, in particular absolutely continuous with respect to $\bW_2$.

\textbf{Square-integrable conditional vector field:}
Define the two-level measure $Q$ on 
\begin{equation}
    \mathcal{X} \coloneqq \ProbTwo(\R^d \times \R^d) \times \R^d \times \R^d
\end{equation}
via
\begin{equation}
\d Q(\pi,x,x')
\coloneqq
\d \pi(x,x') \d \bPi(\pi).
\end{equation}
Next, consider the measurable map $P^t: \mathcal{X} \to \ProbTwo(\R^d) \times \R^d$ given by
\begin{equation}
P^t : (\pi,x,x') \mapsto \bigl(\pr^t_\#\pi,\pr^t(x,x')\bigr)
= (\tilde\mu,\tilde x).
\end{equation}
Now we have $\d (P^t_\# Q)(\tilde \mu, \tilde x) = \d \tilde\mu(\tilde x) \d \M_t(\tilde \mu)$.
Since all concerned spaces are Polish, disintegration of $Q$ with regard to $P_t$ defines a regular conditional probability
\begin{equation}
\Gamma_{t,\tilde\mu,\tilde x}\in \Prob\bigl(\mathcal{X} \bigr),
\end{equation}
such that
\begin{equation}
\d Q(\pi,x,x')
=
\d \Gamma_{t,\tilde\mu,\tilde x}(\pi,x,x')\,\d \tilde\mu(\tilde x)\,\d M_t(\tilde\mu).
\end{equation}
Then we define the \emph{conditional vector field}
\begin{equation}
V(t,\tilde x,\tilde\mu)
\coloneqq
\int_{\mathcal{X}}
(x'-x)\,\d \Gamma_{t,\tilde\mu,\tilde x}(\pi,x,x').
\end{equation}
Let us verify that 
$V$  satisfies a square-integrability condition.
By Jensen's inequality applied to the probability measure
$\Gamma_{t,\tilde\mu,\tilde x}$, we obtain
\begin{align}
|V(t,\tilde x,\tilde\mu)|^2
&=
\left|
\int_{\mathcal{X}}
(x'-x)\,\d\Gamma_{t,\tilde\mu,\tilde x}(\pi,x,x')
\right|^2\\
&\le
\int_{\mathcal{X}}
\|x'-x\|^2\,\d\Gamma_{t,\tilde\mu,\tilde x}(\pi,x,x').
\end{align}
Integrating with respect to $\d\tilde\mu(\tilde x)\,\d M_t(\tilde\mu)\,\d t$ and using the
disintegration identity for $Q$, we find 

\begin{align} \label{eq:kinetic_energy_less_than_coupled_energy}
&\int_0^1\int_{\ProbTwo(\R^d)}\int_{\R^d}
|V(t,\tilde x,\tilde\mu)|^2\,\d\tilde\mu(\tilde x)\,\d\ M_t(\tilde\mu)\,\d t\\
&\le
\int_0^1
\int_{\ProbTwo(\R^d)}
\int_{\R^d}
\int_{\mathcal{X}}
\|x'-x\|^2\,\d\Gamma_{t,\tilde\mu,\tilde x}(\pi,x,x')
\,\d\tilde\mu(\tilde x)\,\d M_t(\tilde\mu)\,\d t\\
&=
\int_0^1
\int_{\ProbTwo(\R^d\times\R^d)}
\int_{\R^d\times\R^d}
\|x'-x\|^2\,\d\pi(x,x')\,\d\bPi(\pi)\,\d t
<\infty,
\end{align}
where the last term is finite by the bound \eqref{IP-OP-int}. In particular, $V$ satisfies the integrability condition \eqref{velo-int} of Definition \ref{def1}.


\textbf{Metameasure flow is generated via conditional vector field:}
We claim that $$\partial_t M_t+\operatorname{div}_{\Prob}(V_t M_t)=0$$ in the sense of
Definition~\ref{def1}. To prove this, let
\begin{equation}
F(\mu)=\Psi\Bigl(\int_{\R^d}\phi_1\,\d\mu,\dots,\int_{\R^d}\phi_k\,\d\mu\Bigr)
\end{equation}
with $\phi_i\in C_c^1(\R^d), i=1,\dots,k,$ and $\Psi\in C_b^1(\R^k)$. Since $M_t=\bproj^t_\#\bPi$, we have
\begin{equation}
\int_{\ProbTwo(\R^d)}F(\mu)\,\d M_t(\mu)
=
\int_{\ProbTwo(\R^d\times\R^d)}F\bigl(\pr^t_\#\pi\bigr)\,\d\bPi(\pi).
\end{equation}
For fixed $\pi \in \ProbTwo(\R^d \times \R^d)$, the map $t\mapsto F(\pr^t_\#\pi)$ is differentiable. Thus, we may apply the chain rule to reformulate
\begin{equation}
\begin{aligned}
&\frac{\d}{\d t} ~ F\bigl(\pr^t_\#\pi\bigr)\\
&=
\frac{\d}{\d t} ~
\Psi\Bigl(
\int_{\R^d\times\R^d}\phi_1(\pr^t(x,x'))\,\d\pi(x,x'),
\dots,
\int_{\R^d\times\R^d}\phi_k(\pr^t(x,x'))\,\d\pi(x,x')
\Bigr)
\\
&=
\sum_{i=1}^k
\partial_i\Psi\Bigl(
\int_{\R^d\times\R^d}\phi_1(\pr^t(x,x'))\,\d\pi(x,x'),
\dots,
\int_{\R^d\times\R^d}\phi_k(\pr^t(x,x'))\,\d\pi(x,x')
\Bigr)
\\
&\qquad\qquad \cdot
\frac{\d}{\d t}
\int_{\R^d\times\R^d}\phi_i(\pr^t(x,x'))\,\d\pi(x,x')
\\
&=
\sum_{i=1}^k
\partial_i\Psi\Bigl(
\int_{\R^d\times\R^d}\phi_1(\pr^t(x,x'))\,\d\pi(x,x'),
\dots,
\int_{\R^d\times\R^d}\phi_k(\pr^t(x,x'))\,\d\pi(x,x')
\Bigr)
\\
&\qquad\qquad \cdot
\int_{\R^d\times\R^d}
\nabla \phi_i(\pr^t(x,x'))\cdot(x'-x)\,\d\pi(x,x')
\\
&=
\int_{\R^d\times\R^d}
\sum_{i=1}^k
\partial_i\Psi\Bigl(
\int_{\R^d\times\R^d}\phi_1(\pr^t(x,x'))\,\d\pi(x,x'),
\dots,
\int_{\R^d\times\R^d}\phi_k(\pr^t(x,x'))\,\d\pi(x,x')
\Bigr)
\\
&\qquad\qquad \cdot
\nabla \phi_i(\pr^t(x,x'))\cdot(x'-x)\,\d\pi(x,x')
\\
&=
\int_{\R^d\times\R^d}
\nabla_WF\bigl(\pr^t(x,x'),\pr^t_\#\pi\bigr)\cdot(x'-x)\,\d\pi(x,x').
\end{aligned}
\end{equation}
Here, we applied the definition for the first equation and the classical chain rule for the second one. The third equality is justified by differentiation under the integral sign, since $\phi_i\in C_c^1(\R^d)$, the chain rule and Cauchy-Schwarz inequality imply
\begin{equation}
\left|\partial_t \phi_i(\pr_t(x,x'))\right|
\le
\|\nabla\phi_i\|_\infty\, \|x'-x\|,
\end{equation}
and the right-hand side is $\pi$-integrable by assumption. In the end, we employed the definition of $\nabla_W F$ after extracting the integral.

The above calculation gives the estimate
\begin{equation}
\left|
\frac{\d}{\d t}F\bigl(\pr^t_\#\pi\bigr)
\right|
\le
C_F \int_{\R^d\times\R^d} \|x'-x\|\,\d\pi(x,x'),
\end{equation}
for some constant $0 < C_F < \infty$ which is finite by the boundedness of the derivatives of $\Psi$ and $\phi_i$. In particular, by the estimate \eqref{IP-OP-int} the mapping
$
\pi\mapsto \int_{\R^d\times\R^d} \|x'-x\| \,\d\pi(x,x')
$
is in $L^1(\bPi)$.
Thus, we may again differentiate under the integral sign with respect to $\bPi$. Finally, 
by the construction of $V$ and $M_t$ and using that 
\begin{equation}
    (\pr^t_\#\pi, \pr^t(x,x')) = P^t(\pi, x, x') = (\tilde\mu,\tilde x) \quad \text{for }
    \Gamma_{t,\tilde\mu,\tilde x} \text{-a.e.} ~ (\pi, x, x'),
\end{equation}
we obtain
\begin{align*}
&\frac{\d}{\d t}\int_{\ProbTwo(\R^d)}F(\mu)\,\d M_t(\mu)\\
&= \int_{\ProbTwo(\R^d)} \int_{\R^d} \int_{\mathcal{X}}
\hspace{-0mm} \nabla_WF\bigl(\pr^t(x,x'),\pr^t_\#\pi\bigr)\cdot(x'-x)\, \d \Gamma_{t,\tilde\mu,\tilde x}(\pi,x,x')\,\d \tilde\mu(\tilde x)\,\d M_t(\tilde\mu)\\
&= \int_{\ProbTwo(\R^d)} \int_{\R^d} \nabla_WF\bigl(\tilde x, \tilde \mu\bigr) \cdot\int_{\mathcal{X}}
\hspace{0mm} (x'-x)\, \d \Gamma_{t,\tilde\mu,\tilde x}(\pi,x,x')\,\d \tilde\mu(\tilde x)\,\d M_t(\tilde\mu)\\
&= \int_{\ProbTwo(\R^d)}\int_{\R^d}
\nabla_WF(\tilde x,\tilde \mu)\cdot V(t,\tilde x,\tilde \mu)\,\d\tilde \mu(\tilde x)\,\d M_t(\tilde \mu).
\end{align*}
Hence, $(M_t, V_t)$ solves the WoW continuity equation
$
\partial_t M_t+\operatorname{div}_{\Prob}(V_t M_t)=0
$
in the sense of Definition~\ref{def1}.

\textbf{Conditional vector field minimizes $\mathcal{J}$:}
In line with the derivation of the classical flow matching loss, we now employ the decomposition
\begin{align}
&\bigl\|\tilde V\bigl(t,\pr^t(x,x'),\pr^t_\#\pi\bigr)-(x'-x)\bigr\|^2\\
&=
\bigl\|\tilde V\bigl(t,\pr^t(x,x'),\pr^t_\#\pi\bigr)\bigr\|^2
-2\Bigl\langle \tilde V\bigl(t,\pr^t(x,x'),\pr^t_\#\pi\bigr),x'-x\Bigr\rangle\\
&\qquad+\|x'-x\|^2.
\end{align}
For the first term, by disintegration with respect to
\begin{equation}
P^t:(\pi,x,x')\mapsto \bigl(\pr^t_\#\pi,\pr^t(x,x')\bigr)=(\tilde\mu,\tilde x),
\end{equation}
we obtain
\begin{align*}
&\int_0^1
\int_{\ProbTwo(\R^d\times\R^d)}
\int_{\R^d\times\R^d}
\bigl\|\tilde V\bigl(t,\pr^t(x,x'),\pr^t_\#\pi\bigr)\bigr\|^2
\,\d\pi(x,x')\,\d\bPi(\pi)\,\d t\\
&\qquad =
\int_0^1
\int_{\ProbTwo(\R^d)}
\int_{\R^d}
\|\tilde V(t,\tilde x,\tilde\mu)\|^2
\,\d\tilde\mu(\tilde x)\,\d M_t(\tilde\mu)\,\d t.
\end{align*}
For the second mixed term, using again the disintegration kernel $\Gamma_{t,\tilde\mu,\tilde x}$ and the definition of $V$, we conclude
\begin{align*}
&\int_0^1
\int_{\ProbTwo(\R^d\times\R^d)}
\int_{\R^d\times\R^d}
\Bigl\langle \tilde V\bigl(t,\pr^t(x,x'),\pr^t_\#\pi\bigr),x'-x\Bigr\rangle
\,\d\pi(x,x')\,\d\bPi(\pi)\,\d t\\
& =
\int_0^1
\int_{\ProbTwo(\R^d)}
\int_{\R^d}
\left\langle
\tilde V(t,\tilde x,\tilde\mu),
\int_{\mathcal{X}}
(x'-x)\,\d\Gamma_{t,\tilde\mu,\tilde x}(\pi,x,x')
\right\rangle \d\tilde\mu(\tilde x)\,\d M_t(\tilde\mu)\,\d t\\
& =
\int_0^1
\int_{\ProbTwo(\R^d)}
\int_{\R^d}
\langle \tilde V(t,\tilde x,\tilde\mu),V(t,\tilde x,\tilde\mu)\rangle
\,\d\tilde\mu(\tilde x)\,\d M_t(\tilde\mu)\,\d t.
\end{align*}
Therefore, setting
\begin{equation}
C:=
\int_0^1
\int_{\ProbTwo(\R^d\times\R^d)}
\int_{\R^d\times\R^d}
\|x'-x\|^2\,\d\pi(x,x')\,\d\bPi(\pi)\,\d t,
\end{equation}
we arrive at
\begin{align*}
\mathcal J(\tilde V)
&=
\int_0^1
\int_{\ProbTwo(\R^d)}
\int_{\R^d}
\|\tilde V(t,x,\mu)\|^2
\,\d\mu(x)\,\d M_t(\mu)\,\d t\\
&\qquad
-2\int_0^1
\int_{\ProbTwo(\R^d)}
\int_{\R^d}
\langle \tilde V(t,x,\mu),V(t,x,\mu)\rangle
\,\d\mu(x)\,\d M_t(\mu)\,\d t
+ C\\
&=
\int_0^1
\int_{\ProbTwo(\R^d)}
\int_{\R^d}
\|\tilde V(t,x,\mu)-V(t,x,\mu)\|^2
\,\d\mu(x)\,\d M_t(\mu)\,\d t
\\
&\qquad+
\Bigl(
C-
\int_0^1
\int_{\ProbTwo(\R^d)}
\int_{\R^d}
\|V(t,x,\mu)\|^2
\,\d\mu(x)\,\d M_t(\mu)\,\d t
\Bigr).
\end{align*}
Hence it follows
\begin{equation}
\mathcal J(\tilde V)
=
\int_0^1
\int_{\ProbTwo(\R^d)}
\int_{\R^d}
\|\tilde V(t,x,\mu)-V(t,x,\mu)\|^2
\,\d\mu(x)\,\d M_t(\mu)\,\d t
+ C_V,
\end{equation}
where $C_V$ is a constant independent of $\tilde V$. Since the first term is nonnegative, we obtain
\[
\mathcal J(\tilde V)\ge \mathcal J(V),
\]
with equality if and only if
\[
\tilde V=V
\qquad\text{in }L^2([0,1]\times\ProbTwo(\R^d)\times\R^d;\,\d \mu \d M_t(\mu)\d t;\, \R^d).
\]
consequently, $V$ is the unique minimizer of $\mathcal J$ up to $L^2$-equivalence.

\textbf{Optimality.} Now let $\bPi = \operatorname{IP}^*_\sharp \operatorname{OP}^*$.
As established in \eqref{eq:kinetic_energy_less_than_coupled_energy} and in the proof of Lemma \ref{prop:wow_distance_equiv_main}, it holds that
\begin{align}
\label{eq:variance_decomp}
&\int_0^1\int_{\ProbTwo(\R^d)}\int_{\R^d}
\|V(t,x,\mu)\|^2\,\d\mu(x)\,\d M_t(\mu)\,\d t\\
&\leq
\int_0^1\int_{\ProbTwo(\R^d\times\R^d)}\int_{\R^d\times\R^d}
\|x'-x\|^2
\,\d\pi(x,x')\,\d\bPi^*(\pi)\,\d t
= \bW_2^2(\bmu,\bnu).
\end{align}
On the other hand, we have proved in the first part that the velocity field $V$ satisfies the WoW continuity equation
$\partial_t M_t + \operatorname{div}_{\Prob}(V_t M_t) = 0$.
Hence it is an admissible velocity field in the dynamic WoW formulation, see \cite[Thm.\ 1.3]{pinzi2025nested_wow}, giving
\begin{equation}
\int_0^1\int_{\ProbTwo(\R^d)}\int_{\R^d}
\|V(t,x,\mu)\|^2\,\d\mu(x)\,\d M_t(\mu)\,\d t
\geq \bW_2^2(M_0, M_1).
\end{equation}
Combining both bounds results in \eqref{eq:wow_flow_matching_formulation_main} and concludes the proof.
\end{proof}

Next, we prove Proposition \ref{prop:lazy_linear_justification} justifying the lazy linearization strategy proposed in Section \ref{sec:couplings}.
\LAZYprop*
\begin{proof}
\textbf{~Almost every permutation minimizer is unique:} For $\mathbf{y},\mathbf{z} \in \R^{d \times N}$, the measures
$\operatorname{Vec2M}(\mathbf{y})$ and $\operatorname{Vec2M}(\mathbf{z})$
are empirical measures with $N$ atoms of equal mass. Hence,
\begin{equation}
\label{eq:discrete_wasserstein_as_permutation_problem}
\W_2^2(\operatorname{Vec2M}(\mathbf{y}),\operatorname{Vec2M}(\mathbf{z}))
=
\min_{P \in \Perm(N)}
\|\mathbf{y} - P\mathbf{z}\|^2,
\end{equation}
where $\Perm(N) \subset \R^{N \times N}$ denotes the set of permutation matrices \cite[Prop. 2.1]{PeyreCuturi2019}.

We first show that for almost every $\mathbf{x}' \in \R^{d \times N}$, the minimizer of
\[
P \mapsto \|\mathbf{x}_{\hat{\rho}} - P\mathbf{x}'\|^2
\]
over $P \in \Perm(N)$ is unique.

To this aim, let $P,Q \in \Perm(N)$ with $P \neq Q$. Using that permutations are orthogonal and hence preserve the norm, i.e., $\|P\mathbf{x}'\|^2 = \|Q\mathbf{x}'\|^2 = \|\mathbf{x}'\|^2$, we obtain
\begin{align}
\|\mathbf{x}_{\hat{\rho}} - P\mathbf{x}'\|^2
-
\|\mathbf{x}_{\hat{\rho}} - Q\mathbf{x}'\|^2
&=
-2\langle \mathbf{x}_{\hat{\rho}}, P\mathbf{x}' - Q\mathbf{x}' \rangle \\
&=
-2\langle (P-Q)^T \mathbf{x}_{\hat{\rho}}, \mathbf{x}' \rangle.
\end{align}
Thus, equality of the two costs is equivalent to
\[
\langle (P-Q)^T \mathbf{x}_{\hat{\rho}}, \mathbf{x}' \rangle = 0.
\]
Since $\mathbf{x}_{\hat{\rho},i} \neq \mathbf{x}_{\hat{\rho},i'}$ for all $i \neq i'$, distinct permutations $P \neq Q$ satisfy $P\mathbf{x}_{\hat{\rho}} \neq Q\mathbf{x}_{\hat{\rho}}$, and thus
\[
(P-Q)^T \mathbf{x}_{\hat{\rho}} \neq 0.
\]
Therefore, for every pair $P \neq Q$, the set
\[
H_{P,Q}
:=
\left\{
\mathbf{x}' \in \R^{d \times N}
\;: \;
\|\mathbf{x}_{\hat{\rho}} - P\mathbf{x}'\|^2
=
\|\mathbf{x}_{\hat{\rho}} - Q\mathbf{x}'\|^2
\right\}
\]
is a hyperplane in $\R^{d \times N}$ and hence has Lebesgue measure zero.
As $\Perm(N)$ is finite, the union of all such tie sets also has Lebesgue measure zero. Consequently, for almost every $\mathbf{x}' \in \R^{d \times N}$, there exists a unique permutation matrix
\[
P_* = P_{\mathbf{x}',\mathbf{x}_{\hat{\rho}}}
\]
such that
\[
\|\mathbf{x}_{\hat{\rho}} - P_*\mathbf{x}'\|^2
<
\|\mathbf{x}_{\hat{\rho}} - P\mathbf{x}'\|^2
\qquad
\text{for all } P \in \Perm(N)\setminus\{P_*\}.
\]

\textbf{The permutation minimizer is locally stable:} Now fix such an $\mathbf{x}'$.
Since $\Perm(N)$ is finite, the quantity
\[
\gamma
:=
\min_{P \in \Perm(N)\setminus\{P_*\}}
\left(
\|\mathbf{x}_{\hat{\rho}} - P\mathbf{x}'\|^2
-
\|\mathbf{x}_{\hat{\rho}} - P_*\mathbf{x}'\|^2
\right)
\]
is strictly positive. 
For $P \in \Perm(N)$, define
\[
f_P(\mathbf{x}) := \|\mathbf{x} - P\mathbf{x}'\|^2.
\]
Then, for any $\mathbf{x} \in \R^{d \times N}$ and $P \in \Perm(N)$,
\begin{align}
\bigl(f_P(\mathbf{x}) - f_{P_*}(\mathbf{x})\bigr)
-
\bigl(f_P(\mathbf{x}_{\hat{\rho}}) - f_{P_*}(\mathbf{x}_{\hat{\rho}})\bigr)
=
2\langle \mathbf{x}-\mathbf{x}_{\hat{\rho}}, (P_* - P)\mathbf{x}' \rangle.
\end{align}
Hence, by Cauchy--Schwarz,
\[
\left|
\bigl(f_P(\mathbf{x}) - f_{P_*}(\mathbf{x})\bigr)
-
\bigl(f_P(\mathbf{x}_{\hat{\rho}}) - f_{P_*}(\mathbf{x}_{\hat{\rho}})\bigr)
\right|
\leq
2\|\mathbf{x}-\mathbf{x}_{\hat{\rho}}\|\,\|(P_* - P)\mathbf{x}'\|.
\]
Again using finiteness of $\Perm(N)$, we may set
\[
M := \max_{P \in \Perm(N)\setminus\{P_*\}} \|(P_* - P)\mathbf{x}'\| < \infty.
\]
Choose $R_{\mathbf{x}'} > 0$ such that $2 R_{\mathbf{x}'} M < \gamma$. 
Then for every $\mathbf{x}$ with $\|\mathbf{x}-\mathbf{x}_{\hat{\rho}}\| < R_{\mathbf{x}'}$, we have for all $P \neq P_*$ that
\begin{align}
f_P(\mathbf{x}) - f_{P_*}(\mathbf{x})
&\geq
f_P(\mathbf{x}_{\hat{\rho}}) - f_{P_*}(\mathbf{x}_{\hat{\rho}}) - 2 R_{\mathbf{x}'} M \\
&\geq
\gamma - 2 R_{\mathbf{x}'} M \\
&> 0.
\end{align}
Thus, for all $\mathbf{x}$ with $\|\mathbf{x}-\mathbf{x}_{\hat{\rho}}\| < R_{\mathbf{x}'}$,
\[
f_{P_*}(\mathbf{x}) < f_P(\mathbf{x})
\qquad
\text{for all } P \in \Perm(N)\setminus\{P_*\},
\]
so $P_*$ remains the unique minimizer of
\[
P \mapsto \|\mathbf{x} - P\mathbf{x}'\|^2.
\]
Applying \eqref{eq:discrete_wasserstein_as_permutation_problem}, we conclude that
\[
\W_2^2(\operatorname{Vec2M}(\mathbf{x}), \operatorname{Vec2M}(\mathbf{x}'))
=
\|\mathbf{x} - P_*\mathbf{x}'\|^2
=
\|\mathbf{x} - P_{\mathbf{x}',\mathbf{x}_{\hat{\rho}}} \mathbf{x}'\|^2,
\]
which proves the claim.    
\end{proof}

\newpage

\section{Ablation Studies}
\label{sec:ablation_exp}
\subsection{Ablation of Latent Source Metameasure}
\label{subsec:latent_pure_noise_ablation}
Based on MNIST generation experiment in Section~\ref{subsec:noise2mnist_64_experiment}, we investigate the importance of the latent noise metameasure. Since the barycentric latent employed in the experiment is closer to the the data distribution than raw noise, we repeat the experiment under the same conditions with our pure noise latent ($\underline{\sigma}=0.05$, $\overline{\sigma}=0.15$). The results are presented in Table~\ref{tab:mnist_mlp_local_pure_noise_std0.05-0.15_N64}. In comparison to the results based on the barycentric latent in Table~\ref{tab:mnist_barycenter_noise_results},
the generation quality is equal or worse. 
 The quality particularly declines for our lazy linear estimators.
 This is expected by our theory, since the pure noise latent is not covered by Corollary~\ref{corr:lazy_linear_justification} because it is centred around the origin.

\begin{table}[h]
\centering
    \setlength{\tabcolsep}{3.2pt}
    \renewcommand{\arraystretch}{0.9}
    \begin{tabular}{llccc|ccc|c}
    \toprule
     &  & \multicolumn{3}{c}{Chamfer-NNA $\downarrow$} & \multicolumn{3}{c}{OT-NNA $\downarrow$} & \textbf{Train} \\
    \textbf{OP} & \textbf{IP} & Euler-5 & Euler-25 & Euler-125 & Euler-5 & Euler-25 & Euler-125 &  \\
    \midrule
    $\widehat{\operatorname{OP}}_{\text{\textbf{ind}}}$ & $\widehat{\operatorname{IP}}_{\mathrm{ind}}$ & $0.84{\scriptstyle\pm0.01}$ & $0.67{\scriptstyle\pm0.02}$ & $0.65{\scriptstyle\pm0.01}$ & $0.82{\scriptstyle\pm0.01}$ & $0.62{\scriptstyle\pm0.01}$ & $0.61{\scriptstyle\pm0.02}$ & 2--3h \\
    $\widehat{\operatorname{OP}}_{\bW}$ & $\widehat{\operatorname{IP}}_{\mathrm{W}}$ & $\mathbf{0.72{\scriptstyle\pm0.02}}$ & $0.65{\scriptstyle\pm0.01}$ & $0.66{\scriptstyle\pm0.03}$ & $\mathbf{0.67{\scriptstyle\pm0.02}}$ & $\underline{0.60{\scriptstyle\pm0.01}}$ & $\underline{0.59{\scriptstyle\pm0.03}}$ & 5--6h \\
    $\widehat{\operatorname{OP}}_{\text{\textbf{SW}}}$ & $\widehat{\operatorname{IP}}_{\mathrm{SW}}$ & $\underline{0.78{\scriptstyle\pm0.02}}$ & $\underline{0.65{\scriptstyle\pm0.02}}$ & $\mathbf{0.64{\scriptstyle\pm0.01}}$ & $0.74{\scriptstyle\pm0.01}$ & $0.62{\scriptstyle\pm0.02}$ & $0.62{\scriptstyle\pm0.01}$ & 3--4h \\
    $\widehat{\operatorname{OP}}_{\text{\textbf{LLW}}}$ & $\widehat{\operatorname{IP}}_{\mathrm{LLW}}$ & $0.78{\scriptstyle\pm0.01}$ & $\mathbf{0.65{\scriptstyle\pm0.01}}$ & $\underline{0.64{\scriptstyle\pm0.01}}$ & $\underline{0.71{\scriptstyle\pm0.01}}$ & $\mathbf{0.59{\scriptstyle\pm0.02}}$ & $\mathbf{0.58{\scriptstyle\pm0.01}}$ & 3--4h \\
    \bottomrule
    \end{tabular}
\caption{MNIST generation quality and training time for pure noise.}
\label{tab:mnist_mlp_local_pure_noise_std0.05-0.15_N64}
\end{table}

\subsection{Ablation of Batch Size}
In general, minibatch OT methods are heavily influenced by the batch size \cite{mousavi2025flow_semidiscre}. While a large batch size $B$ becomes computationally infeasible, we would still expect the straightness of trajectories to improve for larger batches. 
Therefore, we repeat the experiment from Section~\ref{subsec:circular_toy_experiment} with $B=4, 8, 16$ for our $(\widehat{\operatorname{OP}}_{\bW},\,\widehat{\operatorname{IP}}_{\W})$ estimator. Looking at Figure~\ref{fig:batch_ablation}, we clearly see that increasing the batch size improves straightness.

\begin{figure}[h]
    \centering
    \begin{subfigure}[t]{0.315\textwidth}
        \centering
        \fbox{\includegraphics[width=0.92\textwidth, trim=8mm 45mm 8mm 45mm, clip]{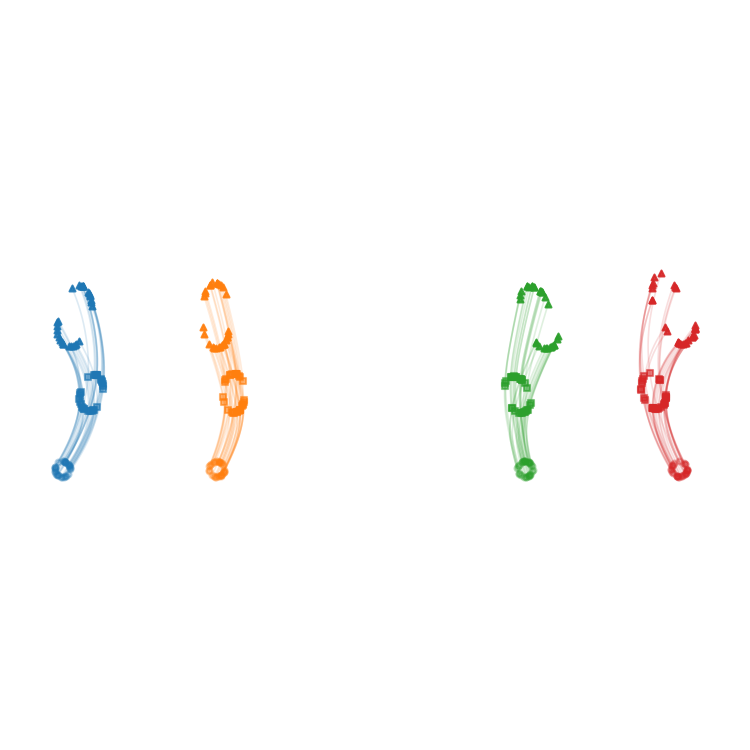}}
        \caption{$B=4$}
        \label{subfig:gmm_ot_ot_4}
    \end{subfigure}
    \hfill
    \begin{subfigure}[t]{0.315\textwidth}
        \centering
        \fbox{\includegraphics[width=0.92\textwidth, trim=8mm 45mm 8mm 45mm, clip]{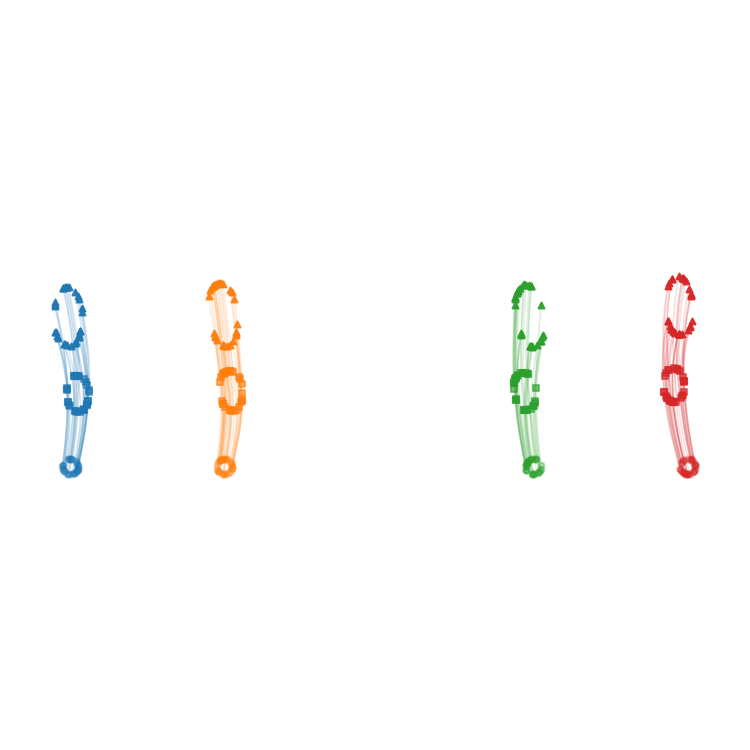}}
        \caption{$B=8$}
        \label{subfig:gmm_ot_ot_8}
    \end{subfigure}
    \hfill
    \begin{subfigure}[t]{0.315\textwidth}
        \centering
        \fbox{\includegraphics[width=0.92\textwidth, trim=8mm 45mm 8mm 45mm, clip]{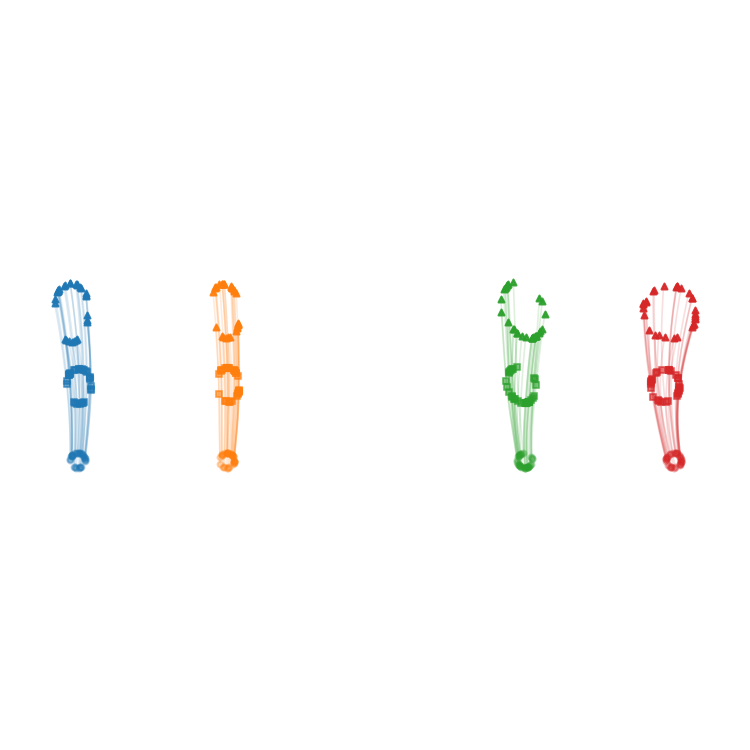}}
        \caption{$B=16$}
        \label{subfig:gmm_ot_ot_16}
    \end{subfigure}
    \caption{Learned paths for $(\widehat{\operatorname{OP}}_{\bW},\,\widehat{\operatorname{IP}}_{\W})$ and $B=4, 8, 16$. Larger batches lead to straigther paths.}
    \label{fig:batch_ablation}
\end{figure}

\subsection{Ablation of Sliced Wasserstein Solver}
\label{subsec:sliced_ablation}
The quality of any sliced Wasserstein approximation is directly linked to the number of random projections $\sharp \operatorname{Slices}$ in
\eqref{eq:sw_estimator_mc}.
In order to study its impact, we repeat the experiment from Section~\ref{subsec:circular_toy_experiment}, but now with our sliced transport plan estimators $(\widehat{\operatorname{OP}}_{\bSW},\,\widehat{\operatorname{IP}}_{\SW})$,
and we vary $\sharp \operatorname{Slices}=2, 8, 32$. 
Figure \ref{fig:slices_ablation} clearly shows that more projections lead to straigther paths.

\begin{figure}[h]
    \centering
    \begin{subfigure}[t]{0.315\textwidth}
        \centering
        \fbox{\includegraphics[width=0.92\textwidth, trim=8mm 45mm 8mm 45mm, clip]{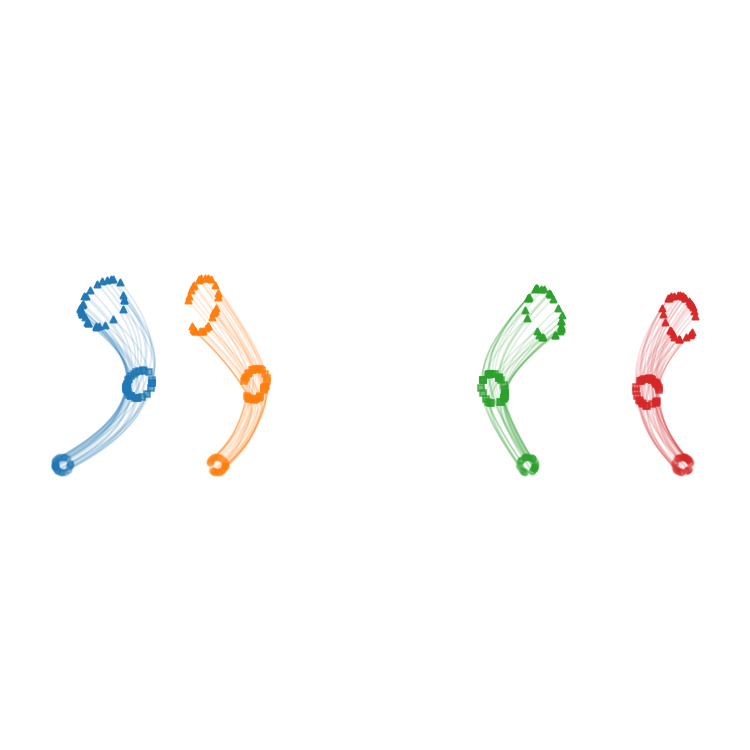}}
        \caption{$\sharp \operatorname{Slices}=2$}
        \label{subfig:gmm_sw_sw_2}
    \end{subfigure}
    \hfill
    \begin{subfigure}[t]{0.315\textwidth}
        \centering
        \fbox{\includegraphics[width=0.92\textwidth, trim=8mm 45mm 8mm 45mm, clip]{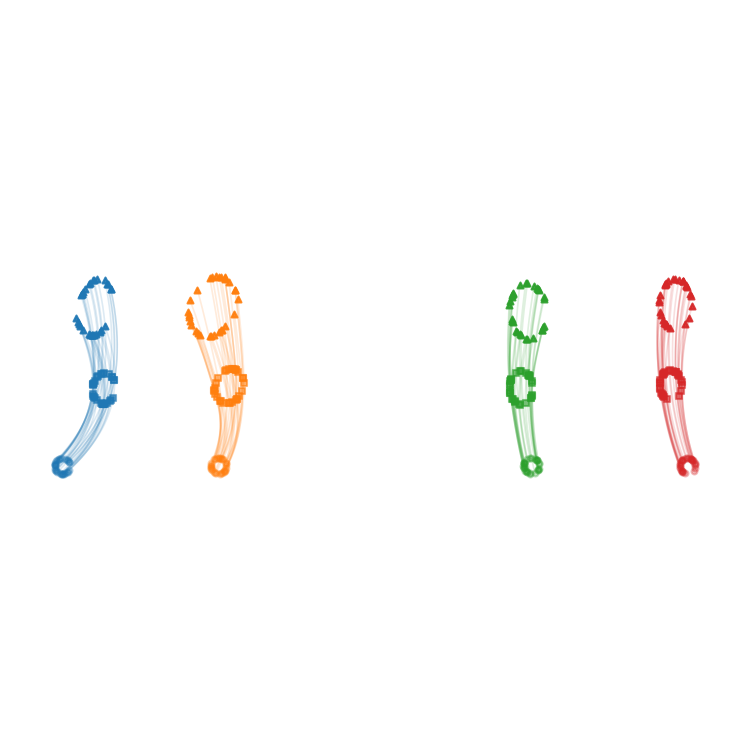}}
        \caption{$\sharp \operatorname{Slices}=8$}
        \label{subfig:gmm_sw_sw_8}
    \end{subfigure}
    \hfill
    \begin{subfigure}[t]{0.315\textwidth}
        \centering
        \fbox{\includegraphics[width=0.92\textwidth, trim=8mm 45mm 8mm 45mm, clip]{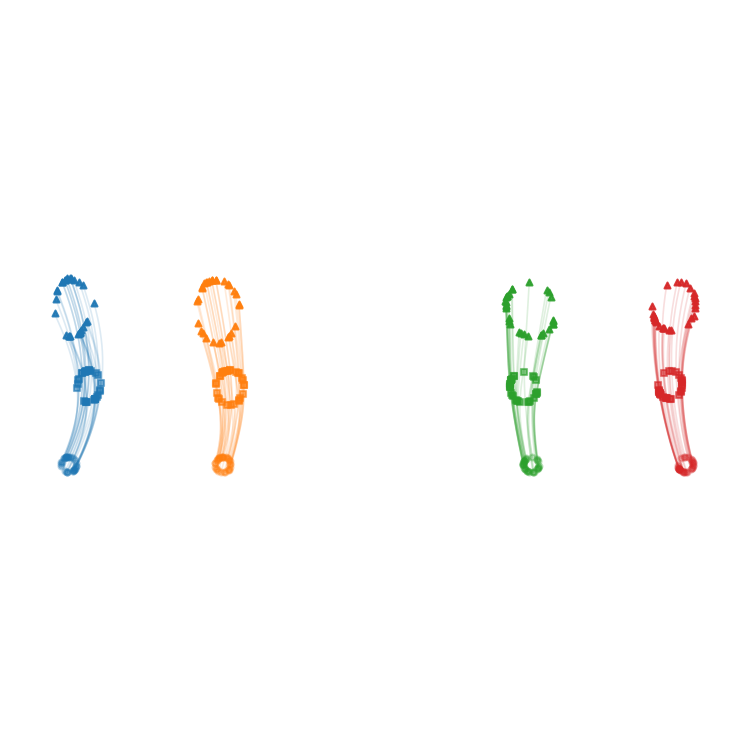}}
        \caption{$\sharp \operatorname{Slices}=32$}
        \label{subfig:gmm_sw_sw_32}
    \end{subfigure}
    \caption{Learned paths for $(\widehat{\operatorname{OP}}_{\bSW},\,\widehat{\operatorname{IP}}_{\SW})$ and $\sharp \operatorname{Slices}=2, 8, 32$. More projections lead to straigther paths.}
    \label{fig:slices_ablation}
\end{figure}

\subsection{Ablation of Wasserstein Solver}
\label{subsec:ablation_ot_solver}
Notably, we mainly relied on exact linear program solvers for our Wasserstein OT estimators due to the availability of highly efficent implementations \cite{flamary2021pot}.
Especially for large-scale problems, such exact solvers are often replaced by Sinkhorn solvers
that solve an entropically regularized OT problem \cite{cuturi2013sinkhorn,PeyreCuturi2019}.
These solvers are governed by an entropic regularization parameter $\operatorname{reg}>0$, 
where $\operatorname{reg} \to 0$ recovers an exact OT plan and $\operatorname{reg} \to \infty$ simply recovers the independent coupling between the input measures.
To investigate the impact of replacing the exact OT solver with a Sinkorn solver, 
we repeat the experiment from Section~\ref{subsec:circular_toy_experiment} with $(\widehat{\operatorname{OP}}_{\bW},\,\widehat{\operatorname{IP}}_{\W})$, but replace the inner and outer \texttt{ot.emd} solvers from the Python Optimal Transport library \cite{flamary2021pot}, 
with the default \texttt{ot.sinkhorn} solvers for $ \operatorname{reg}=0.01, 0.1, 1$.
For our computational setup, training takes around $\sim$3:50 minutes using the \texttt{ot.emd} solver and $ \operatorname{reg}=0.1, 1$ and increases to $\sim$5 minutes for $ \operatorname{reg}=0.01$. 
While we observe no runtime advantage for the Sinkhorn solver in comparison to the exact solver in this case, computational savings are expected for settings with more points, GPU acceleration or limited Sinkhorn iterations. The learned paths visualized in Figure~\ref{fig:sinkhorn_ablation} clearly show that lowering the entropy parameter $\operatorname{reg}$ reduces curvature.

\begin{figure}[h]
    \centering
    \begin{subfigure}[t]{0.315\textwidth}
        \centering
        \fbox{\includegraphics[width=0.92\textwidth, trim=8mm 45mm 8mm 45mm, clip]{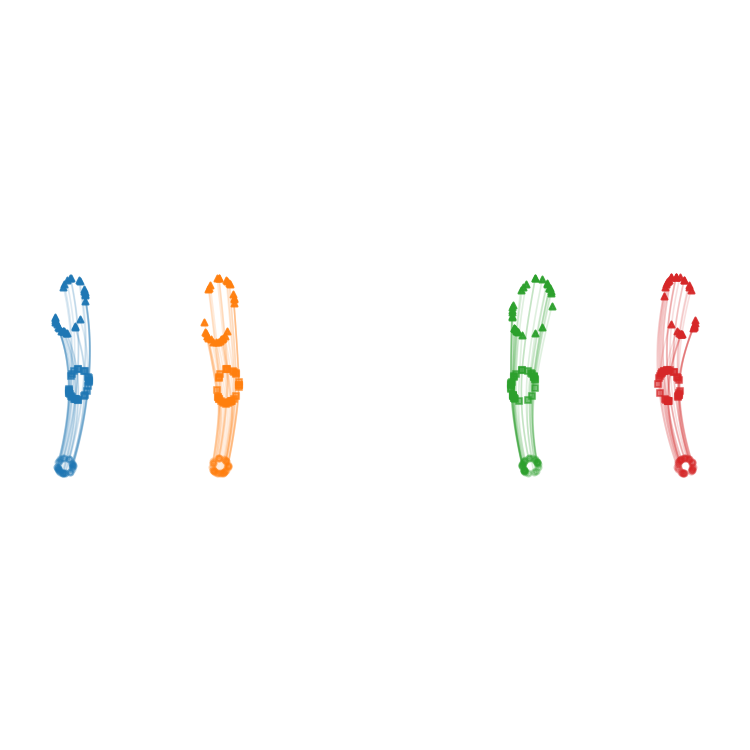}}
        \caption{$\operatorname{reg}=0.01$}
        \label{subfig:reg_.001}
    \end{subfigure}
    \hfill
    \begin{subfigure}[t]{0.315\textwidth}
        \centering
        \fbox{\includegraphics[width=0.92\textwidth, trim=8mm 45mm 8mm 45mm, clip]{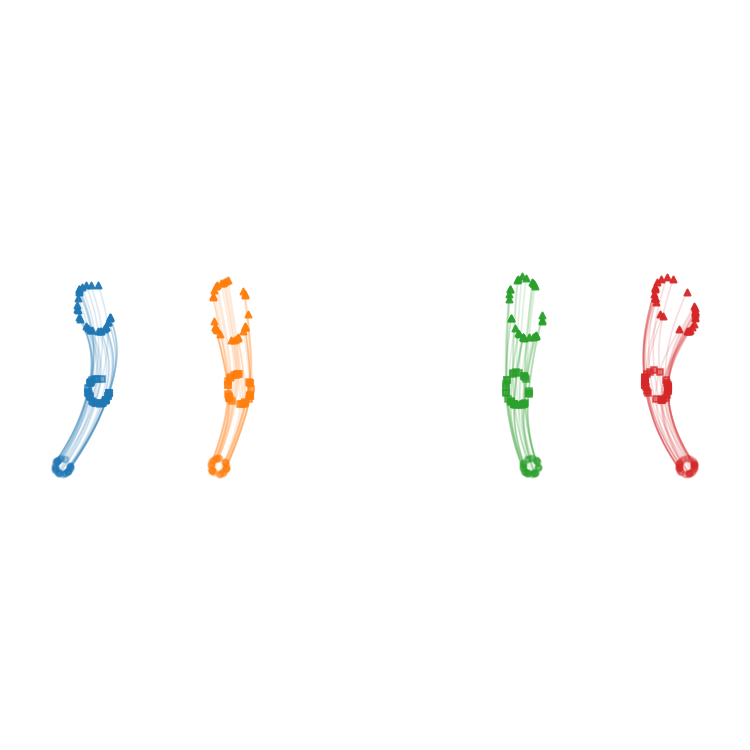}}
        \caption{$\operatorname{reg}=0.1$}
        \label{subfig:reg_.1}
    \end{subfigure}
    \hfill
    \begin{subfigure}[t]{0.315\textwidth}
        \centering
        \fbox{\includegraphics[width=0.92\textwidth, trim=8mm 45mm 8mm 45mm, clip]{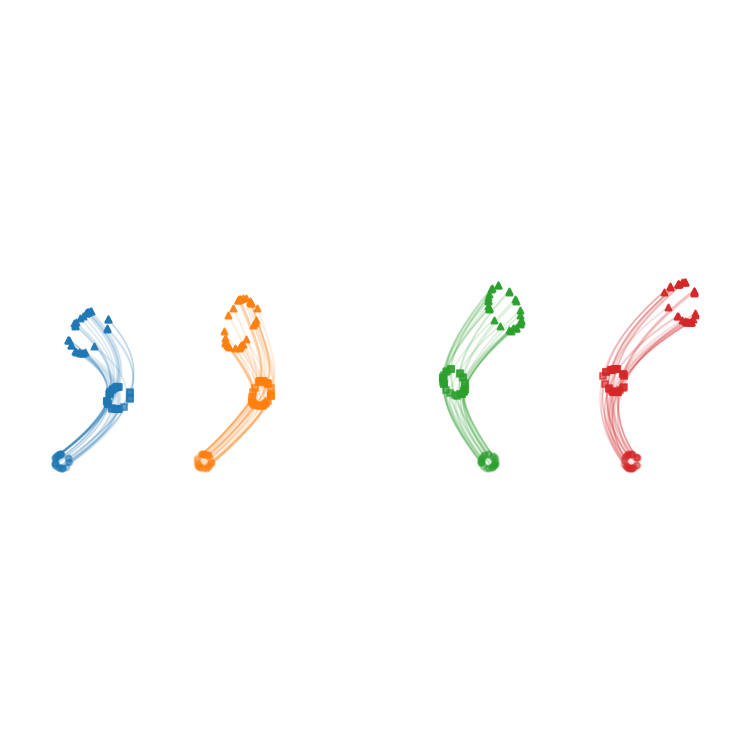}}
        \caption{$\operatorname{reg}=1$}
        \label{subfig:reg_1}
    \end{subfigure}
    \caption{Learned paths for $(\widehat{\operatorname{OP}}_{\bW},\,\widehat{\operatorname{IP}}_{\W})$ with Sinkhorn regularizer $ \operatorname{reg}=0.01, 0.1, 1$. Less regularization leads to straigher paths, but increases training time.}
    \label{fig:sinkhorn_ablation}
\end{figure}

\subsection{Ablation of Transport Solver Runtime}                                         
  \label{subsec:ablation_runtime}                   
  To quantify the computational overhead of different transport solver combinations, we measure the wall-clock time required by the transport solver per training step for varying cloud size $N$ and batch size $B$.     
  Concretely, we simulate $B$ independent point clouds of dimension $d=2$ with points drawn uniformly at random from $[0,1]^2$ and solve the necessary transport problems.
  We report the mean CPU runtime (in milliseconds) and standard deviation over 5 runs.  
  Results are shown in Table~\ref{tab:timing_ot_combined}.
  
  As expected, the WoW approximation $(\widehat{\operatorname{OP}}_{\bW},\,\widehat{\operatorname{IP}}_{\W})$    becomes prohibitively expensive even at moderate $N$,  already exceeding $75$ seconds per step at $N=1024$, $B=32$.
  The mixed coupling $(\widehat{\operatorname{OP}}_{\mathrm{ind}},\,\widehat{\operatorname{IP}}_{\W})$  used in Wasserstein flow matching reduces this cost substantially but still scales with $N$,        reaching $\sim$2.3 seconds at $N=1024$, $B=32$.           
  Similarly, $(\widehat{\operatorname{OP}}_{\bSW},\,\widehat{\operatorname{IP}}_{\SW})$ with $\sharp\operatorname{Slices}=8$ projections (as used in our main experiments)      incurs a moderate and predictable overhead that grows more slowly with $N$ and $B$.                     
  In contrast, $(\widehat{\operatorname{OP}}_{\mathrm{LLW}},\,-)$ introduces only negligible overhead, as the Euclidean cost on preordered clouds based on the barycenter requires no inner resorting.
  
  Note that the use of the lazy linear strategy requires solving OT problems once during a preprocessing step. The computational overhead of this preprocessing step is in line with the computational cost of solving $(\widehat{\operatorname{OP}}_{\mathrm{ind}},\,\widehat{\operatorname{IP}}_{\W})$ once for the whole training dataset.
  
\begin{table}[h]
\resizebox{\linewidth}{!}{%
\centering
\setlength{\tabcolsep}{4pt}
\renewcommand{\arraystretch}{0.95}
\begin{tabular}{ll | ccc | ccc | ccc}
\toprule
\textbf{OP} & \textbf{IP} & \multicolumn{3}{c}{$B=8$} & \multicolumn{3}{c}{$B=16$} & \multicolumn{3}{c}{$B=32$} \\
 & & $N=64$ & $N=256$ & $N=1024$ & $N=64$ & $N=256$ & $N=1024$ & $N=64$ & $N=256$ & $N=1024$ \\
\midrule
$\widehat{\operatorname{OP}}_{\mathrm{ind}}$ & $\widehat{\operatorname{IP}}_{\mathrm{ind}}$ & ${<}0.1$ & ${<}0.1$ & ${<}0.1$ & ${<}0.1$ & ${<}0.1$ & ${<}0.1$ & ${<}0.1$ & ${<}0.1$ & ${<}0.1$ \\
$\widehat{\operatorname{OP}}_{\mathrm{SW}}$ & $\widehat{\operatorname{IP}}_{\mathrm{SW}}$ & $3.1_{\pm0.4}$ & $10.3_{\pm0.3}$ & $46.0_{\pm0.8}$ & $11.2_{\pm0.4}$ & $38.7_{\pm0.8}$ & $165.2_{\pm3.0}$ & $39.5_{\pm0.3}$ & $142.2_{\pm0.8}$ & $594.4_{\pm1.5}$ \\
$\widehat{\operatorname{OP}}_{\mathrm{W}}$ & $\widehat{\operatorname{IP}}_{\mathrm{W}}$ & $13.7_{\pm0.5}$ & $141.7_{\pm1.5}$ & $5128.0_{\pm31.1}$ & $51.9_{\pm3.2}$ & $565.9_{\pm3.6}$ & $19244.6_{\pm91.7}$ & $196.6_{\pm1.8}$ & $2264.8_{\pm20.0}$ & $75792.9_{\pm489.4}$ \\
$\widehat{\operatorname{OP}}_{\mathrm{LLW}}$ & $-$ & ${<}0.1$ & ${<}0.1$ & ${<}0.1$ & ${<}0.1$ & ${<}0.1$ & ${<}0.1$ & ${<}0.1$ & ${<}0.1$ & $1.3_{\pm0.3}$ \\
$\widehat{\operatorname{OP}}_{\mathrm{ind}}$ & $\widehat{\operatorname{IP}}_{\mathrm{W}}$ & $1.2_{\pm0.0}$ & $15.9_{\pm0.4}$ & $586.7_{\pm13.2}$ & $2.5_{\pm0.1}$ & $34.3_{\pm1.5}$ & $1158.0_{\pm13.4}$ & $5.1_{\pm0.4}$ & $68.6_{\pm0.6}$ & $2303.2_{\pm23.5}$ \\
\bottomrule
\end{tabular}
}
\caption{Wall-clock time per training step (mean\,$\pm$\ standard deviation over 5 runs, ms) for each (OP, IP) combination, varying batch size $B$ and cloud size $N$, CPU only. LLW outer uses Euclidean distance on pre-ordered clouds with no inner resorting~($-$).}
\label{tab:timing_ot_combined}
\end{table}

\newpage
\section{Implementation Details}
\label{sec:supp_implementation_details}
 \subsection{Computational Setup}
  All experiments are conducted on a single NVIDIA GeForce RTX~5090 (32\,GB VRAM) or comparable.
  All transport plan computations are performed on the
  CPU using the Python Optimal Transport library~\cite{flamary2021pot}, while model
  forward and backward passes run on the GPU.
  Transport problems are computed fresh at every gradient step. 
  We employ exact optimal transport solvers without entropic regularization, see Section~\ref{subsec:ablation_ot_solver}.
  MNIST, USPS and ShapeNet data are all normalized to $[0, 1]^d$.
  Data loading and coupling recomputation run in the main training process.
\subsection{Practical Procedure for Sampling from Discrete Plans}
\label{sec:supp_sampling_procedure}
In practice, the Monte Carlo sampling of matched pairs from the outer and inner plans proceeds as follows. For the outer plan, we draw $B$ i.i.d.\ samples $\{(\hat{\mu}_b, \hat{\nu}_b)\}_{b=1}^B$ from $\widehat{\operatorname{OP}}_{\hat{\bmu}, \hat{\bnu}}$ via standard multinomial sampling on the $B \times B$ plan matrix $\hat{\Pi}^{\operatorname{OP}}$. For each outer pair, we similarly draw $N$ i.i.d.\ particle pairs $(x_{b, j}, x'_{b, j})_{j=1}^N$ from $\widehat{\operatorname{IP}}(\hat{\mu}_b, \hat{\nu}_b)$ via multinomial sampling on the $N \times N$ inner plan matrix $\hat{\pi}^{\operatorname{IP}}_b$, yielding the matrix-valued samples $(\mathbf{x}_b, \mathbf{x}'_b) \in \R^{d \times N} \times \R^{d \times N}$ used in the loss.
  \subsection{Sliced Wasserstein Estimator}
  \label{sec:supp_transport_plan_computation_sliced}
In practice, the integral over the sphere in 
$
\SW_2^2(\mu,\nu)
=
\int_{\mathbb{S}^{d-1}}
\W_2^2\bigl(\pr^\theta_\#\mu,\pr^\theta_\#\nu\bigr)\,\d S(\theta),
$
is approximated via Monte Carlo sampling. In particular, let $\{\theta_\ell\}_{\ell=1}^{\sharp \operatorname{Slices}} \sim S \coloneqq \mathrm{Unif}(\mathbb{S}^{d-1})$, then
\begin{align}
\label{eq:sw_estimator_mc}
\SW^2_2(\mu,\nu)
\approx
\frac{1}{\sharp \operatorname{Slices}}\sum_{\ell=1}^{\sharp \operatorname{Slices}}
\W^2_2\bigl(\pr^{\theta_\ell}_\#\mu,\pr^{\theta_\ell}_\#\nu\bigr),
\qquad \pr^{\theta_\ell}(x)=\langle \theta_\ell,x\rangle.
\end{align}
  In all experiments, the Monte Carlo estimator in~\eqref{eq:sw_estimator_mc} uses
  $\sharp\operatorname{Slices} = 8$ uniformly sampled directions
  $\theta_\ell \sim \mathrm{Unif}(\mathbb{S}^{d-1})$, drawn fresh at each coupling
  recomputation step. We refer to Section~\ref{subsec:sliced_ablation} for an ablation study.

\subsection{Sliced Transport Plan Computation}
\label{sec:supp_transport_plan_computation_sliced_ip}
For the sliced inner transport plan ${\operatorname{IP}}_{\text{SW}}(\hat{\mu}, \hat{\nu}) = \int_{\mathbb{S}^{d-1}} \operatorname{IP}_\theta(\hat{\mu}, \hat{\nu})\,\d S(\theta)$, we employ the same Monte Carlo estimator as in \eqref{eq:sw_estimator_mc}. For each direction $\theta_\ell$, we project both measures onto $\R$ and compute the 1D optimal transport plan $\operatorname{IP}_{\theta_\ell}(\hat{\mu}, \hat{\nu}) \in {\rm c}(\hat{\mu}, \hat{\nu})$ via sorting of the projected support points, which lifts to a permutation on the original $d$-dimensional particles. Averaging across directions yields
\begin{equation}
    \widehat{\operatorname{IP}}_{\text{SW}}(\hat{\mu}, \hat{\nu}) \approx \frac{1}{\sharp\operatorname{Slices}} \sum_{\ell=1}^{\sharp\operatorname{Slices}} \operatorname{IP}_{\theta_\ell}(\hat{\mu}, \hat{\nu}),
\end{equation}
with $\sharp\operatorname{Slices} = 8$ as above. In practice, most 1D plans $ \operatorname{IP}_{\theta_\ell}(\hat{\mu}, \hat{\nu})$
are permutation matrices and our sliced plan is computed by averaging permutation matrices.
We refer to~\cite{liu2025expected} for further details.

  \subsection{Lazy Linear Inner Plan}                                       \label{sec:supp_transport_plan_computation_llw_ip}                         Unlike $\widehat{\operatorname{IP}}_{\mathrm{W}}$ and $\widehat{\operatorname{IP}}_{\mathrm{SW}}$,   which may yield fractional couplings during training,  the lazy linear plan $\widehat{\operatorname{IP}}_{\mathrm{LLW}}(\hat{\mu}, \hat{\nu})$ always
  produces a permutation matrix by construction: both clouds are aligned to a shared barycenter $\hat{\rho}$.
    By construction of our barycentric metameasure, the source is trivially aligned with the barycenter. Each target point cloud $\hat{\nu}$ is aligned via a precomputed permutation $\sigma_{\hat{\nu}}$.
  Therefore, we skip the inner sampling step and simply employ the precomputed point-to-point correspondence
  $(x_{b,i},\, x'_{b,\sigma_{\hat{\nu}}(i)})$.  

  \subsection{Outer Plan Computation}
\label{sec:supp_transport_plan_computation_outer}
Given a choice of pairwise divergence $D: \ProbTwo(\R^d) \times \ProbTwo(\R^d) \to \R_{\geq 0}$ (e.g., $\W_2^2$, $\SW_2^2$, or $\operatorname{LLW}_2^2$) and a batch of source and target measures $(\hat{\mu}_i)_{i=1}^B, (\hat{\nu}_{i'})_{i'=1}^B$, we compute the outer transport plan as follows. First, we assemble the $B \times B$ cost matrix
\begin{equation}
    C \in \R^{B \times B}, \qquad C_{i, i'} \coloneqq D(\hat{\mu}_i, \hat{\nu}_{i'}).
\end{equation}
We then solve the resulting discrete optimal transport problem
\begin{equation}
    \hat{\Pi}^{\operatorname{OP}} \in \argmin_{\Pi \in \R_{\geq 0}^{B \times B}} \sum_{i, i'=1}^{B} C_{i, i'}\, \Pi_{i, i'}
    \quad \text{s.t.} \quad
    \sum_{i'} \Pi_{i, i'} = \sum_i \Pi_{i, i'} = \tfrac{1}{B},
\end{equation}
using an exact linear program solver from the Python Optimal Transport library~\cite{flamary2021pot}. The resulting transport plan
$\widehat{\operatorname{OP}}_{\hat{\bmu}, \hat{\bnu}} = \sum_{i, i'=1}^{B} \hat{\Pi}^{\operatorname{OP}}_{i, i'}\, \delta_{(\hat{\mu}_i, \hat{\nu}_{i'})}$ 
is then used to sample matched pairs $(\hat{\mu}_b, \hat{\nu}_b) \sim \widehat{\operatorname{OP}}_{\hat{\bmu}, \hat{\bnu}}$ as described in Section~\ref{subsec:gen_w_fm}. Depending on the divergence $D$, this procedure yields $\widehat{\operatorname{OP}}_{\bW}$ (using $\W_2^2$), $\widehat{\operatorname{OP}}_{\text{SW}}$ (using $\SW_2^2$), or $\widehat{\operatorname{OP}}_{\text{\textbf{LLW}}}$ (using $\operatorname{LLW}_2^2$).

  \subsection{Network Architectures}

\paragraph{Baseline Model.}                   
  The baseline model is a per-point MLP augmented with local and global set
  information as described in Section~\ref{subsec:main_text_implementation},
  i.e., each 2D point feature is concatenated with (i)~pairwise distances to all other points and (ii)~the empirical mean and upper triangle of the covariance matrix of the set. Notably, the importance of pairwise distances for point cloud recovery has been empirically demonstrated in \cite{piening2025novel,sato2020fast}. A sinusoidal time embedding of dimension $32$ is projected to the model dimension and added to the per-point tokens. 
  The per-point features are processed by a
  multi-layer MLP, followed by one self-attention block, and a final linear output
  projection to $\mathbb{R}^2$
  The architecture is shared across the two experiments that use this model, but with different capacities:                                          
  \begin{itemize}       
  \item \textbf{Circles (Section~\ref{subsec:circular_toy_experiment}):} $3$-layer MLP with hidden dimension $64$, followed by $1$ self-attention block
  with $4$ heads and model dimension $32$. Total parameters: $\approx 75\,000$.  
    \item \textbf{MNIST generation (Section~\ref{subsec:noise2mnist_64_experiment}):} 
    $4$-layer MLP with hidden dimension $256$, followed by $1$ self-attention block
  with $4$ heads and model dimension $32$. Total parameters: $\approx 1{,}179\,000$.         
    \end{itemize}

  \paragraph{Point Transformer}
  The standard transformer processes each point cloud as an unordered set of tokens.
  An input linear projection maps each point $x_i \in \mathbb{R}^d$ ($d \in \{2, 3\}$) to a token of
  dimension $128$. A sinusoidal time embedding of dimension $32$ is projected to $128$
  and broadcast-added to all tokens. The model consists of $4$ standard transformer
  blocks, each with $4$ attention heads, LayerNorm, and an MLP with expansion
  ratio~$4$. A final linear layer projects back to $\mathbb{R}^d$. No positional
  embedding is used (the model is permutation-equivariant by design).
  We employ the same architecture for ShapeNet and MNIST$\to$USPS. Since the dimensional impact is limited, this results in:
 \begin{itemize}
     \item \textbf{ShapeNet airplanes (Section~\ref{subsec:shapenet_experiment}):}
      $\approx 798\,000$ parameters.
    \item \textbf{MNIST$\to$USPS (Section~\ref{subsec:mnist2usps}):}
      $\approx 798\,000$ parameters.
  \end{itemize}

\subsection{Kernel Density Estimation for Visualization}
\label{subsec:kde_viz_supp}
To visualize 2D empirical point cloud distributions in Experiment~\ref{subsec:mnist2usps}, we estimate a continuous density
from each point cloud $\{x_i\}_{i=1}^N \subset [0,1]^2$ via Gaussian kernel density
estimation~\cite{parzen1962estimation_kde}, using the \texttt{scipy.stats.gaussian\_kde}
implementation. The bandwidth follows the $d=2$ variant of
Scott's rule~\cite{Scott2015}, scaled by $0.9$, giving $h = 0.9 \cdot N^{-1/6}$.
The density is evaluated on a $64\times 64$ grid over $[0,1]^2$. To prevent attenuation
near the boundary, the grid is extended by $15\%$ padding on each side and cropped
back to $[0,1]^2$ before rendering.     

\newpage
\section{Visualization of Additional Samples}

\begin{figure}[h]
    \centering
    \begin{subfigure}[t]{0.49\textwidth}
        \centering
        \fbox{\includegraphics[width=0.92\textwidth]{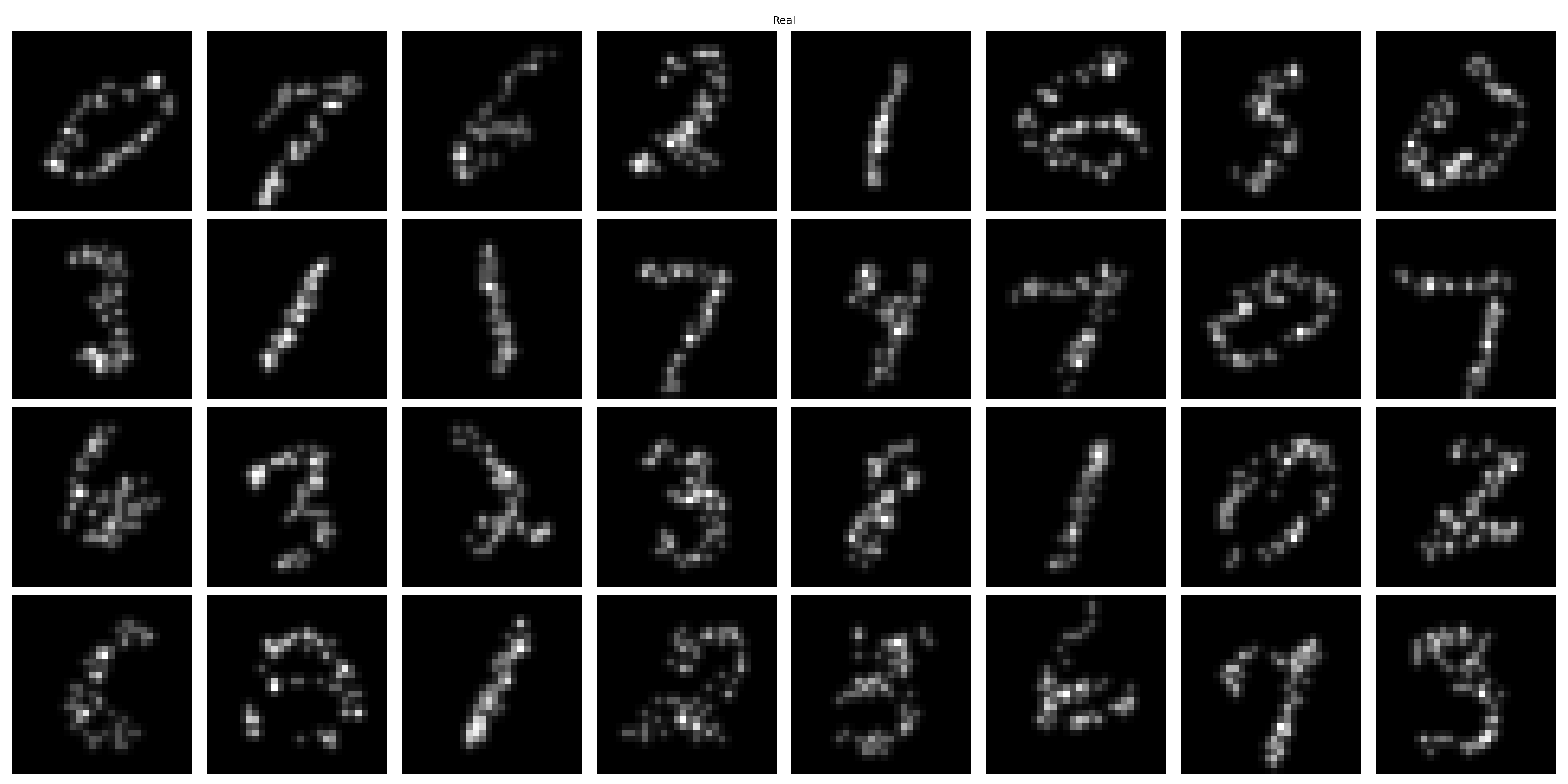}}
        \caption{True MNIST digits.}
        \label{subfig:mnis_true}
    \end{subfigure}
    \hfill
    \begin{subfigure}[t]{0.49\textwidth}
        \centering
        \fbox{\includegraphics[width=0.92\textwidth]{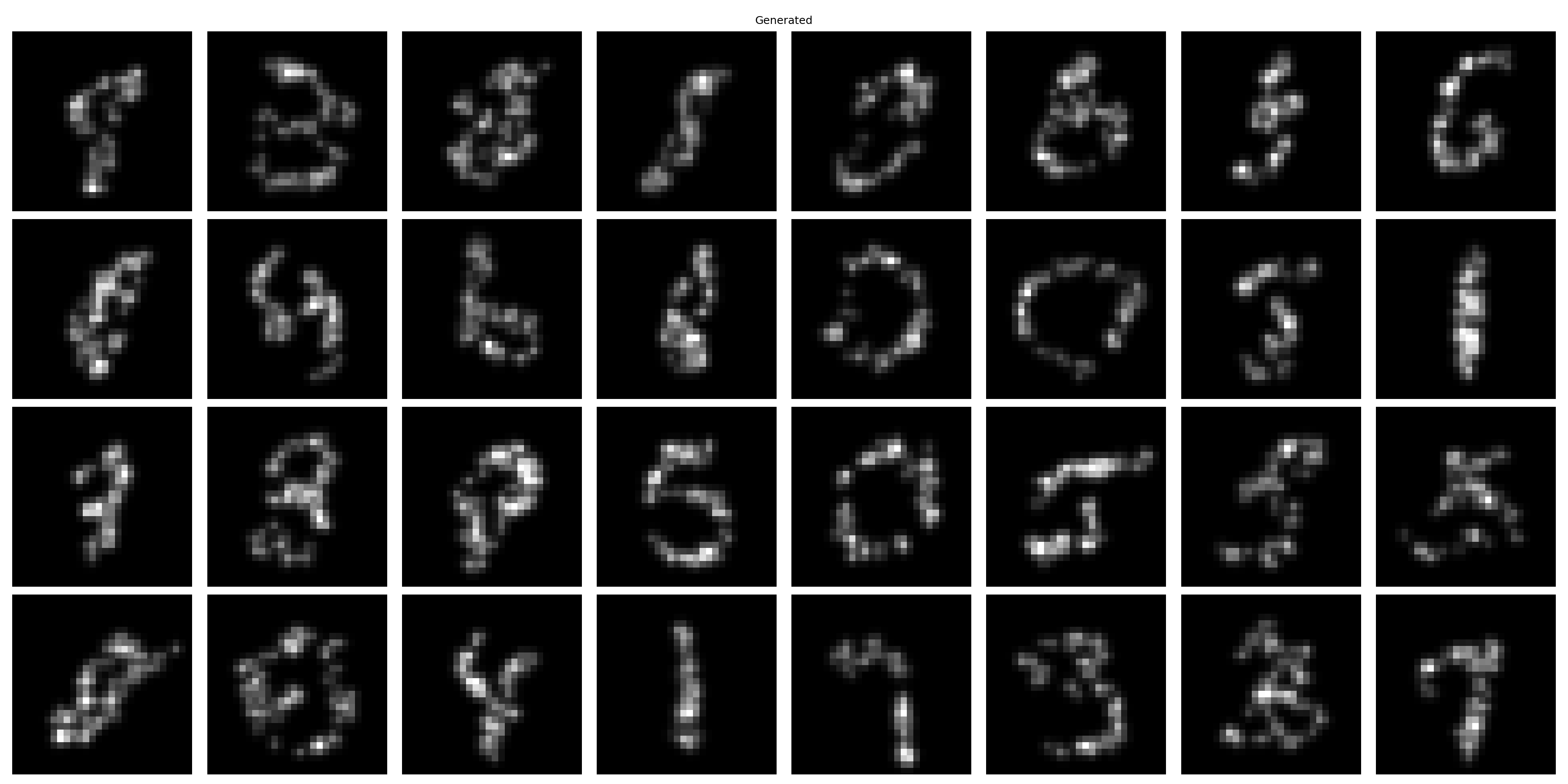}}
        \caption{Generated MNIST digits.}
        \label{subfig:mnist_gen}
    \end{subfigure}
    \caption{2D histogram visualization based on true and generated MNIST point clouds ($N=64$) based on the lazy linear flow learned via $(\widehat{\operatorname{OP}}_{\mathbf{LLW}},\,\widehat{\operatorname{IP}}_{\text{LLW}})$ for the barycentric noise$\to$MNIST generation experiment in Section~\ref{subsec:noise2mnist_64_experiment}. Pixel intensities correspond to number of aggregated point counts.}
    \label{fig:llw_mnist_supp_viz_histograms}
\end{figure}
\begin{figure}[h]
    \centering
\includegraphics[width=.2\textwidth]{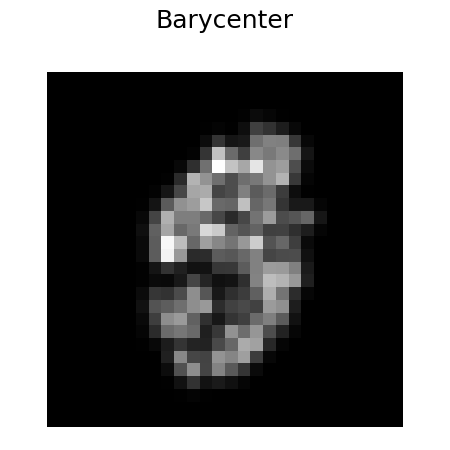}
    \caption{2D histogram visualization of the computed MNIST barycenter employed as the center of the barycentric noise for the experiment in Section~\ref{subsec:noise2mnist_64_experiment}}
    \label{fig:mnist_bary}
\end{figure}
\begin{figure}[h]
    \centering
    \includegraphics[width=0.95\linewidth]{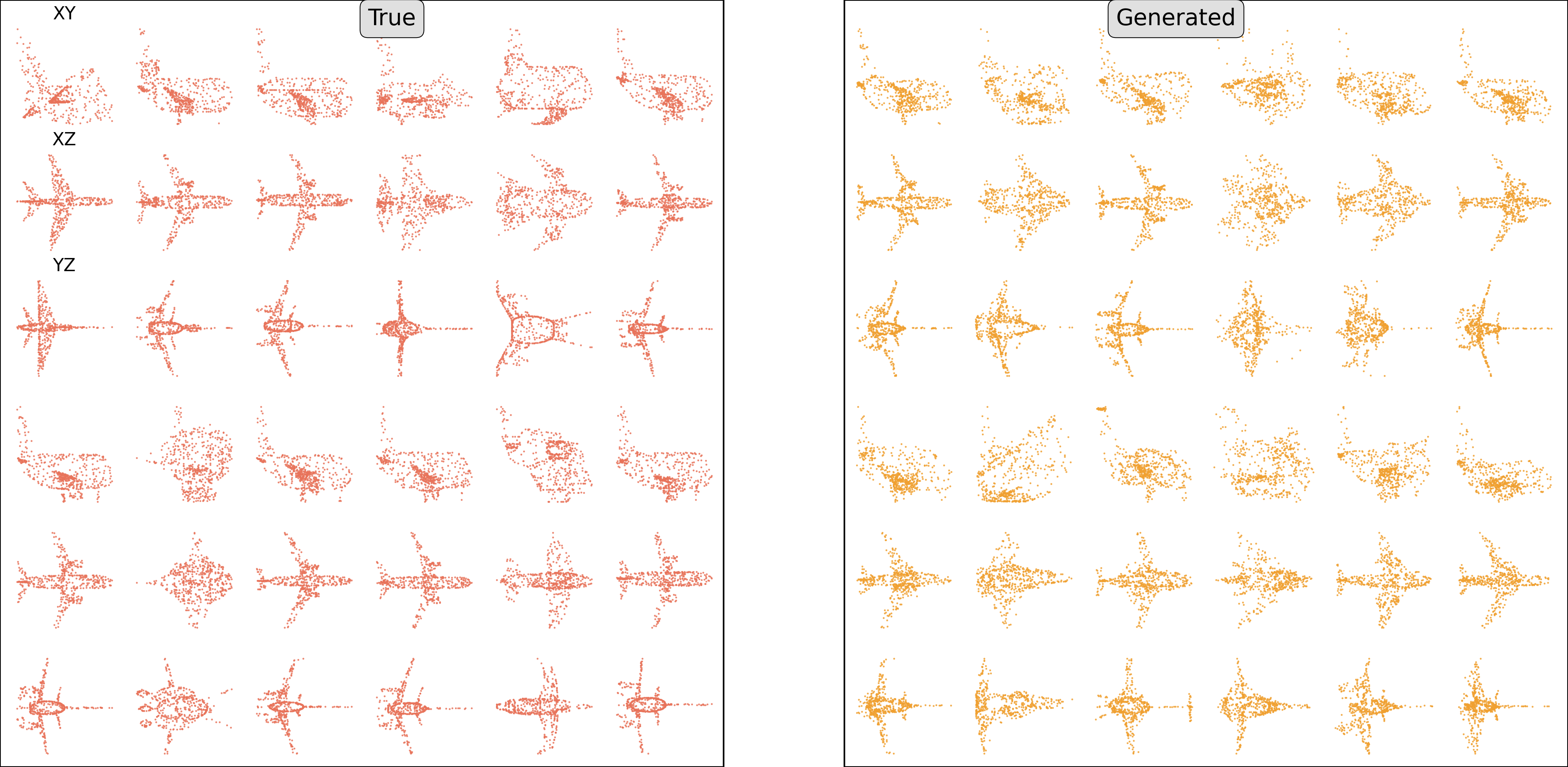}
    \caption{True and generated ShapeNet airplanes for discretizatization $N=512$ for the experiment in Section~\ref{subsec:shapenet_experiment}, where we show generated samples for the model trained with $\widehat{\operatorname{OP}}_{\bSW}$. We display the point cloud as seen from different viewpoints.}
\end{figure}
\begin{figure}[h]
    \centering
    \begin{subfigure}[t]{0.315\textwidth}
        \centering
        \fbox{\includegraphics[width=0.92\textwidth]{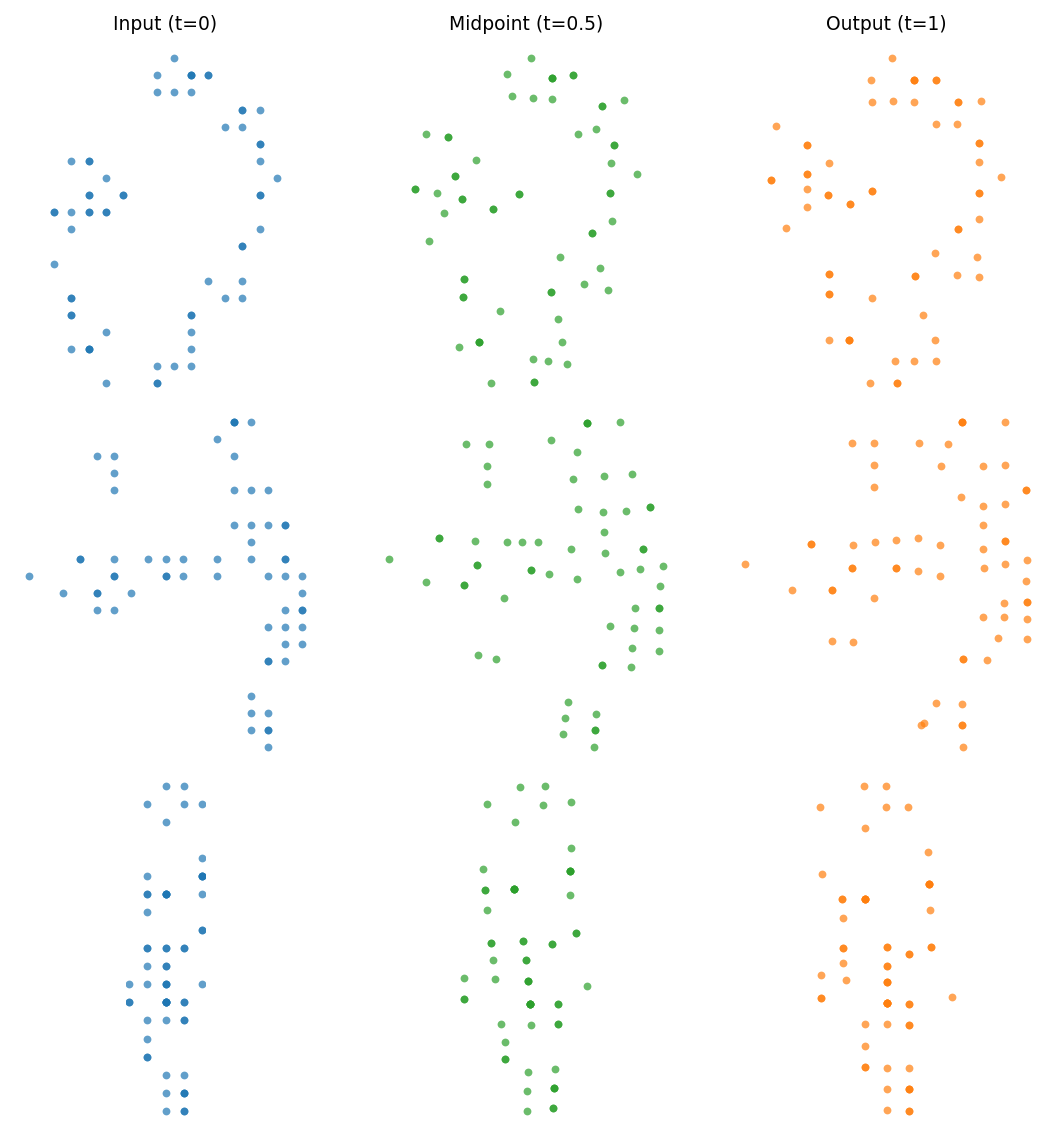}}
        \caption{$N=64$}
        \label{subfig:mnist2usps_supp_64}
    \end{subfigure}
    \hfill
    \begin{subfigure}[t]{0.315\textwidth}
        \centering
        \fbox{\includegraphics[width=0.92\textwidth]{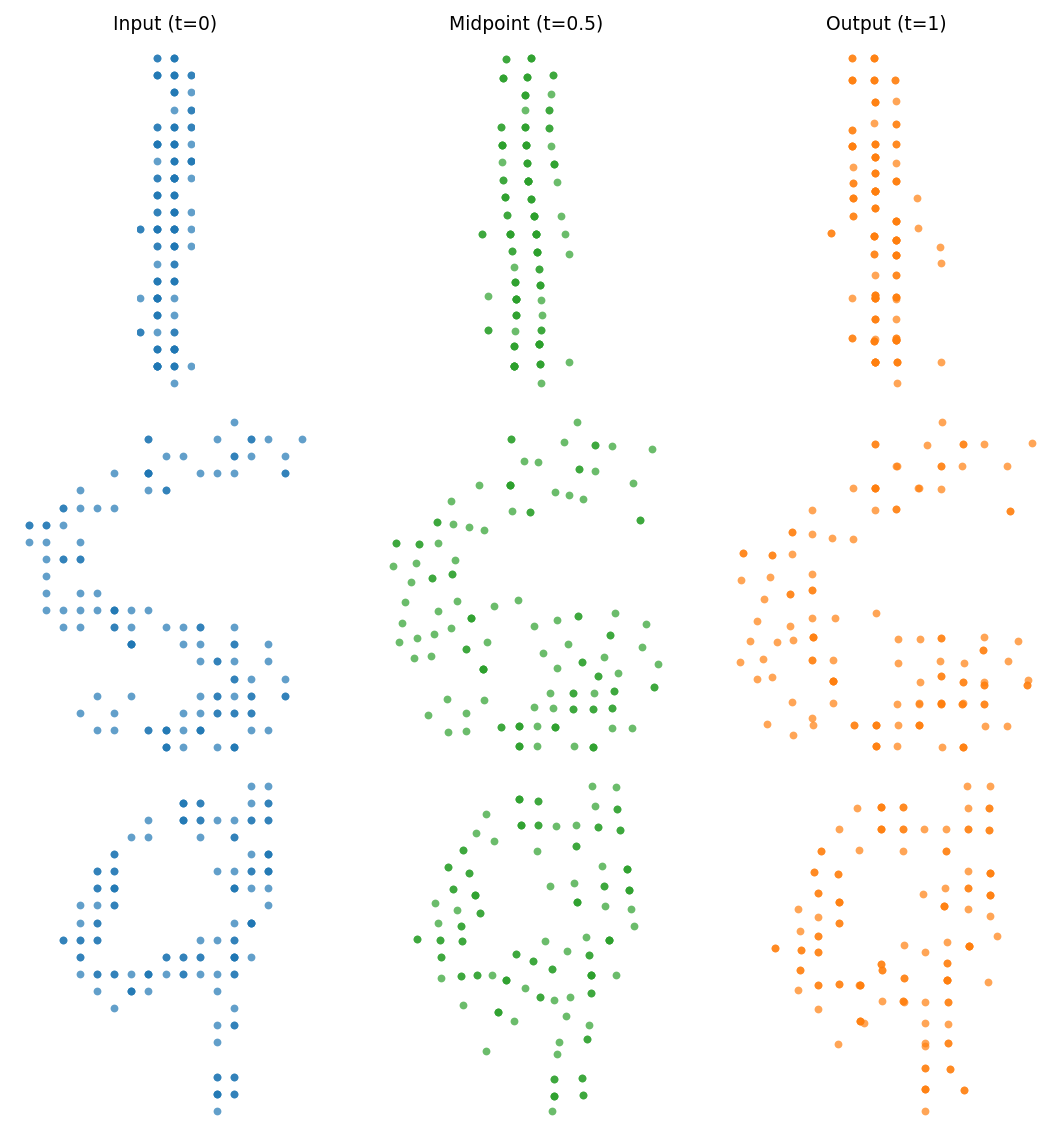}}
        \caption{$N=128$}
        \label{subfig:mnist2usps_supp_128}
    \end{subfigure}
    \hfill
    \begin{subfigure}[t]{0.315\textwidth}
        \centering
        \fbox{\includegraphics[width=0.92\textwidth]{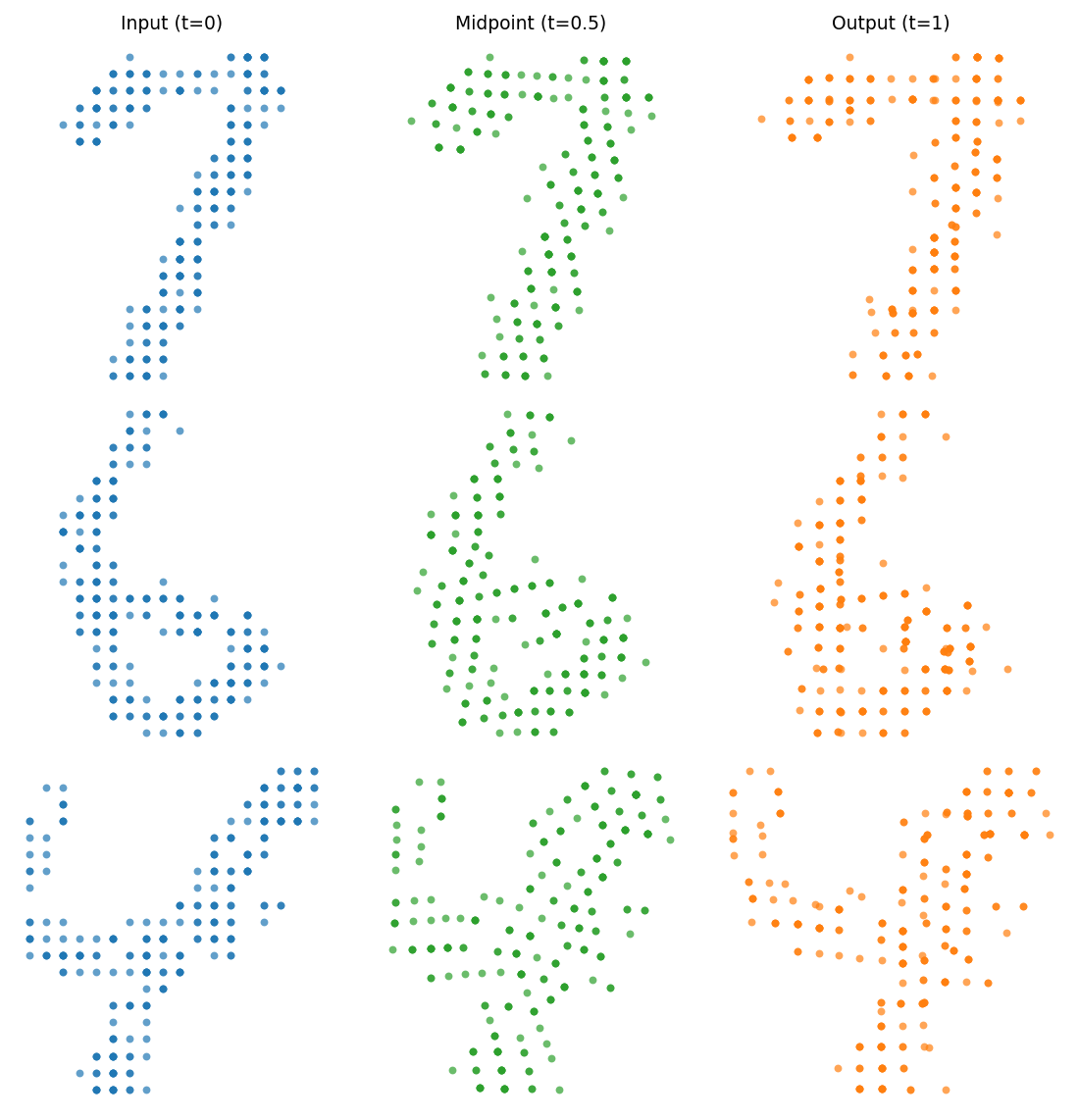}}
        \caption{$N=256$}
        \label{subfig:mnist2usps_supp_256}
    \end{subfigure}
    \caption{2D scatter plot visualization of the point cloud flow learned using the WoW setup $(\widehat{\operatorname{OP}}_{\bW},\,\widehat{\operatorname{IP}}_{\W})$ for $N=64, 128, 256$ for the MNIST$\to$USPS experiment in Section~\ref{subsec:mnist2usps}. Left is starting point ($t=0$, blue), middle is midpoint ($t=0.5$, green), and right is endpoint ($t=1$, orange).}
    \label{fig:wow_usps2mnist_supp_viz}
\end{figure}
\begin{figure}[h]
    \centering
    \begin{subfigure}[t]{0.49\textwidth}
        \centering
        \fbox{\includegraphics[width=0.92\textwidth]{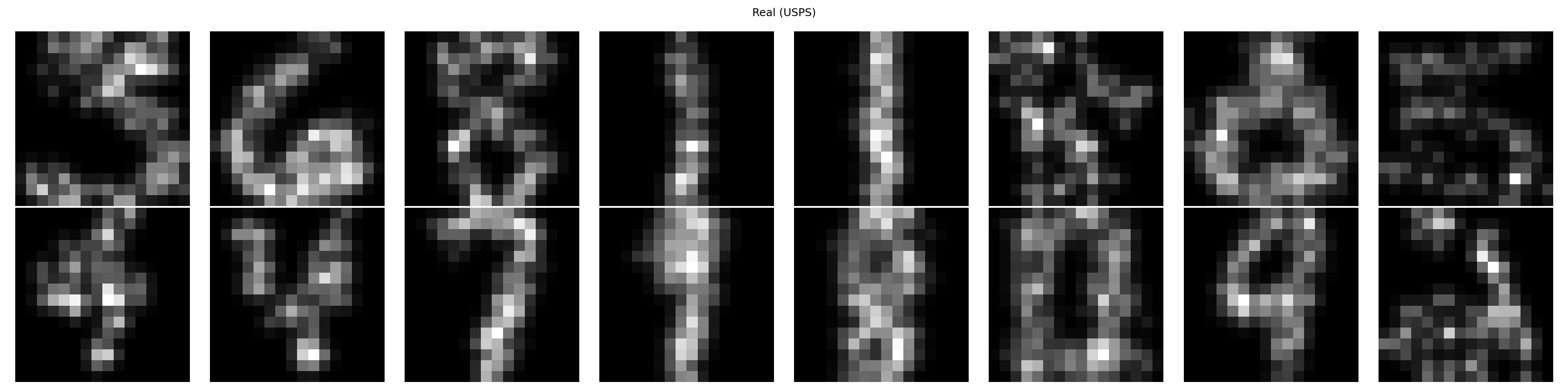}}
        \caption{True USPS digits.}
        \label{subfig:mnist2usps_true}
    \end{subfigure}
    \hfill
    \begin{subfigure}[t]{0.49\textwidth}
        \centering
        \fbox{\includegraphics[width=0.92\textwidth]{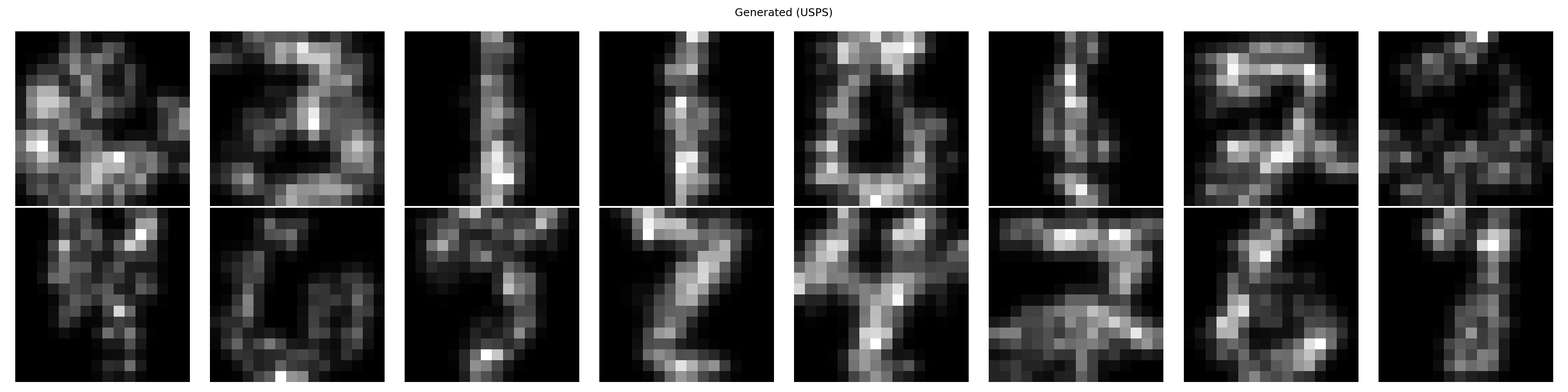}}
        \caption{Generated USPS digits.}
        \label{subfig:mnist2usps_gen}
    \end{subfigure}
    \caption{2D histogram visualization based on true and generated USPS point clouds for $N=256$ based on the WoW flow learned via $(\widehat{\operatorname{OP}}_{\bW},\,\widehat{\operatorname{IP}}_{\W})$ for the MNIST$\to$USPS experiment in Section~\ref{subsec:mnist2usps}. Pixel intensities correspond to number of aggregated point counts.}
    \label{fig:wo
    w_usps2mnist_supp_viz_histograms}
\end{figure}
\begin{figure}[h]
\includegraphics[width=\textwidth]{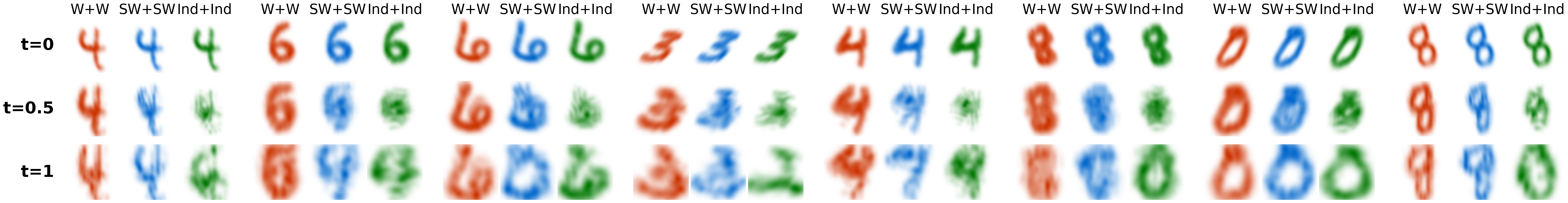}
    \caption{Same visualization as in  Figure~\ref{fig:mnist_usps_interp} with more samples for $N=4096$.
    }
    \label{fig:mnist_usps_interp_supp}
\end{figure}


\end{document}